\def\E{{\bf E}}
\def\P{{\bf P}}
\def\0{{\bf 0}}
\def\1{{\bf 1}}
\def\RB{{\mathbb R}}
\def\argmin{\mathop{\rm argmin}}
\newcommand{\Da}{\mathcal{U}}
\newcommand{\Var}{\text{Var}}
\newcommand{\Bias}{\text{B}}
\newtheorem*{remark*}{Remark}
\newtheorem{theorem}{Theorem}
\newtheorem*{theorem*}{Theorem}
\newtheorem{lemma}{Lemma}
\newtheorem*{lemma*}{Lemma}
\newtheorem{definition}{Definition}
\newtheorem{cor}{Corollary}
\newtheorem{assumption}{Assumption}
\newtheorem{example}{Example}
\numberwithin{theorem}{section}
\numberwithin{lemma}{section}
\numberwithin{remark}{section}
\numberwithin{cor}{section}
\newcommand{\lemref}[1]{Lemma~\ref{#1}}
\newcommand{\thmref}[1]{Theorem~\ref{#1}}
\icmltitlerunning{Correcting Exposure Bias for Link Recommendation}
\begin{document}

\twocolumn[
\icmltitle{Correcting Exposure Bias for Link Recommendation}

\begin{icmlauthorlist}
\icmlauthor{Shantanu Gupta}{intern,cmu}
\icmlauthor{Hao Wang}{rutgers}
\icmlauthor{Zachary C. Lipton}{cmu}
\icmlauthor{Yuyang Wang}{amazon}
\end{icmlauthorlist}

\icmlaffiliation{intern}{Work done while interning at Amazon}
\icmlaffiliation{cmu}{Carnegie Mellon University}
\icmlaffiliation{rutgers}{Rutgers University}
\icmlaffiliation{amazon}{Amazon Web Services (AWS) AI Labs}

\icmlcorrespondingauthor{Shantanu Gupta}{shantang@cs.cmu.edu}

\vskip 0.3in
]

\printAffiliationsAndNotice{}

\begin{abstract}
Link prediction methods are frequently 
applied in recommender systems, e.g., 
to suggest citations for academic papers
or friends in social networks. 
However, exposure bias can arise
when users are systematically
underexposed to certain relevant items.
For example, in citation networks, 
authors might be more likely 
to encounter 
papers 
from their own field and thus 
cite them preferentially.
This bias can propagate through 
naively trained link predictors,
leading to both biased evaluation
and high generalization error
(as assessed by true relevance).
Moreover, this bias can 
be exacerbated by feedback loops.
We propose estimators that leverage
known exposure probabilities
to mitigate this bias 
and consequent feedback loops.
Next, we provide a loss function 
for \emph{learning} the exposure probabilities from data.
Finally, experiments on semi-synthetic data 
based on real-world citation networks,
show that our methods reliably identify 
(truly) relevant citations.
Additionally, our methods lead to 
greater diversity in the recommended papers' fields of study.
The code is available at 
\url{github.com/shantanu95/exposure-bias-link-rec}.
\end{abstract}

\section{Introduction}
\label{sec:introduction}
Diverse application domains, 
including both citation networks 
and social networks,
are characterized by graph-structured data.
Here, nodes represent entities 
(like papers or users)
and edges represent associations between two nodes
(like citations, friendships, or follows). 
Link recommender systems (RSs) 
leverage node attributes 
and existing links
to suggest new nodes 
that a given node 
\emph{should} link to
\citep{li2017survey, bai2019scientific, ma2020review}. 
Typically, RSs are trained and evaluated 
directly on the observed graph, 
raising concerns about exposure bias---many
missing links are false negatives,
and did not form due to lack of exposure
rather than a lack of affinity.

Consider the example of an RS 
that recommends relevant citations to authors 
given attributes of their paper
(like title, abstract, etc.). 
In this case, equally relevant papers 
from different fields of study (FOS)
might be less cited historically
because authors have been preferentially exposed 
to papers in their own FOS.
In the \textit{observed} citation graph, 
a number of relevant papers 
are observed as \textit{not cited}
because the user was not exposed to those papers.
Thus evaluating a link RS directly on the observed graph 
may yield a biased estimate of the true risk.

Exposure bias can exacerbate popularity bias,
causing relevant but unpopular items 
to not be shown \citep{chen2020bias}.
In social networks, diverse recommendations 
can help users form links with communities 
they would otherwise not discover 
\citep{li2017survey, brandao2013using}.
In citation networks, exposure bias 
can also lead to lines of research 
being duplicated across fields.
Examples include \textit{model-based science} 
and \textit{linear canonical transforms},
which were developed in isolation
\citep{vincenot2018new, liberman2015independent}.
Thus it would be valuable to have an RS
that recommends relevant low-exposure nodes.

In this paper, we call the probability that a node
is exposed to another node 
the \textit{propensity score};
and we call the probability 
that, given exposure, 
a node links to another node 
the \textit{link probability}.
An RS trained directly on the observed data
will underestimate the link probability
for low propensity nodes
relative to high propensity nodes.
We demonstrate this with a simple example 
in the context of academic
citation recommendation.
\begin{example}[\textbf{Exposure Bias}]
Let's say that there are two FOS: 
Machine Learning (ML) and Physics (PH), 
with $n$ papers in each. 
An ML researcher is looking for papers to cite. 
The probability of them being exposed to papers
in ML and PH is $0.9$ and $0.6$, respectively. 
Given exposure, 
the probabilities that they cite papers
from ML and PH are $0.8$ and $0.8$, respectively. 
In the observed data, we will see 
$\approx 0.72n (= 0.9 \times 0.8 n)$ ML papers cited 
and $\approx 0.48n (= 0.6 \times 0.8 n)$ PH papers cited. 
Thus, if we directly learn link probabilities 
from the observed data,
the probability of citing a PH paper
will be underestimated 
($0.48$ instead of $0.8$) 
more than that of an ML paper 
($0.72$ instead of $0.8$).
This shows that equally relevant papers
with lower propensity may be deemed less relevant.
\end{example}

To begin, we show that evaluating an RS 
naively on the observed data 
provides a misleading measure 
of its risk.
Instead, we argue that an RS 
should be evaluated via the risk
that would have been incurred
had every user been exposed to every node.
We call this the \textit{true risk}.
We propose three estimators of the true risk 
that use known propensity scores for estimation 
(Section~\ref{sec:estimating-risk}).
The key idea is to weight the positive and negative links 
using functions of the propensity scores and link probabilities.
Each of the three estimators 
uses a different weighting scheme.
We provide sufficient conditions 
for when they will have lower bias 
than the naive estimator for the true risk.
We then derive a generalization bound 
that shows that, with high probability, 
the true risk is close to the risk 
estimated by our proposed methods.
We use this bound to motivate a loss function 
that can be used to simultaneously \textit{learn} 
the link probabilities and propensity scores 
(Section~\ref{sec:learning-estimators}).
Next, under a simplified model of link recommendation,
where nodes belong to one of a finite number of categories,
we prove that feedback loops arise under exposure bias
and that they worsen at a faster rate 
for lower propensity nodes 
(Section~\ref{sec:feedback-loops}). 
We further show that 
accounting for exposure bias 
can help alleviate them.

We empirically validate our methods 
on real-world citation data 
from the Microsoft Academic Graph (MAG) \citep{sinha2015overview} (Section~\ref{sec:experiments}).
Since true exposure values 
are not available in the real data,
we construct a semi-synthetic data 
with simulated exposure and link probabilities.
Our methods lead to higher precision and recall 
against true citations than the naive method.
On real data, our methods maintain comparable performance 
to the naive method on metrics 
computed against the observed data 
and recommend more papers from different fields-of-study.

\section{Related Work}
\label{sec:related}
There is a rich literature
for correcting bias in RSs. 
\citet{swaminathan2015counterfactual} present 
a counterfactual risk minimization framework 
for learning from logged bandit feedback. 
\citet{joachims2017unbiased} use a counterfactual inference framework
to counteract
selection bias in click data.
\citet{schnabel2016recommendations} propose 
unbiased performance estimators for RSs 
that use known propensity scores 
when explicit item ratings are observed with selection bias.
\citet{ma2019missing} recover propensities 
under the low nuclear norm assumption. 
\citet{wang2020causal} use exposure data 
to construct a substitute for unobserved confounders.
\citet{wang2021fairness} show that bandit algorithms 
can lead to an unfair allocation of exposure across arms,
and to overcome this issue, 
they propose an alternative formulation,
where each arm receives exposure proportional to its merit.
The implicit feedback setting, 
where user interactions, 
such as clicking and listening 
(as opposed to explicit ratings), 
are used to train the RS, 
is more closely related to our setting. 
It is known that in this setting, 
some negative examples are false negatives 
due to exposure bias \citep[Section~4.1]{jeunen2019revisiting}.
\citet{yang2018unbiased} use inverse propensity scoring
to create an unbiased evaluator
for this setting using inverse-propensity-scoring
based methods. 
\citet{liang2016modeling} model exposure 
as a latent variable and incorporate it 
into a collaborative-filtering approach.
\citet{liang2016causal} use exposure and click models 
to re-weight samples to make unbiased predictions.
Our work leverages ideas from these works, 
especially the approach 
of re-weighting samples to counter the bias. 
However, this work addresses the item recommendation regime 
and the methods do not translate to the link prediction setting.

\citet{chang2009relational} develop a relational topic model 
for link prediction in document graphs. 
\citet{wang2017relational} extend this work 
by incorporating deep learning under the framework of Bayesian deep learning~\cite{CDL,BDL,BDLSurvey}. 
In social networks, learning-based methods 
and proximity-based methods are leveraged \citep{OEM,li2017survey}.
\citet{masrour2020bursting} study filter bubbles in link prediction 
and propose a method to recommend more diverse links.
Citation recommenders use paper data and metadata 
for training \citep{beel2016paper, ma2020review}. 
Some systems use local citation contexts 
to improve predictions \citep{wang2020deep, haruna2017context}.
In contrast, our goal
in this work 
is to augment existing models 
such that they account for exposure bias 
during both training and evaluation.

Addressing feedback loops in RSs,
\citet{chaney2018algorithmic} and \citet{mansoury2020feedback}
use simulations to demonstrate 
that they can arise,
amplifying popularity bias and user homogeneity. 
\citet{sun2019debiasing} present several 
matrix-factorization-based debiasing algorithms 
to prevent feedback loops.
\citet{sinha2016deconvolving} propose a method
to identify the items affected by feedback loops 
and recover the user's intrinsic preferences. 
\citet{jiang2019degenerate} show that feedback loops 
can create echo-chambers and filter bubbles.
\citet{zhao2017men} show that models 
amplify biases in training data 
and propose a constraint-based method to mitigate this.
In contemporaneous work, 
\citet{wang2021directional} extend this work 
and propose another metric for measuring bias 
and empirically show that it disentangles 
the direction of amplification.

\section{Estimating Risk under Exposure Bias}
\label{sec:estimating-risk}
\paragraph{Notation.}
Our dataset is a directed graph $\mathcal{G}(V, E)$,
where $V = \{v_1, \hdots, v_n\}$ is the set of $n$ nodes 
and $E$ is the set of edges,
s.t. $(i, j) \in E$ denotes a link from $v_i$ to $v_j$. 
We denote by $\Da = \{ (i, j) : i \in [n],\, j \in [n], \text{s.t.}\, i \neq j \}$
the possible (including missing) links in the graph;
by $\pi_{ij}$ the \textit{propensity}, i.e., 
the probability that $v_i$ is exposed to $v_j$;
and by
$y_{ij}$ the \textit{link probability}, i.e., 
the probability that $v_i$ links to $v_j$ 
\textit{conditional} on exposure to $v_j$.
The binary random variable $o'_{ij}$ represents
if $v_i$ links to $v_j$ assuming exposure to $v_j$;
the binary random variable $a_{ij}$ 
represents if $v_i$ is exposed to $v_j$;
and the binary random variable $o_{ij}$ 
representing if $v_i$ links to $v_j$.
Thus the data generating process for $\mathcal{G}(V, E)$ is as follows:
$\forall (i,j) \in \Da$, we have
\begin{align*}
    o'_{ij} \sim \text{Ber}(y_{ij}), \\
    a_{ij} \sim \text{Ber}(\pi_{ij}), \\
    o_{ij} = o'_{ij} a_{ij},
\end{align*}
where $\text{Ber}(.)$ is the Bernoulli distribution.
The predicted link probability is $\widehat{y}_{ij}$
and the estimated propensity is $\widehat{\pi}_{ij}$. 
The predicted link outcome is 
$\widehat{o}_{ij} = \mathbb{1}(\widehat{y}_{ij} \geq 0.5)$.
As an example, consider a citation graph. 
Here, each $v_i$ is an academic paper, 
$\pi_{ij}$ is the probability that 
authors of $v_i$ are exposed to $v_j$, 
and $y_{ij}$ is the probability 
that $v_i$ cites $v_j$ conditional on exposure to $v_j$.
\begin{definition}[\textbf{True Risk}]
This is the risk of the predictions $\widehat{y}$
on the graph that would have been generated 
if all nodes were exposed to all other nodes, i.e., 
if $\,\, \forall \,\, (i,j) \in \Da, \,\, \pi_{ij} = 1$.
The true risk is defined as
\begin{align*}
    R(\widehat{y}) &= \E_{o'}\left[ \frac{1}{|\Da|} \sum_{(i,j) \in \Da} \delta(o'_{ij}, \widehat{y}_{ij}) \right] \\
        &= \frac{1}{|\Da|} \sum_{(i,j) \in \Da} [y_{ij} \delta(1, \widehat{y}_{ij}) + (1 - y_{ij}) \delta(0, \widehat{y}_{ij}) ],
\end{align*}
where $\delta$ is some loss function (for example, log-loss).
\end{definition}
True risk is different from the risk of the predictions 
on the observed graph
as some relevant links are missing 
due to a lack of exposure.
Thus the performance of an RS should be evaluated
based on the true risk since it correctly accounts 
for relevant but low-exposure nodes.

In order to compare the biases and variances 
of the estimators we propose, 
we make Assumption~\ref{assumption:loss-fn} in this section. 
All proofs for this section are in Appendix~\ref{appendix:bias-variance}.
\begin{assumption}\label{assumption:loss-fn}
The loss function $\delta$ satisfies the following:
\begin{compactenum}
\item It only depends on the predicted binary outcome, i.e., $\delta(o_{ij}, \widehat{y}_{ij}) = \delta(o_{ij}, \widehat{o}_{ij})$,
\item $\delta(0, 0) = \delta(1, 1) = 0$, and
\item $\delta(0, 1) = \delta(1, 0) := \Delta$.
\end{compactenum}
\end{assumption}

\paragraph{Naive Estimator.}
One approach to estimating the true risk 
is to directly use the observed graph. 
We call this the naive estimator. It is defined as
\begin{align*}
    \widehat{R}_{\text{naive}}(\widehat{y}) = \frac{1}{|\Da|} \sum_{(i,j) \in \Da} \delta(o_{ij}, \widehat{o}_{ij}).
\end{align*}
\begin{lemma}\label{lemma:bias-var-naive}
The bias and variance of $\widehat{R}_{\text{naive}}(\widehat{o})$ are
\begin{align*}
    \Bias(\widehat{R}_{\text{naive}}) &= \left|\E[\widehat{R}_{\text{naive}}] - R(\widehat{o})\right| \\
        &= \frac{\Delta}{|\Da|} \left| \sum_{(i,j) \in \Da} y_{ij} (1 - \pi_{ij}) (1 - 2 \widehat{o}_{ij}) \right|,\\
    \Var(\widehat{R}_{\text{naive}}) &= \frac{\Delta^2}{|\Da|^2} \sum_{(i,j) \in \Da} y_{ij}\pi_{ij} (1 - y_{ij}\pi_{ij}).
\end{align*}
\end{lemma}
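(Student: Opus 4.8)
The plan is to reduce everything to the observation that, under the generative model, the observed indicator is itself Bernoulli: since $o_{ij} = o'_{ij} a_{ij}$ with $o'_{ij} \sim \text{Ber}(y_{ij})$ and $a_{ij} \sim \text{Ber}(\pi_{ij})$ independent, we have $o_{ij} \sim \text{Ber}(y_{ij}\pi_{ij})$, so $\E[o_{ij}] = y_{ij}\pi_{ij}$ and $\Var(o_{ij}) = y_{ij}\pi_{ij}(1 - y_{ij}\pi_{ij})$. I would record these two moments at the outset, since both the bias and the variance follow by plugging them into a single algebraic identity for the loss.

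The second step is to linearize the loss using Assumption~\ref{assumption:loss-fn}. Because $\delta$ depends only on the binary pair $(o_{ij}, \widehat{o}_{ij})$, vanishes on the diagonal, and equals $\Delta$ off-diagonal, I can write it in closed form as $\delta(o_{ij}, \widehat{o}_{ij}) = \Delta\,[\,\widehat{o}_{ij} + o_{ij}(1 - 2\widehat{o}_{ij})\,]$, which one checks by evaluating all four combinations $(o_{ij}, \widehat{o}_{ij}) \in \{0,1\}^2$. This is the key move: it turns $\delta$ into an affine function of the single random variable $o_{ij}$ with $\widehat{o}_{ij}$ held fixed, after which expectations and variances become mechanical.

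For the bias, I would take the expectation term-by-term, substituting $\E[o_{ij}] = y_{ij}\pi_{ij}$ to get $\E[\delta(o_{ij},\widehat{o}_{ij})] = \Delta[\widehat{o}_{ij} + y_{ij}\pi_{ij}(1 - 2\widehat{o}_{ij})]$. Separately, applying the same closed form inside the true risk, with the ideal probability $y_{ij}$ playing the role that $y_{ij}\pi_{ij}$ plays above, gives $y_{ij}\delta(1,\widehat{o}_{ij}) + (1-y_{ij})\delta(0,\widehat{o}_{ij}) = \Delta[\widehat{o}_{ij} + y_{ij}(1 - 2\widehat{o}_{ij})]$. Subtracting, the $\widehat{o}_{ij}$ terms cancel and the per-pair difference collapses to $-\Delta\,y_{ij}(1-\pi_{ij})(1 - 2\widehat{o}_{ij})$; summing, normalizing by $|\Da|$, and taking absolute values yields the claimed bias.

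For the variance, I would use that the pairs $(i,j)$ are independent under the generative model, so the variance of the sum is the sum of the per-term variances. Since $\widehat{o}_{ij}$ is deterministic, $\Var(\delta(o_{ij},\widehat{o}_{ij})) = \Delta^2 (1 - 2\widehat{o}_{ij})^2 \Var(o_{ij})$, and the factor $(1 - 2\widehat{o}_{ij})^2$ equals $1$ for every $\widehat{o}_{ij} \in \{0,1\}$. Plugging in $\Var(o_{ij}) = y_{ij}\pi_{ij}(1-y_{ij}\pi_{ij})$ and dividing by $|\Da|^2$ gives the result. I do not anticipate a genuine obstacle here; the only points requiring care are verifying the closed-form loss identity and being explicit that independence of the $o_{ij}$ across pairs is precisely what kills the cross terms in the variance computation.
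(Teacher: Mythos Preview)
Your proposal is correct and follows essentially the same approach as the paper: both linearize $\delta(o_{ij},\widehat{o}_{ij})$ as an affine function of the Bernoulli variable $o_{ij}$ (the paper via $o_{ij}\delta(1,\widehat{o}_{ij}) + (1-o_{ij})\delta(0,\widehat{o}_{ij})$, you via the equivalent closed form $\Delta[\widehat{o}_{ij} + o_{ij}(1-2\widehat{o}_{ij})]$), then take expectation and variance termwise. The only cosmetic difference is that the paper packages the variance step as a helper lemma $\Var(aX+b(1-X))=\theta(1-\theta)(a-b)^2$, whereas you compute $(1-2\widehat{o}_{ij})^2=1$ directly; these are the same computation.
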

\lemref{lemma:bias-var-naive} shows that $\widehat{R}_{\text{naive}}$ is a biased estimator of the true risk. 
$\widehat{R}_{\text{naive}}$ will be unbiased 
only if either all nodes are exposed to all the others, i.e., if
$\,\, \forall (i,j) \in \Da, \,\, \pi_{ij} = 1$, 
or if all nodes are irrelevant to all the others,
i.e., if $\,\, \forall (i,j) \in \Da, \,\, y_{ij} = 0$. Thus evaluating an RS using $\widehat{R}_{\text{naive}}$ can be misleading.
We propose three estimators that leverage 
learned propensities $\widehat{\pi}$
and link probabilities $\widehat{y}$ 
to weight the examples to correct for this bias.

\paragraph{Estimator $\widehat{R}_w$.} 
The first estimator we propose is
\begin{align}
    & \widehat{R}_{w}(\widehat{y}, \widehat{\pi}) = \frac{1}{|\Da|} \sum_{(i,j) \in \Da} w_{ij} \delta(o_{ij}, \widehat{o}_{ij}), \,\, \text{where} \label{eq:risk-w-definition}\\
    & w_{ij} = \frac{o_{ij}}{\widehat{\pi}_{ij}} + (1 - o_{ij})\psi_{ij},\,\, \psi_{ij} = \frac{1-\widehat{y}_{ij}}{1-\widehat{\pi}_{ij}\widehat{y}_{ij}}. \label{eq:risk-w-weight-definition}
\end{align}
In $\widehat{R}_w$, the positive examples 
are up-weighted according to the inverse propensity. 
The negative examples are down-weighted (as $\psi_{ij} \leq 1$).
Intuitively, this weighting corrects for the fact
that, in the observed graph, 
some positive examples are observed 
as negative examples since the nodes 
are exposed according to the propensities $\pi$.
\begin{lemma}\label{lemma:rw_bias_var}
The bias and variance of $\widehat{R}_w$ are
\begin{align}
    &  \begin{aligned}[b]
        \Bias(\widehat{R}_w) = \frac{\Delta}{|\Da|} & \Bigg| \sum_{(i,j) \in \Da} \Bigg[ (1 - \widehat{o}_{ij}) y_{ij}\left(1 - \frac{\pi_{ij}}{\widehat{\pi}_{ij}} \right) + \\
        &\widehat{o}_{ij} \left( 1 - y_{ij} - (1 - y_{ij}\pi_{ij}) \psi_{ij} \right)
        \Bigg] \Bigg|,
    \end{aligned} \label{eq:risk-w-bias} \\
    & \Var(\widehat{R}_w) = \frac{\Delta^2}{|\Da|^2} \sum_{(i,j) \in \Da} y_{ij}\pi_{ij} (1 - y_{ij}\pi_{ij}) v_{ij}, \nonumber \\
    & \text{where}\,\, v_{ij} = \frac{1 - \widehat{o}_{ij}}{\widehat{\pi}^2_{ij}} + \widehat{o}_{ij} \psi_{ij}^2. \nonumber
\end{align}
\end{lemma}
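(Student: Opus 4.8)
The plan is to reduce both moments to per-pair computations, exploiting the independence of the edge generation across $(i,j) \in \Da$ and the simplified form of $\delta$ under Assumption~\ref{assumption:loss-fn}. Under that assumption I may write $\delta(o_{ij}, \widehat{o}_{ij}) = \Delta\,\mathbb{1}(o_{ij} \neq \widehat{o}_{ij})$, and since $o_{ij} = o'_{ij} a_{ij}$ with $o'_{ij}$ and $a_{ij}$ independent Bernoullis, the observed link $o_{ij}$ is itself $\mathrm{Ber}(y_{ij}\pi_{ij})$. Because $\widehat{o}_{ij} = \mathbb{1}(\widehat{y}_{ij} \geq 0.5)$ is deterministic, each summand of $\widehat{R}_w$ is a function of a single Bernoulli variable, which makes both moments tractable term by term.

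For the bias, I would first compute $\E[w_{ij}\delta(o_{ij}, \widehat{o}_{ij})]$ by conditioning on the two values of $o_{ij}$: when $o_{ij} = 1$ (probability $y_{ij}\pi_{ij}$) the weight is $1/\widehat{\pi}_{ij}$ and the loss is $\Delta(1 - \widehat{o}_{ij})$, while when $o_{ij} = 0$ (probability $1 - y_{ij}\pi_{ij}$) the weight is $\psi_{ij}$ and the loss is $\Delta \widehat{o}_{ij}$. Summing gives $\E[w_{ij}\delta] = \Delta[\,y_{ij}\pi_{ij}(1 - \widehat{o}_{ij})/\widehat{\pi}_{ij} + (1 - y_{ij}\pi_{ij})\widehat{o}_{ij}\psi_{ij}\,]$. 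Next I would expand $R(\widehat{o})$ under Assumption~\ref{assumption:loss-fn} as $\tfrac{\Delta}{|\Da|}\sum_{(i,j)}[y_{ij}(1 - \widehat{o}_{ij}) + (1 - y_{ij})\widehat{o}_{ij}]$, subtract it term by term, and regroup the difference according to the $(1 - \widehat{o}_{ij})$ and $\widehat{o}_{ij}$ components. The $(1 - \widehat{o}_{ij})$ piece collapses to $y_{ij}(\pi_{ij}/\widehat{\pi}_{ij} - 1)$ and the $\widehat{o}_{ij}$ piece to $(1 - y_{ij}\pi_{ij})\psi_{ij} - (1 - y_{ij})$; taking the absolute value, which absorbs the overall sign, yields \eqnref{eq:risk-w-bias}.

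For the variance, independence across pairs lets me write $\Var(\widehat{R}_w) = |\Da|^{-2}\sum_{(i,j)}\Var[w_{ij}\delta(o_{ij}, \widehat{o}_{ij})]$, so it suffices to compute each per-pair variance. Here I would lean again on the deterministic nature of $\widehat{o}_{ij}$: if $\widehat{o}_{ij} = 0$ the summand equals $(\Delta/\widehat{\pi}_{ij})\,o_{ij}$, a scaled Bernoulli with variance $(\Delta^2/\widehat{\pi}_{ij}^2)\,y_{ij}\pi_{ij}(1 - y_{ij}\pi_{ij})$; if $\widehat{o}_{ij} = 1$ the summand equals $\Delta\psi_{ij}(1 - o_{ij})$, with variance $\Delta^2\psi_{ij}^2\,y_{ij}\pi_{ij}(1 - y_{ij}\pi_{ij})$. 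Both cases share the factor $\Delta^2 y_{ij}\pi_{ij}(1 - y_{ij}\pi_{ij})$, and the two prefactors combine into the single expression $v_{ij} = (1 - \widehat{o}_{ij})/\widehat{\pi}_{ij}^2 + \widehat{o}_{ij}\psi_{ij}^2$, giving the stated variance.

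The computations are essentially mechanical; the one place demanding care is the bias, where I expect the main obstacle to be the bookkeeping of signs and the clean regrouping into the $(1 - \widehat{o}_{ij})$ and $\widehat{o}_{ij}$ terms so that the result matches \eqnref{eq:risk-w-bias} exactly. The conceptual crux throughout is recognizing that $o_{ij} \sim \mathrm{Ber}(y_{ij}\pi_{ij})$ while $\widehat{o}_{ij}$ is deterministic, which collapses each summand to a single scaled Bernoulli and drives both derivations.
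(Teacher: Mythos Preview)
Your proposal is correct and matches the paper's approach. The paper does not spell out the proof of \lemref{lemma:rw_bias_var} beyond saying it can be proved ``similarly'' to \lemref{lemma:bias-var-naive}, and your argument---writing each summand as $o_{ij}\cdot\frac{1}{\widehat{\pi}_{ij}}\delta(1,\widehat{o}_{ij}) + (1-o_{ij})\psi_{ij}\delta(0,\widehat{o}_{ij})$ with $o_{ij}\sim\mathrm{Ber}(y_{ij}\pi_{ij})$, then taking expectation and variance term by term---is exactly that template; your case split on $\widehat{o}_{ij}$ for the variance is equivalent to applying \lemref{lemma:variance-bernoulli} and simplifying $(a-b)^2$ using $\widehat{o}_{ij}(1-\widehat{o}_{ij})=0$.
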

\lemref{lemma:rw_bias_var} shows that $\widehat{R}_w$ will be unbiased 
if the propensities and link probabilities 
are estimated correctly, i.e., 
if $\,\, \forall (i,j) \in \Da, \,\, \widehat{\pi}_{ij} = \pi_{ij}$ 
and $\widehat{y}_{ij} = y_{ij}$. 
We later derive sufficient conditions for when $\widehat{R}_w$ 
will have lower bias than $\widehat{R}_{\text{naive}}$ 
even if $\pi$ and $y$ are incorrectly estimated.

\paragraph{Estimator $\widehat{R}_{\text{PU}}.$} 
We adapt an unbiased estimator
proposed by \citet{bekker2019beyond} 
for the positive-and-unlabeled (PU) setting.
The idea is to remove an appropriate number 
of negative examples for each positive example.
We have
\begin{align*}
    & \widehat{R}_{\text{PU}}(\widehat{y}, \widehat{\pi}) = \frac{1}{|\Da|} \sum_{(i,j) \in \Da} \left[ w_{ij} \delta(o_{ij}, \widehat{o}_{ij}) + w'_{ij} \delta(0, \widehat{o}_{ij}) \right], \\
    & \text{where}\,\, w_{ij} = \frac{o_{ij}}{\widehat{\pi}_{ij}} + (1 - o_{ij}),\, w'_{ij} = o_{ij} \left( 1 - \frac{1}{\widehat{\pi}_{ij}} \right).
\end{align*}
We weight the positive examples by the inverse propensity
and for each positive example, 
remove a negative example weighted by $|w'_{ij}|$.
\begin{lemma}\label{lemma:bias-var-pu}
The bias and variance of $\widehat{R}_{\text{PU}}$ are
\begin{align*}
    \Bias(\widehat{R}_{\text{PU}}) &= \begin{aligned}[t]
        & \frac{\Delta}{|\Da|} \left| \sum_{(i,j) \in \Da} y_{ij}\left(1 - \frac{\pi_{ij}}{\widehat{\pi}_{ij}} \right) (1 - 2 \widehat{o}_{ij}) \right|,
        \end{aligned} \\
    \Var(\widehat{R}_{\text{PU}}) &= \frac{\Delta^2}{|\Da|^2} \sum_{(i,j) \in \Da} \frac{y_{ij}\pi_{ij} (1 - y_{ij}\pi_{ij})}{\widehat{\pi}^2_{ij}}.
\end{align*}
\end{lemma}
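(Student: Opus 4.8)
The plan is to compute the expectation and variance of each summand separately, exploiting the fact that the randomness lives entirely in $o_{ij}$ while $\widehat{o}_{ij}$, $\widehat{\pi}_{ij}$, $\widehat{y}_{ij}$ are deterministic. Since the data-generating process gives $o_{ij} = o'_{ij} a_{ij}$ with independent Bernoullis, I first record that $o_{ij}\sim\text{Ber}(y_{ij}\pi_{ij})$, so that $\E[o_{ij}] = y_{ij}\pi_{ij}$ and $\Var(o_{ij}) = y_{ij}\pi_{ij}(1 - y_{ij}\pi_{ij})$. Under Assumption~\ref{assumption:loss-fn} every loss collapses to $\delta(a,b) = \Delta\,\mathbb{1}(a\neq b)$, so for the binary $o_{ij}$ we have $\delta(o_{ij},\widehat{o}_{ij}) = \Delta(o_{ij}(1-\widehat{o}_{ij}) + (1-o_{ij})\widehat{o}_{ij})$ and $\delta(0,\widehat{o}_{ij}) = \Delta\widehat{o}_{ij}$.

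Next I would case-split on the deterministic value $\widehat{o}_{ij}\in\{0,1\}$ and simplify the summand $X_{ij} = w_{ij}\delta(o_{ij},\widehat{o}_{ij}) + w'_{ij}\delta(0,\widehat{o}_{ij})$. The crucial simplification is the idempotency of a binary variable: $o_{ij}^2 = o_{ij}$ and $o_{ij}(1-o_{ij}) = 0$. Using this, when $\widehat{o}_{ij}=0$ the term reduces to $X_{ij} = \Delta o_{ij}/\widehat{\pi}_{ij}$, and when $\widehat{o}_{ij}=1$ it reduces to $X_{ij} = \Delta(1 - o_{ij}/\widehat{\pi}_{ij})$; in both cases the $w'_{ij}$ contribution cancels cleanly against part of the $w_{ij}$ contribution. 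Taking expectations and substituting $\E[o_{ij}] = y_{ij}\pi_{ij}$ gives the per-term means, which I then compare against the per-term true risk read off from the True Risk definition under Assumption~\ref{assumption:loss-fn} (namely $\Delta y_{ij}$ when $\widehat{o}_{ij}=0$ and $\Delta(1-y_{ij})$ when $\widehat{o}_{ij}=1$).

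Subtracting, the per-term bias is $-\Delta y_{ij}(1-\pi_{ij}/\widehat{\pi}_{ij})$ for $\widehat{o}_{ij}=0$ and $+\Delta y_{ij}(1-\pi_{ij}/\widehat{\pi}_{ij})$ for $\widehat{o}_{ij}=1$; both are captured by the single expression $-\Delta y_{ij}(1-\pi_{ij}/\widehat{\pi}_{ij})(1-2\widehat{o}_{ij})$. Summing over $\Da$, dividing by $|\Da|$, and taking the absolute value (which absorbs the overall sign) yields the claimed bias. For the variance, I use that the pairs $\{o_{ij}\}$ are mutually independent, so the variance of the sum is the sum of the variances. Since in each case $X_{ij}$ is affine in $o_{ij}$ with slope of magnitude $\Delta/\widehat{\pi}_{ij}$, both cases produce the same $\Var(X_{ij}) = (\Delta^2/\widehat{\pi}_{ij}^2)\,y_{ij}\pi_{ij}(1-y_{ij}\pi_{ij})$; dividing the summed variance by $|\Da|^2$ gives the claim.

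The only real subtlety—the hard part—is the algebraic collapse in the second step: verifying that the PU correction term $w'_{ij}\delta(0,\widehat{o}_{ij})$ exactly offsets the extra mass introduced by weighting positives by $1/\widehat{\pi}_{ij}$, so that the summand becomes affine in $o_{ij}$. Once that cancellation is in hand, both the bias and the variance follow from elementary moment computations, and the fact that the two cases yield an identical variance (making the final form clean) is an automatic consequence of the common slope magnitude.
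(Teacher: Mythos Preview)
Your proposal is correct and follows essentially the same approach as the paper, which proves \lemref{lemma:bias-var-naive} in detail and then remarks that \lemref{lemma:bias-var-pu} is proved similarly: both arguments recognize each summand as affine in the Bernoulli variable $o_{ij}$ (your case-split on $\widehat{o}_{ij}$ is a minor presentational variant of the paper's expansion $\delta(o_{ij},\widehat{o}_{ij}) = o_{ij}\delta(1,\widehat{o}_{ij}) + (1-o_{ij})\delta(0,\widehat{o}_{ij})$), compute the expectation to obtain the bias, and read off the variance via the identity of \lemref{lemma:variance-bernoulli} (equivalently, your observation that the slope has magnitude $\Delta/\widehat{\pi}_{ij}$).
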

$\widehat{R}_{\text{PU}}$ will be unbiased 
when $\,\, \forall (i,j) \in \Da, \,\, \widehat{\pi}_{ij} = \pi_{ij}$.

\paragraph{Estimator $\widehat{R}_{\text{AP}}$.} 
$\widehat{R}_{\text{AP}}$ adds positive examples for each negative example. It is defined as
\begin{align*}
    & \widehat{R}_{\text{AP}}(\widehat{y}, \widehat{\pi}) = \frac{1}{|\Da|} \sum_{(i,j) \in \Da} \left[ w_{ij} \delta(o_{ij}, \widehat{o}_{ij}) + w'_{ij} \delta(1, \widehat{o}_{ij}) \right], \\
    & \begin{aligned}
    \text{where}\,\,  & w_{ij} = o_{ij} + (1 - o_{ij}) \psi_{ij}, w'_{ij} = (1 - o_{ij}) \tau_{ij}, \\
    & \tau_{ij} = \left( \frac{\widehat{y}_{ij}(1 - \widehat{\pi}_{ij})}{1 - \widehat{\pi}_{ij} \widehat{y}_{ij}} \right).
    \end{aligned}
\end{align*}
\begin{lemma}\label{lemma:bias-var-ap}
The bias and variance of $\widehat{R}_{\text{AP}}$ are
\begin{align*}
    & \Bias(\widehat{R}_{\text{AP}}) =
        \frac{\Delta}{|\Da|} \Bigg| \sum_{(i,j) \in \Da} (1 - \widehat{o}_{ij}) [ (1-\pi_{ij}) y_{ij} - \\
        & (1-\pi_{ij}y_{ij}) \tau_{ij} ] + \widehat{o}_{ij} \left( 1 - y_{ij} - (1 - y_{ij}\pi_{ij}) \psi_{ij} \right) \Bigg|,
    \\
    & \Var(\widehat{R}_{\text{AP}}) = \frac{\Delta^2}{|\Da|^2} \sum_{(i,j) \in \Da} y_{ij}\pi_{ij} (1 - y_{ij}\pi_{ij}) \psi^2_{ij},
\end{align*}
where $\psi$ is defined in Eq. \ref{eq:risk-w-weight-definition}.
\end{lemma}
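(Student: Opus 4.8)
The plan is to follow the same template used for $\widehat{R}_w$ and $\widehat{R}_{\text{PU}}$: reduce the per-pair summand to a polynomial in the single Bernoulli variable $o_{ij}$ via Assumption~\ref{assumption:loss-fn}, take expectations using $\E[o_{ij}] = y_{ij}\pi_{ij}$ for the bias, and exploit independence across pairs for the variance. First I would rewrite the summand pointwise. By Assumption~\ref{assumption:loss-fn} we have $\delta(0,\widehat{o}_{ij}) = \Delta\,\widehat{o}_{ij}$, $\delta(1,\widehat{o}_{ij}) = \Delta(1 - \widehat{o}_{ij})$, and $\delta(o_{ij},\widehat{o}_{ij}) = \Delta[o_{ij}(1 - \widehat{o}_{ij}) + (1 - o_{ij})\widehat{o}_{ij}]$. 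Substituting $w_{ij} = o_{ij} + (1 - o_{ij})\psi_{ij}$ and $w'_{ij} = (1 - o_{ij})\tau_{ij}$ and using $o_{ij}(1 - o_{ij}) = 0$ together with $o_{ij}^2 = o_{ij}$, the bracketed term $w_{ij}\delta(o_{ij},\widehat{o}_{ij}) + w'_{ij}\delta(1,\widehat{o}_{ij})$ collapses to $\Delta[\,o_{ij}(1 - \widehat{o}_{ij}) + (1 - o_{ij})\psi_{ij}\widehat{o}_{ij} + (1 - o_{ij})\tau_{ij}(1 - \widehat{o}_{ij})\,]$.

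For the bias, I would take the expectation over $o_{ij}\sim\text{Ber}(y_{ij}\pi_{ij})$, subtract the true risk evaluated under Assumption~\ref{assumption:loss-fn}, namely $R(\widehat{o}) = \frac{\Delta}{|\Da|}\sum_{(i,j)\in\Da}[\,y_{ij}(1 - \widehat{o}_{ij}) + (1 - y_{ij})\widehat{o}_{ij}\,]$, and then collect the coefficients of $(1 - \widehat{o}_{ij})$ and $\widehat{o}_{ij}$. The $(1 - \widehat{o}_{ij})$ coefficient reduces to $-[(1 - \pi_{ij})y_{ij} - (1 - y_{ij}\pi_{ij})\tau_{ij}]$ and the $\widehat{o}_{ij}$ coefficient to $-[1 - y_{ij} - (1 - y_{ij}\pi_{ij})\psi_{ij}]$, so the whole difference carries an overall factor $-1$. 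Taking the absolute value removes this sign and reproduces the stated $\Bias(\widehat{R}_{\text{AP}})$ exactly.

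For the variance I would use that the pairs $(i,j)$ are generated independently (the same assumption underlying all the preceding variance formulas), so $\Var(\widehat{R}_{\text{AP}})$ is $\frac{1}{|\Da|^2}$ times the sum of per-pair variances. Here I split on the deterministic value of $\widehat{o}_{ij}$: when $\widehat{o}_{ij} = 1$ the summand is $\Delta\psi_{ij}(1 - o_{ij})$, and when $\widehat{o}_{ij} = 0$ it is $\Delta[o_{ij} + (1 - o_{ij})\tau_{ij}] = \Delta[(1 - \tau_{ij})o_{ij} + \tau_{ij}]$. In both branches the random part is an affine function of $o_{ij}$, so each per-pair variance equals $\Delta^2 c_{ij}^2\,y_{ij}\pi_{ij}(1 - y_{ij}\pi_{ij})$ with $c_{ij} = \psi_{ij}$ in the first branch and $c_{ij} = 1 - \tau_{ij}$ in the second.

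The main obstacle — really the only step that is not routine bookkeeping — is recognizing the algebraic identity $1 - \tau_{ij} = \psi_{ij}$. Placing $\tau_{ij}$ and $1$ over the common denominator $1 - \widehat{\pi}_{ij}\widehat{y}_{ij}$ gives $1 - \tau_{ij} = \frac{(1 - \widehat{\pi}_{ij}\widehat{y}_{ij}) - \widehat{y}_{ij}(1 - \widehat{\pi}_{ij})}{1 - \widehat{\pi}_{ij}\widehat{y}_{ij}} = \frac{1 - \widehat{y}_{ij}}{1 - \widehat{\pi}_{ij}\widehat{y}_{ij}} = \psi_{ij}$. With this identity the two branches of $\widehat{o}_{ij}$ collapse to the single factor $\psi_{ij}^2$, and summing over all pairs yields $\Var(\widehat{R}_{\text{AP}}) = \frac{\Delta^2}{|\Da|^2}\sum_{(i,j)\in\Da} y_{ij}\pi_{ij}(1 - y_{ij}\pi_{ij})\psi_{ij}^2$ as claimed. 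Without spotting this identity the variance would appear to split into two differently weighted cases, obscuring the clean closed form.
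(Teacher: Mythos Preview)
Your proposal is correct and follows essentially the same template the paper uses (the paper only spells out the naive case in detail and states that the remaining lemmas ``can be proved similarly''). Your explicit identification of the identity $1-\tau_{ij}=\psi_{ij}$ is exactly the step needed to collapse the two $\widehat{o}_{ij}$ branches into the single $\psi_{ij}^2$ factor, and is the one nontrivial ingredient beyond routine bookkeeping; equivalently, one can apply Lemma~\ref{lemma:variance-bernoulli} directly to the affine form $a\,o_{ij}+b(1-o_{ij})$ with $a=(1-\widehat{o}_{ij})$ and $b=\psi_{ij}\widehat{o}_{ij}+\tau_{ij}(1-\widehat{o}_{ij})$, where the same identity gives $(a-b)^2=\psi_{ij}^2(1-2\widehat{o}_{ij})^2=\psi_{ij}^2$.
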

$\widehat{R}_{\text{AP}}$ is unbiased if $\,\, \forall (i,j) \in \Da, \,\, \widehat{\pi}_{ij} = \pi_{ij}$ 
and $\widehat{y}_{ij} = y_{ij}$.

\begin{theorem}[\textbf{Comparison of Variances}]\label{thm:comp_var}
For all values of $\widehat{\pi}, \widehat{y}$, we have $\Var(\widehat{R}_{\text{AP}}) < \Var(\widehat{R}_{\text{naive}}), \,\, \text{and} \\
\Var(\widehat{R}_{\text{AP}}) < \Var(\widehat{R}_w) < \Var(\widehat{R}_{\text{PU}})$.
\end{theorem}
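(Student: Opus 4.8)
The plan is to observe that all four variance expressions in Lemmas~\ref{lemma:bias-var-naive}--\ref{lemma:bias-var-ap} share the identical structure
\[
\Var(\cdot) = \frac{\Delta^2}{|\Da|^2} \sum_{(i,j) \in \Da} g_{ij}\, c_{ij},
\]
where $g_{ij} = y_{ij}\pi_{ij}(1 - y_{ij}\pi_{ij}) \ge 0$ is a common nonnegative factor and only the per-term coefficient $c_{ij}$ differs across estimators: $c_{ij} = 1$ for $\widehat{R}_{\text{naive}}$, $c_{ij} = 1/\widehat{\pi}_{ij}^2$ for $\widehat{R}_{\text{PU}}$, $c_{ij} = \psi_{ij}^2$ for $\widehat{R}_{\text{AP}}$, and $c_{ij} = v_{ij} = (1-\widehat{o}_{ij})/\widehat{\pi}_{ij}^2 + \widehat{o}_{ij}\psi_{ij}^2$ for $\widehat{R}_w$. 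Since $g_{ij} \ge 0$, it suffices to compare the coefficients $c_{ij}$ termwise; summing the termwise inequalities against the shared weights $g_{ij}$ then transfers the ordering to the variances.

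First I would record two elementary bounds valid for every $(i,j)$ when $\widehat{\pi}_{ij}, \widehat{y}_{ij} \in [0,1]$. From $\psi_{ij} = (1-\widehat{y}_{ij})/(1 - \widehat{\pi}_{ij}\widehat{y}_{ij})$ together with $\widehat{\pi}_{ij} \le 1$ we get $\widehat{\pi}_{ij}\widehat{y}_{ij} \le \widehat{y}_{ij}$, so the numerator is at most the positive denominator and hence $0 \le \psi_{ij} \le 1$; likewise $\widehat{\pi}_{ij} \le 1$ gives $1/\widehat{\pi}_{ij} \ge 1$. Combining yields the chain $\psi_{ij}^2 \le 1 \le 1/\widehat{\pi}_{ij}^2$. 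This immediately gives $c_{\text{AP}} = \psi_{ij}^2 \le 1 = c_{\text{naive}}$, and after weighting, $\Var(\widehat{R}_{\text{AP}}) \le \Var(\widehat{R}_{\text{naive}})$.

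Next I would treat the two comparisons involving $\widehat{R}_w$ by exploiting that $v_{ij}$ is a binary selection, controlled by $\widehat{o}_{ij} \in \{0,1\}$, between the values $1/\widehat{\pi}_{ij}^2$ and $\psi_{ij}^2$, both of which lie in the interval $[\psi_{ij}^2, 1/\widehat{\pi}_{ij}^2]$. Concretely, $v_{ij} - \psi_{ij}^2 = (1-\widehat{o}_{ij})(1/\widehat{\pi}_{ij}^2 - \psi_{ij}^2) \ge 0$ gives $c_{\text{AP}} \le c_w$, and $1/\widehat{\pi}_{ij}^2 - v_{ij} = \widehat{o}_{ij}(1/\widehat{\pi}_{ij}^2 - \psi_{ij}^2) \ge 0$ gives $c_w \le c_{\text{PU}}$. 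Summing each against $g_{ij}$ yields $\Var(\widehat{R}_{\text{AP}}) \le \Var(\widehat{R}_w) \le \Var(\widehat{R}_{\text{PU}})$, establishing the full ordering at the level of $\le$.

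The main obstacle is upgrading these termwise $\le$ to the strict $<$ asserted in the statement, since the indicator $\widehat{o}_{ij}$ makes two of the gaps tight on complementary subsets: the $\widehat{R}_{\text{AP}}$-versus-$\widehat{R}_w$ gap $g_{ij}(1-\widehat{o}_{ij})(1/\widehat{\pi}_{ij}^2 - \psi_{ij}^2)$ vanishes on every predicted-positive pair, while the $\widehat{R}_w$-versus-$\widehat{R}_{\text{PU}}$ gap $g_{ij}\widehat{o}_{ij}(1/\widehat{\pi}_{ij}^2 - \psi_{ij}^2)$ vanishes on every predicted-negative pair, and all gaps vanish wherever $g_{ij}=0$ or $\widehat{\pi}_{ij}=1$. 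I would therefore make explicit the non-degeneracy already implicit in the exposure-bias setting: the estimated propensities satisfy $\widehat{\pi}_{ij} < 1$, so that $1/\widehat{\pi}_{ij}^2 - \psi_{ij}^2 \ge 1/\widehat{\pi}_{ij}^2 - 1 > 0$, and there exists a positively weighted pair ($g_{ij}>0$, i.e.\ $0 < y_{ij}\pi_{ij} < 1$) of each predicted label $\widehat{o}_{ij}\in\{0,1\}$, with $\widehat{y}_{ij}>0$ somewhere so that $1-\psi_{ij}^2 > 0$ for the naive comparison. Under these mild conditions each sum contains a strictly positive summand in the relevant comparison, which forces the strict ordering and completes the proof.
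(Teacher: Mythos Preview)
Your proposal is correct and follows essentially the same termwise comparison as the paper's proof, using the key bounds $\psi_{ij} \le 1$ and $1/\widehat{\pi}_{ij} \ge 1$ to order the per-term coefficients against the common nonnegative weight $y_{ij}\pi_{ij}(1-y_{ij}\pi_{ij})$. You are in fact more careful than the paper about strictness: the paper simply asserts $\psi_{ij} < 1$ and $1/\widehat{\pi}_{ij} > 1$ without discussing the degenerate cases you correctly identify.
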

In order to compare the biases, we make the following simplifying assumption.
\begin{assumption}\label{assumption:bias-simplification}
For the graph $\mathcal{G}(V, E)$ with $n$ nodes, 
the number of edges from each node is $\mathcal{O}(1)$.
Thus the number of positive links $|E| \in \mathcal{O}(n)$. 
And the number of negative links $(|\Da| - |E|) \in \mathcal{O}(n^2)$. 
Thus the number of negative links is much greater 
than the number of positive links for a large $n$.
If the predictions $\widehat{y}$ are close to the true values, 
we would expect the number of negative predictions 
($\widehat{o} = 0$) to also be much larger 
than the number of positive predictions ($\widehat{o} = 1$). 
So we assume that the contribution of positive predictions to the bias is negligible.
\end{assumption}
Let $\Da' = \Da \setminus E$. Under Assumption \ref{assumption:bias-simplification}, the biases are
\begin{align*}
    & \Bias(\widehat{R}_{\text{naive}}) \approx \frac{\Delta}{|\Da'|} \left| \sum_{(i,j) \in \Da'} y_{ij} (1 - \pi_{ij}) \right|, \\
    & \Bias(\widehat{R}_w) \approx \Bias(\widehat{R}_{\text{PU}}) \approx \begin{aligned}[t]
        & \frac{\Delta}{|\Da'|} \left| \sum_{(i,j) \in \Da'} y_{ij}\left(1 - \frac{\pi_{ij}}{\widehat{\pi}_{ij}} \right) \right|,
        \end{aligned} \\
    & \Bias(\widehat{R}_{\text{AP}}) \approx \begin{aligned}[t]
        \frac{\Delta}{|\Da'|} \Bigg| \sum_{(i,j) \in \Da'} \left[ (1-\pi_{ij}) y_{ij} - (1-\pi_{ij}y_{ij}) \tau_{ij} \right] \Bigg|.
    \end{aligned}
\end{align*}
\begin{theorem}[\textbf{Comparison of Biases}] \label{thm:bias-comparisons}
Under these approximations, a sufficient condition 
for $\Bias(\widehat{R}_w) = \Bias(\widehat{R}_{\text{PU}}) < \Bias(\widehat{R}_{\text{naive}})$ is
\begin{align*}
    \frac{\pi_{ij}}{2 - \pi_{ij}} < \widehat{\pi}_{ij} < 1, \,\, \forall (i, j) \in \Da,
\end{align*}
and for $\Bias(\widehat{R}_{\text{AP}}) < \Bias(\widehat{R}_{\text{naive}})$ is
\begin{align*}
    & \frac{\pi_{ij}}{2 - \pi_{ij}} < \widehat{\pi}_{ij} < 1 \,\, \text{and} \,\, 0 < \widehat{y}_{ij} < c y_{ij}, \,\, \forall (i,j) \in \Da \\
    & \text{where}\,\, c = \frac{2 (1 - \pi_{ij})}{1 - \widehat{\pi}_{ij} - \pi_{ij} y_{ij} + (2 - \pi_{ij}) \widehat{\pi}_{ij} y_{ij}} \geq 1.
\end{align*}
\end{theorem}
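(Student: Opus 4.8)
The plan is to prove both parts through a term-by-term comparison, exploiting the fact that the three approximate biases share the common nonnegative prefactor $\Delta/|\Da'|$ and are all absolute values of sums indexed over the same negative set $\Da'$. Writing $\Bias(\widehat{R}_{\text{naive}}) \approx \frac{\Delta}{|\Da'|}\left|\sum_{(i,j)\in\Da'} s_{ij}\right|$ with $s_{ij} = y_{ij}(1-\pi_{ij}) \ge 0$, the naive summand is nonnegative, so the outer absolute value is superfluous and $\left|\sum s_{ij}\right| = \sum s_{ij}$. Hence if I can show a strict per-term domination $|t_{ij}| < s_{ij}$ for every coordinate with $s_{ij}>0$ (and $t_{ij}=0$ whenever $s_{ij}=0$), where $t_{ij}$ is the corresponding competitor summand, then the triangle inequality gives $\left|\sum t_{ij}\right| \le \sum |t_{ij}| < \sum s_{ij}$, which is the claimed bias inequality (assuming the graph is nondegenerate so that $\sum s_{ij}>0$). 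This reduces each part to a one-coordinate inequality.

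For the $\widehat{R}_w = \widehat{R}_{\text{PU}}$ comparison the competitor summand is $t_{ij} = y_{ij}\left(1 - \pi_{ij}/\widehat{\pi}_{ij}\right)$. When $y_{ij}=0$ both $s_{ij}$ and $t_{ij}$ vanish; otherwise, dividing by $y_{ij}>0$, the per-term condition becomes $\left|1 - \pi_{ij}/\widehat{\pi}_{ij}\right| < 1 - \pi_{ij}$. I would split on the sign of $1 - \pi_{ij}/\widehat{\pi}_{ij}$: for $\widehat{\pi}_{ij}\ge\pi_{ij}$ the quantity is nonnegative and the inequality reduces to $\widehat{\pi}_{ij}<1$, while for $\widehat{\pi}_{ij}<\pi_{ij}$ it reduces to $\pi_{ij}/\widehat{\pi}_{ij} < 2-\pi_{ij}$, i.e.\ $\widehat{\pi}_{ij} > \pi_{ij}/(2-\pi_{ij})$. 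Intersecting the two cases yields exactly $\frac{\pi_{ij}}{2-\pi_{ij}} < \widehat{\pi}_{ij} < 1$. This interval also forces $\pi_{ij}<1$, which rules out the only other way $s_{ij}$ could vanish, so the edge cases are consistent.

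For the $\widehat{R}_{\text{AP}}$ comparison, note first that the hypothesis $0 < \widehat{y}_{ij} < c\,y_{ij}$ is satisfiable only when $y_{ij}>0$, and $\widehat{\pi}_{ij}<1$ forces $\pi_{ij}<1$, so every summand has $A_{ij} := (1-\pi_{ij})y_{ij} = s_{ij} > 0$ and there is no degenerate term. Writing $t_{ij} = A_{ij} - B_{ij}$ with $B_{ij} = (1-\pi_{ij}y_{ij})\tau_{ij} \ge 0$ (using $\tau_{ij}\ge 0$, valid since $\widehat{\pi}_{ij}<1$), the per-term condition $|A_{ij}-B_{ij}| < A_{ij}$ is equivalent to the double inequality $0 < B_{ij} < 2A_{ij}$. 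The left inequality is exactly $\widehat{y}_{ij}>0$; for the right, substituting $\tau_{ij} = \widehat{y}_{ij}(1-\widehat{\pi}_{ij})/(1-\widehat{\pi}_{ij}\widehat{y}_{ij})$, clearing the positive denominator $1-\widehat{\pi}_{ij}\widehat{y}_{ij}$, and collecting the coefficient of $\widehat{y}_{ij}$ gives $\widehat{y}_{ij} < c\,y_{ij}$ with $c$ precisely the stated expression.

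The remaining, and most delicate, step is verifying $c\ge 1$, i.e.\ that $2(1-\pi_{ij}) \ge D_{ij}$ where $D_{ij} = 1-\widehat{\pi}_{ij}-\pi_{ij}y_{ij}+(2-\pi_{ij})\widehat{\pi}_{ij}y_{ij}$ is the denominator of $c$ (which must also be shown positive for $c$ to be meaningful). Both $D_{ij}$ and $2(1-\pi_{ij})-D_{ij}$ are affine in $y_{ij}$, so it suffices to check nonnegativity at the endpoints $y_{ij}\in\{0,1\}$. At $y_{ij}=0$: $D_{ij}=1-\widehat{\pi}_{ij}>0$ and $2(1-\pi_{ij})-D_{ij}=1-2\pi_{ij}+\widehat{\pi}_{ij}$; at $y_{ij}=1$: $D_{ij}=(1-\pi_{ij})(1+\widehat{\pi}_{ij})>0$ and $2(1-\pi_{ij})-D_{ij}=(1-\pi_{ij})(1-\widehat{\pi}_{ij})>0$. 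The only nonobvious endpoint is $1-2\pi_{ij}+\widehat{\pi}_{ij}\ge 0$, which I would obtain from the identity $\frac{\pi_{ij}}{2-\pi_{ij}}-(2\pi_{ij}-1)=\frac{2(1-\pi_{ij})^2}{2-\pi_{ij}}\ge 0$, giving $\widehat{\pi}_{ij}>\frac{\pi_{ij}}{2-\pi_{ij}}\ge 2\pi_{ij}-1$. Affineness then propagates positivity to all $y_{ij}\in[0,1]$, establishing $D_{ij}>0$ and $c\ge 1$ and completing the AP comparison. I expect the main obstacle to be exactly this AP algebra---matching the collected coefficient of $\widehat{y}_{ij}$ to the exact form of $c$ and pinning down $c\ge 1$ through the endpoint/affine argument and the factoring identity---whereas the $\widehat{R}_w$/$\widehat{R}_{\text{PU}}$ case is an elementary two-case sign analysis.
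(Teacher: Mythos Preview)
Your argument is correct, and it is slightly different in spirit from the paper's. The paper handles $\widehat{R}_w=\widehat{R}_{\text{PU}}$ by splitting into two global regimes---all $\widehat{\pi}_{ij}>\pi_{ij}$ or all $\widehat{\pi}_{ij}\le\pi_{ij}$---so that every summand $y_{ij}(1-\pi_{ij}/\widehat{\pi}_{ij})$ has the same sign, the outer absolute value drops, and the two sums can be compared directly. You instead bound $\left|\sum t_{ij}\right|\le\sum|t_{ij}|$ via the triangle inequality and prove the termwise inequality $|t_{ij}|<s_{ij}$; this is what naturally produces the single interval $\pi_{ij}/(2-\pi_{ij})<\widehat{\pi}_{ij}<1$ and, unlike the paper's case split, it covers the mixed situation where some $\widehat{\pi}_{ij}$ overestimate and others underestimate $\pi_{ij}$---which is exactly what the stated theorem allows. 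For the $\widehat{R}_{\text{AP}}$ part, the paper first asserts that the $\widehat{\pi}$-interval alone makes each summand $(1-\pi_{ij})y_{ij}-(1-\pi_{ij}y_{ij})\tau_{ij}$ nonnegative, then solves for the $\widehat{y}$-bound; your equivalent reformulation $|A_{ij}-B_{ij}|<A_{ij}\Leftrightarrow 0<B_{ij}<2A_{ij}$ sidesteps that sign claim entirely and is cleaner. Finally, your endpoint/affine check that $D_{ij}>0$ and $c\ge1$ (via $\pi/(2-\pi)\ge 2\pi-1$) fills in a fact the paper states but does not prove. So: same destination, but your route is a bit more general and more complete.
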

Thus, if $\widehat{\pi}$ are not too-underestimated 
and $\widehat{y}$ are not too-overestimated, 
the proposed estimators will have lower bias than the naive estimator.

\section{Learning Propensities and Link Probabilities}
\label{sec:learning-estimators}
The previous section assumes 
\emph{known} propensities ($\widehat{\pi}$) 
and link probabilities ($\widehat{y}$).
We present a loss function that uses our proposed estimators 
from Section~\ref{sec:estimating-risk} 
to \emph{learn} $\widehat{\pi}$ and $\widehat{y}$.
A natural approach might be to minimize 
the negative log-likelihood of the observed data:
\begin{align*}
    \widehat{\pi}, \widehat{y} = \argmin_{\widehat{\pi}, \widehat{y}} \mathcal{L}(o | \widehat{y}, \widehat{\pi}),
\end{align*}
where $\mathcal{L}(o | \widehat{y}, \widehat{\pi}) = \sum_{(i,j) \in \Da} -o_{ij} \log(\widehat{y}_{ij} \widehat{\pi}_{ij}) - (1 - o_{ij}) \log(1 - \widehat{y}_{ij} \widehat{\pi}_{ij})$. 
However, this might not ensure 
that the true risk remains small.
We derive a generalization bound 
that motivates a different loss function 
(see Appendix \ref{appendix:generalization-bound} for the proof). 

\begin{definition}[\textbf{Rademacher Complexity}]
Let $\mathcal{F}$ be a class of functions 
$(\widehat{\pi}, \widehat{y})$. 
Each estimator $\widehat{R} \in \left\{ \widehat{R}_w,  \widehat{R}_{\text{PU}}, \widehat{R}_{\text{AP}} \right\}$ can be written as $\frac{1}{|\Da|} \sum_{(i,j) \in \Da} r(o_{ij}, \widehat{\pi}_{ij}, \widehat{y}_{ij})$ 
for an appropriate function $r$
(e.g. by Eq.~\ref{eq:risk-w-weight-definition}, 
for $\widehat{R}_w$, we have 
$r(o_{ij}, \widehat{\pi}_{ij}, \widehat{y}_{ij}) = w_{ij} \delta(o_{ij}, \widehat{y}_{ij})$)
. 
For $\widehat{R} \in \left\{ \widehat{R}_w,  \widehat{R}_{\text{PU}}, \widehat{R}_{\text{AP}} \right\}$, we define a quantity analogous to the
Empirical Rademacher Complexity \citep{bartlett2002rademacher} as
\begin{align*}
    \widehat{\mathcal{G}}_o(\mathcal{F}, \widehat{R}) &= \E_{\sigma} \left[ \sup_{(\widehat{\pi}, \widehat{y}) \in \mathcal{F}} \frac{1}{|\Da|} \sum_{(i,j) \in \Da} \sigma_{ij} r(o_{ij}, \widehat{\pi}_{ij}, \widehat{y}_{ij}) \right],
\end{align*}
where $\sigma_{ij}$ are independent Rademacher random variables. 
And the Rademacher Complexity is 
$\mathcal{G}(\mathcal{F}, \widehat{R}_w) = \E_{o}[\widehat{\mathcal{G}}_o(\mathcal{F}, \widehat{R}_w)]$.
\end{definition}
$\widehat{\mathcal{G}}_o(\mathcal{F}, \widehat{R}_w)$ can be 
estimated from the data
by taking a random sample of the variables $\sigma_{ij}$
and optimizing the above objective. 
Next, we present a generalization bound 
based on $\widehat{\mathcal{G}}_o(\mathcal{F}, \widehat{R}_w)$. 
\begin{theorem}[\textbf{Generalization Bound}] \label{thm:generalization-bound}
Let $\mathcal{F}$ be a class of functions 
$(\widehat{\pi}, \widehat{y})$.
Let $\delta(o_{ij}, \widehat{y}_{ij}) \leq \eta\,\,\forall (i,j) \in \Da$ 
and $\widehat{\pi}_{ij} \geq \epsilon > 0\,\,\forall (i,j) \in \Da$.
Then, for $\widehat{R} \in \left\{ \widehat{R}_w,  \widehat{R}_{\text{PU}}, \widehat{R}_{\text{AP}} \right\}$, with probability at least $1 - \delta$, we have
\begin{align*}
    R(\widehat{y}) &\leq \widehat{R}(\widehat{y}, \widehat{\pi}) + B(\widehat{R}) + 2 \mathcal{G}(\mathcal{F}, \widehat{R}) + M \\
        &\leq \widehat{R}(\widehat{y}, \widehat{\pi}) + B(\widehat{R}_w) + 2 \widehat{\mathcal{G}}(\mathcal{F}, \widehat{R}_w) + 3 M, 
\end{align*}
where $M = \sqrt{\frac{4 \eta^2}{\epsilon^2 |\Da|} \log(\frac{2}{\delta})}$ and $B(\widehat{R})$ is the bias of $\widehat{R}$ derived in Section~\ref{sec:estimating-risk}.
\end{theorem}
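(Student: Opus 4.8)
The plan is to prove this via the classical symmetrization-plus-concentration template for Rademacher generalization bounds (as in \citet{bartlett2002rademacher}), modified in two ways to fit the present setting: the estimators $\widehat{R}$ are importance-weighted and only \emph{biased} estimates of $R$, so a bias term must be peeled off first; and the ``samples'' are the independent-but-not-identically-distributed Bernoulli outcomes $\{o_{ij}\}_{(i,j)\in\Da}$, so I would rely on McDiarmid's inequality (which needs only independence and bounded differences, not identical distributions) rather than any i.i.d.\ assumption. The argument splits into three steps: (i) relate $R(\widehat{y})$ to $\E_o[\widehat{R}]$ through the bias; (ii) bound $\E_o[\widehat{R}]-\widehat{R}$ uniformly over $\mathcal{F}$ by $2\,\mathcal{G}(\mathcal{F},\widehat{R})+M$; and (iii) replace the population complexity $\mathcal{G}$ by its empirical counterpart $\widehat{\mathcal{G}}$ at the cost of one more $M$, yielding the second inequality.

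For step (i), by the definition of the bias, $R(\widehat{y})-\E_o[\widehat{R}]\leq\left|\,\E_o[\widehat{R}]-R(\widehat{y})\,\right|=B(\widehat{R})$, where $B(\widehat{R})$ is exactly the bias computed for each of $\widehat{R}_w,\widehat{R}_{\text{PU}},\widehat{R}_{\text{AP}}$ in Section~\ref{sec:estimating-risk}; hence $R(\widehat{y})\leq\E_o[\widehat{R}]+B(\widehat{R})$. The piece of genuine bookkeeping is to check that, writing $\widehat{R}=\frac{1}{|\Da|}\sum_{(i,j)\in\Da}r(o_{ij},\widehat{\pi}_{ij},\widehat{y}_{ij})$, each summand is uniformly bounded, $|r|=O(\eta/\epsilon)$ for all $(\widehat{\pi},\widehat{y})\in\mathcal{F}$. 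This follows case by case from the hypotheses $\delta(o_{ij},\widehat{y}_{ij})\leq\eta$ and $\widehat{\pi}_{ij}\geq\epsilon$ together with $\psi_{ij}\leq 1$ and $\tau_{ij}\leq 1$: for $\widehat{R}_w$ the active weight is either $1/\widehat{\pi}_{ij}\leq 1/\epsilon$ or $\psi_{ij}\leq 1$; for $\widehat{R}_{\text{PU}}$ the inverse-propensity weight $1/\widehat{\pi}_{ij}\leq 1/\epsilon$ again dominates; and for $\widehat{R}_{\text{AP}}$ the weights $w_{ij},w'_{ij}$ lie in $[0,1]$, so $|r|\leq 2\eta$. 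This uniform bound drives every concentration constant below.

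For step (ii), consider $\Phi(o)=\sup_{(\widehat{\pi},\widehat{y})\in\mathcal{F}}\bigl(\E_o[\widehat{R}]-\widehat{R}\bigr)$. Flipping a single $o_{ij}$ changes $\widehat{R}$ by at most the range of $r$ divided by $|\Da|$, so $\Phi$ has bounded differences of order $\eta/(\epsilon|\Da|)$ in each of the $|\Da|$ independent coordinates; McDiarmid's inequality then gives $\Phi(o)\leq\E_o[\Phi]+M$ with probability at least $1-\delta/2$, where the constant works out to $M=\sqrt{\frac{4\eta^2}{\epsilon^2|\Da|}\log\frac{2}{\delta}}$. A standard ghost-sample symmetrization bounds $\E_o[\Phi]\leq 2\,\mathcal{G}(\mathcal{F},\widehat{R})$. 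Chaining with step (i) yields the first inequality, $R(\widehat{y})\leq\widehat{R}(\widehat{y},\widehat{\pi})+B(\widehat{R})+2\,\mathcal{G}(\mathcal{F},\widehat{R})+M$, which is valid at the (data-dependent) minimizer precisely because the deviation was controlled uniformly over $\mathcal{F}$.

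Finally, for step (iii), I would apply McDiarmid a second time to $\widehat{\mathcal{G}}_o(\mathcal{F},\widehat{R})$, whose expectation over $o$ is $\mathcal{G}(\mathcal{F},\widehat{R})$ and which has the same order of bounded-differences constant, to obtain $\mathcal{G}(\mathcal{F},\widehat{R})\leq\widehat{\mathcal{G}}(\mathcal{F},\widehat{R})+M$ with probability at least $1-\delta/2$. A union bound over the two events, each of confidence $\delta/2$ (which is why $\log\frac{2}{\delta}$ appears inside $M$), combines the two displays into $R(\widehat{y})\leq\widehat{R}(\widehat{y},\widehat{\pi})+B(\widehat{R})+2\,\widehat{\mathcal{G}}(\mathcal{F},\widehat{R})+3M$. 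The main obstacle is not any single inequality but the uniform boundedness verification of step (i): the importance weights $1/\widehat{\pi}_{ij}$ blow up as $\widehat{\pi}_{ij}\to 0$, and it is exactly the assumption $\widehat{\pi}_{ij}\geq\epsilon$ that tames them; extracting a clean $\eta/\epsilon$ scale for each of the three estimators, so that a single $M$ serves both concentration steps and matches the stated form, is where the care is required.
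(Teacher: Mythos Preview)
Your proposal is correct and follows essentially the same route as the paper's proof: peel off the bias to pass from $R(\widehat{y})$ to $\E_o[\widehat{R}]$, apply McDiarmid's inequality to $\Phi(o)=\sup_{(\widehat{\pi},\widehat{y})\in\mathcal{F}}\bigl(\E_o[\widehat{R}]-\widehat{R}\bigr)$ using the bounded-differences constant driven by $\eta/\epsilon$, bound $\E_o[\Phi]$ by $2\,\mathcal{G}(\mathcal{F},\widehat{R})$ via ghost-sample symmetrization, and then apply McDiarmid once more to replace $\mathcal{G}$ by $\widehat{\mathcal{G}}$. Your write-up is in fact more careful than the paper's sketch in two respects: you make explicit the case-by-case verification that $|r|=O(\eta/\epsilon)$ for each of the three estimators, and you account for the $\log(2/\delta)$ via the union bound over the two McDiarmid events, both of which the paper leaves implicit.
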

\textbf{Loss Function.}
The bound shows that $\widehat{R} \in \{\widehat{R}_w, \widehat{R}_{\text{PU}}, \widehat{R}_{\text{AP}}\}$ is close to the true risk $R$.
This suggests that we should choose $\widehat{\pi}, \widehat{y}$ 
that lead to small values of $\widehat{R}$
as this will also minimize the true risk with high probability.
This motivates us to learn $\widehat{\pi}, \widehat{y}$ 
by minimizing the following objective:
\begin{align*}
    \widehat{\pi}, \widehat{y} = \argmin_{\widehat{\pi}, \widehat{y}} \mathcal{L}(o | \widehat{y}, \widehat{\pi}),\,\,\text{subject to}\,\, \widehat{R}(\widehat{\pi}, \widehat{y}) \leq c,
\end{align*}
where $\widehat{R} \in \{\widehat{R}_w, \widehat{R}_{\text{PU}}, \widehat{R}_{\text{AP}}\}$
and $c > 0$ is some constant. 
In practice, we minimize the following 
relaxed version of this objective:
\begin{align}\label{eq:combined-loss-function}
    l(\widehat{\pi}, \widehat{y}) = \lambda_L \mathcal{L}(o|\widehat{\pi}, \widehat{y}) + \lambda_R \widehat{R}(\widehat{\pi}, \widehat{y}),
\end{align}
where $\lambda_R$ and $\lambda_L$ are hyperparameters. 
One might try to minimize the loss function using only
$\widehat{R}$
by setting $\lambda_L = 0$.
\textcolor{black}{This will not work because trivial solutions exist for all three risk functions:
if $\,\,\forall (i,j) \in \Da, \,\, \widehat{y}_{ij} = 1$, then $\widehat{R}_w(\widehat{y}, \widehat{\pi}) = 0$;
if $\,\,\forall (i, j) \in \Da, \,\, \widehat{\pi}_{ij} = 1, \widehat{y}_{ij} > 0.5$, then $\widehat{R}_{\text{PU}} = \widehat{R}_{\text{AP}} = 0$.
Hence, we need to use $\lambda_L > 0$ during training to prevent the model from collapsing to these solutions.}
It is possible use parametric models 
like neural networks for $\widehat{y}$ and $\widehat{\pi}$
to incorporate information associated with the nodes 
(like user data or paper data). 
The parameters can be learned 
by using gradient-based methods 
by minimizing the loss in Eq.~\ref{eq:combined-loss-function}.

\section{Feedback Loops}
\label{sec:feedback-loops}
In this section, we analyze what happens 
when we train an RS 
repeatedly on data generated by users 
interacting with that system's recommendations. 
We show that for an RS 
that does not account for exposure bias,
the fraction of high-propensity nodes
that are recommended
continually increases over time. 
In other words, the system will
progressively recommend 
fewer low-propensity nodes,
even if they are relevant, as time goes on.
Next, we show that correcting for exposure bias ensures 
that relevant low-propensity nodes keep being recommended. 
In this section, we assume that the attributes 
of the nodes take values in a discrete set.
\begin{assumption}\label{assumption:feedback-loop-notation}
Each node belongs to one of $C$ categories 
from the set $\mathcal{C} = \{c_1, \hdots, c_C \}$.
Each category contains $n$ nodes. 
$V = \{ v_1, \hdots, v_N \}$ is the set of nodes and $N = n C$.
The function $\gamma: V \to \mathcal{C}$
maps a node to its category. 
The link probability $y_{ij}$ and propensity $\pi_{ij}$ 
depend only on the categories of the nodes, i.e.,
$y_{ij} = y_{lm}$ and $\pi_{ij} = \pi_{lm}$ if
$\gamma(v_i) = \gamma(v_l)$ and $\gamma(v_j) = \gamma(v_m)$.
Therefore, for any pair of nodes $(v_i, v_j)$,
the product $\pi_{ij} y_{ij}$ depends only
on the categories $v_i$ and $v_j$ belong to. 
Let $q_{uv} = \pi_{ij} y_{ij}$ for some $v_i, v_j$ 
s.t. $\gamma(v_i) = c_u$ and $\gamma(v_j) = c_v$.
\end{assumption}

\paragraph{Iterative Training Process.}
We now describe the iterative training process 
for an RS that does not account for exposure bias. 
We will restrict our attention to analyzing 
the recommendations made for the $n$ nodes 
in some category $c_u \in \mathcal{C}$.
We assume that we make one recommendation 
for each node 
(this simplifies exposition but is not necessary).
At time step $t$, the fraction of nodes recommended 
from each category is represented 
by the $(C-1)$-simplex $\kappa^{(t)}$. 
So out of the $n$ nodes from $c_u$,
$n \kappa^{(t)}_v$ of them are recommended 
a node from the category $c_v$, 
where $\kappa^{(t)}_v$ is the $v^{\text{th}}$ element of $\kappa^{(t)}$.
Links are generated from the recommended nodes 
according to ground-truth propensities and link probabilities.
Thus a node from $c_u$ creates a link 
to a recommended node from $c_v$
with probability $q_{uv}$.
Since we are examining recommendations for category $c_u$,
we will drop the subscript $u$ going forward, 
i.e., $q_v = q_{uv}$.
This gives us training data for the next iteration.
We assume that a node only creates a link 
to nodes from the recommended nodes. 
In other words, links are not created 
to nodes that are not recommended.
The number of nodes linked to 
from category $c_v$ at time $t$ is $n^{(t)}_v$.
Then $n^{(t)}_v \sim \text{Binomial}(n \kappa^{(t)}_v, q_v)$.
During training, the link probability is estimated 
as $\widehat{q}^{(t)}_v = \frac{n^{(t)}_v}{n}$.
We assume that, at time step $t+1$,
nodes from category $c_v$
are recommended with probability 
proportional to $\widehat{q}^{(t)}_v$.
This is akin to recommending
with some exploration \citep{kawale2015efficient}.
Let the $(C-1)$-simplex denoting normalized estimates be $\widehat{e}^{(t+1)} = \left[\frac{\widehat{q}^{(t)}_1}{S}, \frac{\widehat{q}^{(t)}_2}{S}, \hdots, \frac{\widehat{q}^{(t)}_C}{S} \right]$,
where $S = \sum_{j=1}^{C}\widehat{q}^{(t)}_j$.
Thus the recommendations for the next step $\kappa^{(t+1)}$ 
have the distribution $\kappa^{(t+1)} \sim \frac{1}{n} \textit{Multinomial}(n, \widehat{e}^{(t+1)})$.
This process is repeated at each time step. 
The initial training data is generated
by the user generating links according 
to the ground-truth propensities and link probabilities.

\begin{example}
We illustrate the iterative training process with a minimal example. 
Let $\mathcal{C} = \{c_1, c_2\}$.
We examine the recommendations made to nodes in $c_1$. 
Let $n = 100, q_{1,1} = 0.8$ and $q_{1,2} = 0.4$.
At time $t$, let $\kappa^{(t)} = [0.6, 0.4]$. 
Informally, $60$ of the recommended nodes
are from $c_1$ and the remaining $40$ from $c_2$.
The nodes create links to the recommended nodes
with probabilities $q_{1,1}$ and $q_{1,2}$. 
Therefore, the number of nodes linked 
that belong to $c_1$ at time $t$ 
is $n^{(t)}_1 \sim \text{Binomial}(60, q_1=0.8)$ 
and similarly, $n^{(t)}_2 \sim \text{Binomial}(40, q_2=0.4)$. 
Informally, the realized values 
are $n^{(t)}_1 = 48$ and $n^{(t)}_2 = 16$.
The estimated link probabilities 
are $\widehat{q}^{(t)}_1 = 0.48, \widehat{q}^{(t)}_2 = \frac{16}{100} = 0.16$ and
$\widehat{e}^{(t+1)} = \left[ \frac{0.48}{0.64}, \frac{0.16}{0.64} \right] = [0.75, 0.25]$.
Then, at time $t+1$, we recommend nodes
according to $\widehat{e}^{(t+1)}$, i.e., 
$\kappa^{(t+1)} \sim \frac{1}{100} \text{Multinomial}(100, \widehat{e}^{(t+1)})$. 
The realized value of $\kappa^{(t+1)}_1$ is likely 
to be greater than $\kappa^{(t)}_1$.
Thus more items from $c_1$ are likely 
to be recommended at time $t+1$ as compared to time $t$. 
This provides some intuition 
for the existence of a feedback loop:
nodes that are linked less are in turn recommended
with a lower probability in the next time step.
\end{example}

We formally show the existence of feedback loops 
(see Appendix \ref{appendix:feedback-loops} for the proofs).
We prove a finite-sample result which shows that,
with high probability, the relative probability 
of recommending nodes from categories
with higher values of $q_j$ 
keeps increasing over time.

\begin{theorem}\label{thm:feedback-loop-finite-sample}
Suppose that $q_v > q_w$ if $v > w$. 
Let $\kappa^{(t)}_{vw} = \frac{\kappa^{(t)}_v}{\kappa^{(t)}_v + \kappa^{(t)}_w}$.
Let $A^{(t)}_{vw}$ represent the event 
that relative fraction of recommendations 
from $c_v$ to that from $c_w$ increases at time $t$,
i.e., $\kappa^{(t+1)}_{vw} > \kappa^{(t)}_{vw}$. 
Let $A^{(t)}$ be the event
that all relative fractions get skewed
towards $c_v$ from $c_w$ if $q_v > q_w$, 
i.e., $A^{(t)} = \bigcap_{(v,w) \in \mathcal{S}} A^{(t)}_{vw}$,
where $\mathcal{S} = \{ (v, w): v \in [C], w \in [C], v > w \}$.
Then, for constants $\epsilon, \eta > 0$ 
that only depend on $\kappa^{(t)}$ and $q$, we have
\begin{align*}
    & \P(A^{(t)} |\kappa^{(t)}) \begin{aligned}[t]
        & \geq 1 - 2C \exp\left( -2 n \left[ \epsilon^2 + \frac{\eta^2}{C^2} \right]  \right) \\
        & \geq 1 - 2C \exp\left( - \mathcal{O}\left(\frac{n}{C^2}\right) \right).
    \end{aligned}
\end{align*}
\end{theorem}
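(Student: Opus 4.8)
The plan is to reduce the theorem to a \emph{drift} statement about a single step of the iterative process and then control the two sources of randomness (link generation and re-recommendation) by Hoeffding bounds. First I would compute the idealized value of the next relative fraction. Writing $\widehat{q}^{(t)}_v$ and then $\widehat{e}^{(t+1)}_v$ in terms of the binomial counts, and noting that the normalizer $S$ cancels in any ratio, the quantity $\frac{\kappa^{(t+1)}_v}{\kappa^{(t+1)}_v + \kappa^{(t+1)}_w}$ concentrates around $p^*_{vw} := \frac{\kappa^{(t)}_v q_v}{\kappa^{(t)}_v q_v + \kappa^{(t)}_w q_w}$, its value under the two-stage means. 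A one-line cross-multiplication shows $p^*_{vw} > \kappa^{(t)}_{vw}$ \emph{iff} $q_v > q_w$; hence for every $(v,w) \in \mathcal{S}$ the drift $g_{vw} := p^*_{vw} - \kappa^{(t)}_{vw}$ is strictly positive, and $g := \min_{(v,w)\in\mathcal{S}} g_{vw} > 0$ is a constant depending only on $\kappa^{(t)}$ and $q$. This is the deterministic engine of the feedback loop.

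Next I would introduce a single \textbf{good event} and show it forces $A^{(t)}$. Stage one is link generation, $n^{(t)}_v \sim \mathrm{Binomial}(n\kappa^{(t)}_v, q_v)$, which concentrates $\widehat{q}^{(t)}_v$ around $\kappa^{(t)}_v q_v$; stage two is the multinomial re-recommendation, which concentrates each coordinate $\kappa^{(t+1)}_v = m_v/n$ around $\widehat{e}^{(t+1)}_v$. I would define the good event as requiring, simultaneously for all $C$ categories, that the stage-one fractions lie within $\epsilon$ and the stage-two coordinates lie within $\eta/C$ of their respective means. On this event I decompose $\kappa^{(t+1)}_{vw} - \kappa^{(t)}_{vw}$ into a stage-two fluctuation, a stage-one fluctuation, and the positive drift $g_{vw}$; choosing $\epsilon,\eta$ small enough (in terms of $g$ and $\min_v \kappa^{(t)}_v$) that the two fluctuations cannot together overcome $g_{vw}$ makes every $A^{(t)}_{vw}$ hold, hence $A^{(t)} = \bigcap_{(v,w)\in\mathcal{S}} A^{(t)}_{vw}$ holds. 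Controlling all $C$ \emph{coordinates} at once controls every pairwise ratio, which is why the union bound runs over $C$ categories rather than over the $\binom{C}{2}$ pairs and produces the prefactor $2C$.

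Finally I would bound the probability of the complementary event by Hoeffding plus a union bound over the $C$ coordinates in each stage. The stage-one deviation of $\widehat{q}^{(t)}_v$ at tolerance $\epsilon$ contributes $\exp(-2n\epsilon^2)$ (absorbing a $\kappa^{(t)}_v$-type constant into $\epsilon$), while the stage-two deviation of $m_v/n$ at tolerance $\eta/C$ contributes $\exp(-2n\eta^2/C^2)$; combining the two (conditioning on stage one, so the stages are independent) yields $2C\exp(-2n[\epsilon^2+\eta^2/C^2])$, and since both exponents are $\Omega(n/C^2)$ the second displayed inequality follows. The crux — and the step I expect to be the main obstacle — is the stage-two $1/C$ bookkeeping: the recommendation fractions live on the $(C-1)$-simplex and are therefore of scale $1/C$, so resolving the $\mathcal{O}(1)$ ratio $\frac{m_v}{m_v+m_w}$ to within the drift $g$ demands \emph{absolute} accuracy $\mathcal{O}(1/C)$ on each count $m_v/n$. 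Propagating this through the nonlinear ratio map (a first-order/Lipschitz estimate whose denominator is the simplex-scale quantity $\widehat{e}^{(t+1)}_v + \widehat{e}^{(t+1)}_w \sim 1/C$) is what forces the tolerance $\eta/C$ and hence the $n/C^2$ rate; making this propagation rigorous while pinning down $\epsilon,\eta$ from $g$ and $\min_v \kappa^{(t)}_v$, and ensuring the two-stage combination incurs no extra factors, is the delicate part of the argument.
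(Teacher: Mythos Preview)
Your approach is essentially the same as the paper's: both identify the deterministic drift $p^*_{vw}-\kappa^{(t)}_{vw}>0$, control the two stages (binomial link generation and multinomial re-recommendation) by Hoeffding with tolerances $\epsilon$ and $\eta/C$ respectively, union-bound over the $C$ coordinates rather than the $\binom{C}{2}$ pairs, and combine the stages by conditioning. The only cosmetic difference is that the paper pivots through the realized intermediate ratio $\widehat{e}^{(t+1)}_{vw}$ (proving $\widehat{e}^{(t+1)}_{vw}-\kappa^{(t)}_{vw}\geq\rho$ and $\widehat{e}^{(t+1)}_{vw}-\kappa^{(t+1)}_{vw}\leq\rho$ via two explicit lemmas) instead of decomposing directly around the idealized $p^*_{vw}$, but the arguments are equivalent.
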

\begin{cor}
$\lim\limits_{n \to \infty} \P(A^{(t)}|\kappa^{(t)}) = 1$ if $C^3 \in o(n)$.
\end{cor}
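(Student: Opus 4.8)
The plan is to obtain the corollary directly from the second inequality of \thmref{thm:feedback-loop-finite-sample}, treating it as a purely asymptotic statement. That inequality supplies a constant $c > 0$ (the positive constant hidden inside the $\mathcal{O}(\cdot)$, which for all large $n$ I may fix) such that
\[
\P(A^{(t)} \mid \kappa^{(t)}) \;\geq\; 1 - 2C\exp\!\left(-c\,\frac{n}{C^2}\right).
\]
Since a probability never exceeds $1$, it suffices to prove that the error term $2C\exp(-cn/C^2)$ vanishes as $n \to \infty$ whenever $C^3 \in o(n)$. Throughout, $C = C(n)$ is permitted to grow with $n$, so the argument is really an asymptotic comparison between the decaying exponential and the growing prefactor.

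First I would take logarithms, reducing the goal to showing that $\log 2 + \log C - c\,n/C^2 \to -\infty$, i.e.\ that $c\,n/C^2 - \log C \to \infty$. When $C$ stays bounded this is immediate, since then $n/C^2 \to \infty$ while $\log C$ remains bounded; so the only case needing attention is $C \to \infty$, where I must verify that $n/C^2$ dominates $\log C$, equivalently that $\frac{n}{C^2\log C} \to \infty$.

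This is exactly where the hypothesis $C^3 \in o(n)$ is used. Because $\log C = \mathcal{O}(C)$, we have $C^2\log C \in o(C^3) \subseteq o(n)$, and hence $\frac{n}{C^2\log C} \to \infty$. Substituting back gives $c\,n/C^2 - \log C \to \infty$, so $2C\exp(-cn/C^2) \to 0$ and therefore $\P(A^{(t)} \mid \kappa^{(t)}) \to 1$, as claimed.

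I do not expect a genuine obstacle here, since all the probabilistic content lives in \thmref{thm:feedback-loop-finite-sample}; the only subtlety worth flagging is the role of the union-bound prefactor $2C$. Its logarithm contributes the $\log C$ term, and the exponential decay rate $n/C^2$ must overwhelm this growing quantity, not merely a constant. This is precisely why the stated sufficient condition involves $C^3$ rather than $C^2$: the weaker assumption $C^2 \in o(n)$ would still give $n/C^2 \to \infty$, yet would not guarantee that $n/C^2$ outpaces $\log C$ (for instance $n = C^2\sqrt{\log C}$ satisfies $C^2 \in o(n)$ but forces $c\,n/C^2 - \log C \to -\infty$), so the bound would fail to approach $1$.
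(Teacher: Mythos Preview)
Your argument is correct and is precisely the intended derivation: the paper states this corollary immediately after \thmref{thm:feedback-loop-finite-sample} without proof, relying on the reader to see that $2C\exp\!\big(-\mathcal{O}(n/C^2)\big)\to 0$ once $C^3\in o(n)$. Your use of $C^2\log C\le C^3\in o(n)$ to dominate the $\log C$ coming from the prefactor is exactly the computation needed, and your closing remark explaining why $C^3$ rather than $C^2$ appears in the hypothesis is a nice clarification that the paper itself omits.
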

Thus, at each time step, with high probability,
nodes with low propensity are less likely 
to be recommended in the next time step. 
Therefore, if an RS does not correct 
for exposure bias, over time,
even relevant nodes with low propensity 
are unlikely to be recommended. 
Next, we derive and analyze the rate 
at which the exposure bias exacerbates.
\begin{theorem}\label{thm:worsen_rate}
Suppose that $q_v > q_w$. As $n \rightarrow \infty$, $\kappa^{(t)}_{vw} \overset{p}{\rightarrow} 1 - \frac{1}{1 + c^t}$, where $c = \frac{q_v}{q_w}$.
\end{theorem}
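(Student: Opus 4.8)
The plan is to show that, for each fixed $t$, the random simplex $\kappa^{(t)}$ concentrates as $n \to \infty$ on a deterministic point $\kappa^{*(t)}$ governed by a replicator-type recurrence, and then to read off the claimed limit of $\kappa^{(t)}_{vw}$ from a closed form for $\kappa^{*(t)}$. The convergence in probability is for fixed $t$, so I can chain finitely many limiting steps via an induction on $t$.

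First I would identify the deterministic limit map. Conditional on $\kappa^{(t)}$, we have $n^{(t)}_v \sim \text{Binomial}(n\kappa^{(t)}_v, q_v)$, so $\widehat{q}^{(t)}_v = n^{(t)}_v/n$ has conditional mean $\kappa^{(t)}_v q_v$ and conditional variance $\kappa^{(t)}_v q_v (1-q_v)/n \to 0$; a conditional Chebyshev (or Hoeffding) bound followed by taking expectations over $\kappa^{(t)}$ gives $\widehat{q}^{(t)}_v - \kappa^{(t)}_v q_v \overset{p}{\to} 0$. Since $\widehat{e}^{(t+1)}_v = \widehat{q}^{(t)}_v / \sum_j \widehat{q}^{(t)}_j$ and $\kappa^{(t+1)} \sim \frac{1}{n}\textit{Multinomial}(n, \widehat{e}^{(t+1)})$, the same variance argument shows the marginal $\kappa^{(t+1)}_v$ concentrates on $\widehat{e}^{(t+1)}_v$. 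Composing these three steps yields the limiting recurrence
\[
\kappa^{*(t+1)}_v = \frac{\kappa^{*(t)}_v q_v}{\sum_j \kappa^{*(t)}_j q_j}.
\]

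Next I would make this rigorous by induction on $t$, proving $\kappa^{(t)} \overset{p}{\to} \kappa^{*(t)}$ from the uniform initialization $\kappa^{*(0)}_v = 1/C$ (which matches $\kappa^{*(0)}_{vw} = 1/2$, i.e. the $t=0$ value of the target formula). The base case is immediate. In the inductive step I combine the hypothesis $\kappa^{(t)}_v \overset{p}{\to} \kappa^{*(t)}_v$ with the conditional concentration of $\widehat{q}^{(t)}_v$ around $\kappa^{(t)}_v q_v$ to get $\widehat{q}^{(t)}_v \overset{p}{\to} \kappa^{*(t)}_v q_v$, then apply the continuous mapping theorem to the normalization $x \mapsto x_v / \sum_j x_j$, which is continuous at the limit point because $\sum_j \kappa^{*(t)}_j q_j > 0$; finally the multinomial-marginal concentration pushes $\kappa^{(t+1)}_v$ to $\kappa^{*(t+1)}_v$. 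For the last step I would solve the recurrence in closed form: one checks by substitution that $\kappa^{*(t)}_v = q_v^t / \sum_j q_j^t$, so with $c = q_v/q_w$,
\[
\kappa^{*(t)}_{vw} = \frac{q_v^t}{q_v^t + q_w^t} = \frac{c^t}{1 + c^t} = 1 - \frac{1}{1 + c^t},
\]
and continuous mapping applied to $(\kappa^{(t)}_v, \kappa^{(t)}_w) \mapsto \kappa^{(t)}_v/(\kappa^{(t)}_v + \kappa^{(t)}_w)$ transfers this to $\kappa^{(t)}_{vw}$.

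I expect the main obstacle to be the bookkeeping in propagating convergence in probability through the two layers of randomness at each iteration: both the binomial size $n\kappa^{(t)}_v$ and the multinomial parameter $\widehat{e}^{(t+1)}$ are themselves random and only converge in probability, so the argument cannot treat these as fixed constants. The care lies in routing each per-step concentration through conditional variance bounds and in verifying that the normalization map is continuous at a \emph{strictly interior} limit point — which is why establishing $\kappa^{*(t)}_v > 0$ (guaranteed by $q_v > 0$ and the positivity-preserving closed form) is an essential ingredient rather than a triviality.
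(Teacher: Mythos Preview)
Your proposal is correct and follows essentially the same route as the paper: concentrate the binomial and multinomial layers at each step, chain these finitely many convergences by induction (the paper does this via a transitivity lemma on the ratio $\kappa^{(t)}_v/\kappa^{(t)}_w$ rather than the full simplex), and finish with the continuous mapping theorem. The only cosmetic difference is that the paper tracks the ratio directly---where the normalizing constant cancels and the recurrence is simply multiplication by $c$---while you carry the whole simplex and its closed form $\kappa^{*(t)}_v = q_v^t/\sum_j q_j^t$; both yield the same limit.
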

\thmref{thm:worsen_rate} shows that
the rate at which the bias exacerbates
is dependent on the ratio $\frac{q_v}{q_w}$.
Therefore, the lower the propensity, 
the faster the probability of that node
being recommended reduces.

\begin{cor}
Let $y_{c_u c_v} = y_{ij}$ for some $(i, j)$ 
s.t. $u = \gamma(i)$ and $v = \gamma(j)$
($\gamma$ is defined after Assumption \ref{assumption:feedback-loop-notation}).
We now assume that we have a consistent estimator
$\widehat{q}^{(t)}_v \overset{p}{\rightarrow} \kappa^{(t)}_v y_{g_u g_v}$, 
where $\kappa^{(t)}_v$ is the $v^{\text{th}}$ element of the simplex $\kappa^{(t)}$. 
Thus $\widehat{q}^{(t)}_v$ is an estimator 
that negates the effect of exposure bias.
As $n \rightarrow \infty$, $\kappa^{(t)}_{vw} \overset{p}{\rightarrow} 1 - \frac{1}{1 + c^{t}}$, where $c = \frac{y_{g_u g_v}}{y_{g_u g_w}}$.
\end{cor}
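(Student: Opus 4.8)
The plan is to replay the proof of \thmref{thm:worsen_rate} essentially verbatim, changing only the target to which the per-step estimator concentrates. In \thmref{thm:worsen_rate} the estimator $\widehat{q}^{(t)}_v = n^{(t)}_v/n$ satisfies $\widehat{q}^{(t)}_v \overset{p}{\rightarrow} \kappa^{(t)}_v q_v$ by the law of large numbers applied to the binomial counts; here the hypothesis instead hands us a consistent, exposure-corrected estimator with $\widehat{q}^{(t)}_v \overset{p}{\rightarrow} \kappa^{(t)}_v y_{g_u g_v}$. Since the rest of the iterative process (normalization into $\widehat{e}^{(t+1)}$ and the multinomial resampling producing $\kappa^{(t+1)}$) is unchanged, the entire argument goes through with $q_v$ replaced by $y_{g_u g_v}$ throughout, so that the driving ratio becomes $c = y_{g_u g_v}/y_{g_u g_w}$.

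First I would establish the limiting deterministic recursion. Conditioning on $\kappa^{(t)}$ and combining the given convergence $\widehat{q}^{(t)}_v \overset{p}{\rightarrow} \kappa^{(t)}_v y_{g_u g_v}$ with the continuous mapping theorem and Slutsky's theorem, the normalized vector converges as $\widehat{e}^{(t+1)}_v \overset{p}{\rightarrow} \frac{\kappa^{(t)}_v y_{g_u g_v}}{\sum_{j} \kappa^{(t)}_j y_{g_u g_j}}$, where the denominator is bounded away from zero because the $y_{g_u g_j}$ are positive constants and $\kappa^{(t)}$ lies in the simplex. Because $\kappa^{(t+1)} \sim \frac{1}{n}\textit{Multinomial}(n, \widehat{e}^{(t+1)})$, the empirical frequencies concentrate on their mean $\widehat{e}^{(t+1)}$ as $n \rightarrow \infty$, so $\kappa^{(t+1)}$ has the same probability limit. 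Inducting on $t$, with base case the deterministic uniform initialization $\kappa^{(0)}_v = 1/C$, yields that $\kappa^{(t)}$ converges in probability to a deterministic vector obeying $\kappa^{(t+1)}_v = \kappa^{(t)}_v y_{g_u g_v}/\sum_j \kappa^{(t)}_j y_{g_u g_j}$.

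Next I would solve this recursion through the pairwise ratio $r^{(t)}_{vw} = \kappa^{(t)}_v/\kappa^{(t)}_w$. The common normalizing factor cancels, giving $r^{(t+1)}_{vw} = c\, r^{(t)}_{vw}$ with $c = y_{g_u g_v}/y_{g_u g_w}$, hence $r^{(t)}_{vw} = c^{t} r^{(0)}_{vw}$. The symmetric initialization gives $r^{(0)}_{vw} = 1$, so $r^{(t)}_{vw} = c^{t}$, and applying the continuous map $r \mapsto r/(1+r)$ gives $\kappa^{(t)}_{vw} = \frac{\kappa^{(t)}_v}{\kappa^{(t)}_v + \kappa^{(t)}_w} = \frac{c^{t}}{1 + c^{t}} = 1 - \frac{1}{1+c^{t}}$ in the limit, which is the claim.

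The main obstacle is the bookkeeping needed to propagate convergence in probability through the $t$-fold composition of random maps: each step resamples $\kappa^{(t+1)}$ from a multinomial whose cell probabilities $\widehat{e}^{(t+1)}$ are themselves only converging in probability, so I must chain Slutsky and the continuous mapping theorem carefully and verify at every step that the normalizing denominator stays bounded below by a positive constant, so that division is continuous on the relevant domain and the induction hypothesis transfers cleanly. Once the limiting recursion is justified, the ratio computation is immediate and identical in form to that of \thmref{thm:worsen_rate}, the only difference being that correcting the exposure bias replaces $q_v = \pi_{uv} y_{uv}$ by $y_{g_u g_v}$, so categories with equal link probability but unequal propensity no longer drive a feedback loop.
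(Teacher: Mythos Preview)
Your proposal is correct and follows essentially the same route as the paper. The paper does not give a separate proof for this corollary; it is meant to follow immediately by replaying the proof of \thmref{thm:worsen_rate} with $q_v$ replaced by $y_{g_u g_v}$, which is exactly what you do. The only cosmetic difference is that the paper anchors the recursion at $t=1$ with $\kappa^{(1)}_v/\kappa^{(1)}_w \overset{p}{\to} c$, whereas you start from a uniform $\kappa^{(0)}$; these yield the same closed form $c^t$ for the ratio.
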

This shows that accounting for exposure bias 
can alleviate the feedback loop. 
Despite having low propensity, 
relevant papers will continue 
to be recommended.

\section{Experiments}
\label{sec:experiments}
We validate our link recommendation methods
on the task of citation recommendation. 
Given an input paper's data 
(like title, abstract, etc.),
the goal is to recommend papers that it should cite.
We use the Microsoft Academic Graph (MAG) 
dataset \citep{sinha2015overview}.
MAG is a graph containing scientific papers 
and the citation relationships between them. 
It also contains the titles, 
abstracts, and FOS of the papers.
In our experiments, we use subgraphs from the MAG 
by performing a breadth-first search from some root node.
For each paper, we concatenate the title and abstract
and generate a 768-dimensional embedding for the text 
using the \textit{bert-as-service} library \citep{xiao2018bertservice}. 
We use a SciBERT model \citep{Beltagy2019SciBERT}, 
which is a BERT model trained on scientific text, with this library.
For each paper $p_i$, we generate 
the embedding $h_i \in \RB^{768}$.
The FOS in MAG are organised as a tree,
where a child is a sub-field of its parent.
We only use the root-level FOS for each paper
and there are 19 such FOS. 
We use Amazon Sagemaker \citep{liberty2020elastic}
to run our experiments.

For simplicity, we assume that the propensities $\pi_{ij}$ 
depend only on the FOS of papers $p_i$ and $p_j$.
Thus the propensity model is parameterized by 
$\widehat{\theta}_{\pi} \in [0,1]^{19 \times 19}$.
However, our methods can easily extend 
to more complicated parametric propensity estimators like neural networks.
To model the link probability $\widehat{y}_{ij}$,
we use the following model:
\begin{align}\label{eq:experiments-output-model}
    \widehat{y}_{ij} = \sigma(\widehat{w}^\top (h_i \odot h_j) + \widehat{b}),
\end{align}
where $\widehat{w} \in \RB^{768}$ 
and $\widehat{b} \in \RB$ are trainable parameters,
$\odot$ is an element-wise product, 
and $\sigma$ is the sigmoid function.
We use stochastic gradient descent 
to learn $\widehat{\theta}_{\pi}, \widehat{w}$ 
and $\widehat{b}$ using the loss function 
described in Eq. \ref{eq:combined-loss-function} 
with $\delta$ as the log-loss, i.e., 
$\delta(u, \widehat{u}) = -u \log(\widehat{u}) - (1-u) \log(1-\widehat{u})$.
For training, we use the Adam optimizer \citep{kingma2014adam} 
with a learning rate of $10^{-4}$ and a batch size of $32$.

\subsection{Semi-Synthetic Dataset} \label{sec:experiments-semi-synthetic}

\begin{table}
\caption{Evaluation metrics 
on the test set of the semi-synthetic data computed against
known ground truth citation links.
}
\label{table:semi-synthetic-eval-metrics}
\begin{center}
\begin{small}
\begin{sc}
\begin{tabular}{lcccccr}
\toprule
Model & Prec. & Rec. & AUC & MAP \\
\midrule
\makecell[l]{No Prop.} & 67.24 & 54.81 & 84.45 & 41.87 \\
\makecell[l]{MLE} & 81.04 & 60.19 & 93.12 & 56.77 \\
\makecell[l]{$\widehat{R}_w$} & \textbf{83.28} & 63.73 & \textbf{96.42} & 56.96 \\
\makecell[l]{$\widehat{R}_\text{PU}$} & 82.16 & 63.07 & 94.28 & 58.01 \\
\makecell[l]{$\widehat{R}_\text{AP}$} & 83.01 & \textbf{65.54} & 95.38 & \textbf{59.90} \\
\bottomrule
\end{tabular}
\end{sc}
\end{small}
\end{center}
\end{table}

Since we do not have ground truth exposure values in the MAG dataset, 
we cannot know whether a paper was not cited
due to a lack of exposure or due to irrelevancy.
As a result, we construct a semi-synthetic dataset 
with simulated propensity scores and link probabilities.
We use a subset of $41{,}600$ papers.
We generate train-test-validation splits 
by taking a topological ordering of the nodes 
and use the subgraph created 
from the first $70\%$ for training, 
next $10\%$ for validation, 
and the remaining $20\%$ for testing.
We use the real text and FOS for each paper.
The simulated propensity matrix $\pi$ 
is a $19 \times 19$ matrix 
with its diagonal and off-diagonal entries initialized
from $U(0.7, 1)$ and $U(0.1, 0.3)$, respectively, 
where $U(.)$ is the uniform distribution.
The link probability is simulated using 
$y_{ij} = \sigma(w^\top (h_i \odot h_j) + b)$,
where $\sigma$ is the sigmoid function, 
$\odot$ is element-wise product, 
and $w, b$ are fixed \emph{known} vectors.

\begin{table}
\caption{RMSE of the estimated risk 
with respect to the true risk 
computed using our proposed estimators.
The first column shows the risk 
used in the loss function in Eq.~\ref{eq:combined-loss-function} 
to learn $\widehat{\pi}$ and $\widehat{y}$.
}
\label{table:semi-synthetic-rmse-metrics}
\begin{center}
\begin{small}
\begin{sc}
\begin{tabular}{l|ccccr}
\toprule
\multirow{2}{*}{\makecell{Trained \\ Using}} & \multicolumn{4}{c}{Estimator Used} \\
& $\widehat{R}_\text{naive}$ & $\widehat{R}_w$ & $\widehat{R}_\text{PU}$ & $\widehat{R}_\text{AP}$ \\
\midrule
No Prop. & 1.50 & - & - & - \\
\makecell[l]{MLE} & 0.67 & \textbf{0.23} & 0.24 & 0.32 \\
\makecell[l]{$\widehat{R}_w$} & 0.43 & \textbf{0.04} & 0.10 & 0.11 \\
\makecell[l]{$\widehat{R}_\text{PU}$} & 0.38 & 0.05 & 0.11 & \textbf{0.04} \\
\makecell[l]{$\widehat{R}_\text{AP}$} & 0.41 & 0.06 & 0.08 & \textbf{0.03} \\
\bottomrule
\end{tabular}
\end{sc}
\end{small}
\end{center}
\end{table}

We show 
the evaluation metrics for five models on the test set 
computed against the simulated true citations 
(not the observed citations) (Table \ref{table:semi-synthetic-eval-metrics}).
\textit{No Prop} is the model trained naively on the observed data using only
the output model in Eq. \ref{eq:experiments-output-model}. 
\textit{MLE} is the model trained using the loss function 
in Eq. \ref{eq:combined-loss-function} with $\lambda_R = 0$.
The remaining three are models trained using
$\widehat{R}_w, \widehat{R}_{\text{PU}}$, 
and $\widehat{R}_{\text{AP}}$ with $\lambda_R = 10$ and $\lambda_L = 1$.
We see that all other estimators 
significantly outperform \textit{No Prop}.
Additionally, our proposed estimators 
lead to improved performance over the MLE. 
We emphasize that these metrics 
are computed against true citations 
and thus are a measure of true risk 
which is the appropriate metric 
for evaluating an RS's performance.
This shows the utility of accounting for exposure bias 
and learning using our proposed loss function.

\begin{figure}[t]
\begin{center}
\begin{subfigure}{0.47\columnwidth}
\includegraphics[width=0.9\columnwidth]{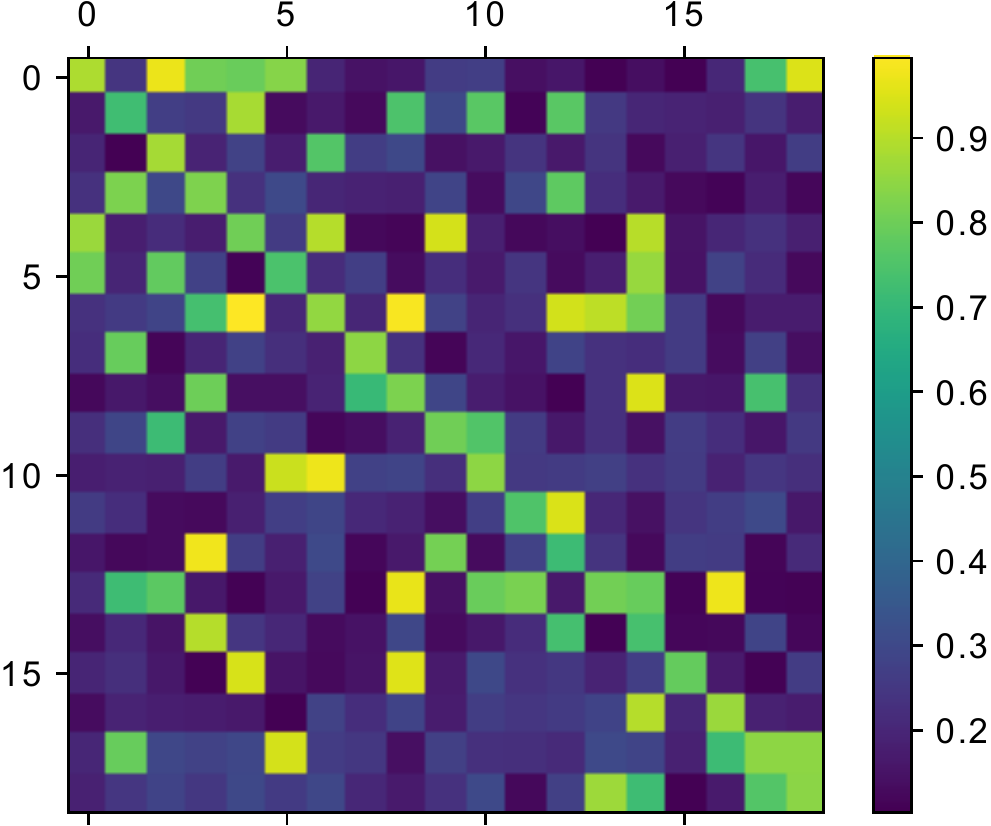}
\caption{Ground Truth}
\end{subfigure}
\begin{subfigure}{0.47\columnwidth}
\includegraphics[width=0.9\columnwidth]{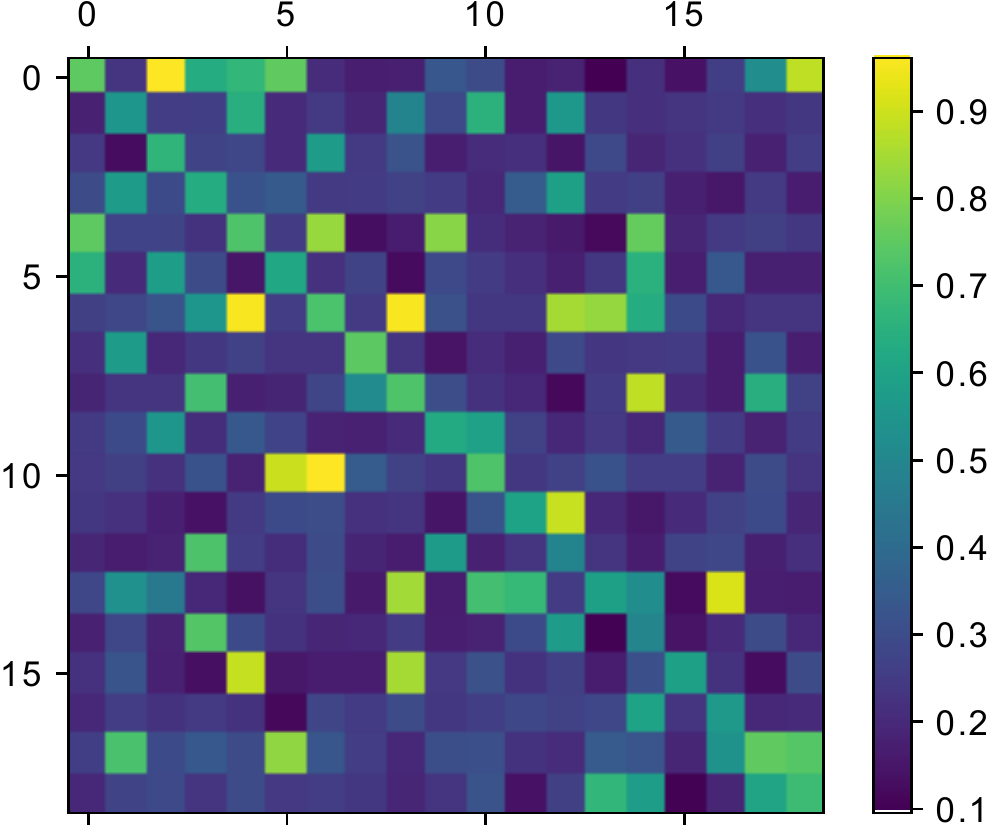}
\caption{Estimated}
\end{subfigure}
\caption{The estimated propensities propensities are close to the true simulated values when learned using $\widehat{R}_w$.}
\label{fig:ground-truth-and-estimated-propensities}
\end{center}
\end{figure}

In this work, we tackle two separate (but related) challenges.
The first challenge is \textit{learning} link probabilities in such a way that they are not underestimated due to exposure bias.
The second challenge is \textit{evaluating} a RS given learned link probabilities and propensity scores, i.e., computing a good estimate of the true risk.
We demonstrate the efficacy of our methods for the second challenge and show
that our proposed weighting schemes lead to good estimates of the true risk (Table~\ref{table:semi-synthetic-rmse-metrics}).
We show 
the RMSE of the risk estimated 
using the proposed estimators
with respect to the true risk. 
The first column denotes risk function 
used to train the model 
(as described in Section~\ref{sec:learning-estimators}).
The rest of the columns denote 
the estimators used to estimate true risk 
using the learned propensities and link probabilities 
from the trained model in the first column.
We trained each model $10$ times to compute the RMSE.
The RMSE estimated using $\widehat{R}_{\text{naive}}$ 
is always greater than that of the other estimators, 
which shows that leveraging the learned propensities 
leads to substantially better estimates of the true risk 
(and thus more accurately evaluates the RS). 
The RMSE when trained using the \textit{MLE}
is higher than when trained using our proposed estimators, 
showing the benefit of our proposed estimators over the \textit{MLE}.
This also qualitatively validates the generalization bound
proved in Section~\ref{sec:learning-estimators} 
by showing that minimizing 
$\widehat{R} \in \{ \widehat{R}_w, \widehat{R}_{\text{PU}}, \widehat{R}_{\text{AP}} \}$
also leads to small values of the true risk.

The heatmap of simulated propensities 
and estimated propensities when using $\widehat{R}_w$
shows that the estimated propensities 
are close to the true propensities (Figure \ref{fig:ground-truth-and-estimated-propensities}). 
The mean relative error between the true 
and estimated propensities is $19.47\%$, 
demonstrating that the training procedure recovers the propensities.
Together, these results show that our methods 
successfully mitigate exposure bias in this dataset.

\subsection{Real-World Datasets}\label{sec:real-world-datasets}

\begin{table}
\caption{Evaluation metrics for various models 
computed on the test sets 
of the two real-world citation datasets.}
\label{table:real-data-standard-metrics}
\begin{center}
\begin{small}
\begin{sc}
\begin{tabular}{lccccccr}
\toprule
Model & Prec. & Rec. & F1 & AUC & \makecell[l]{MAP}\\
\midrule
\multicolumn{6}{l}{\textbf{Dataset 1}} \\
\midrule
\makecell[l]{No Prop.} & 29.45 & 78.30 & 42.81 & 84.44 & 24.10 \\
\makecell[l]{MLE} & 30.24 & 77.84 & 43.56 & 84.41 & 24.60 \\
\makecell[l]{$\widehat{R}_w$} & 31.46 & 78.02 & 44.84 & 84.74 & 25.60  \\
\makecell[l]{$\widehat{R}_\text{PU}$} & 30.98 & \textbf{78.94} & 44.49 & \textbf{85.24} & 25.11 \\
\makecell[l]{$\widehat{R}_\text{AP}$} & \textbf{36.07} & 76.08 & \textbf{48.94} & 84.67 & \textbf{28.58} \\
\midrule
\multicolumn{6}{l}{\textbf{Dataset 2}} \\
\midrule
\makecell[l]{No Prop.} & 44.86 & 70.85 & 54.94 & 83.22 & 33.19 \\
\makecell[l]{MLE} & 44.43 & 74.66 & 55.71 & 84.97 & 34.39 \\
\makecell[l]{$\widehat{R}_w$} & \textbf{48.70} & 71.62 & \textbf{57.98} & 83.90 & \textbf{36.25} \\
\makecell[l]{$\widehat{R}_\text{PU}$} & 42.17 & \textbf{76.15} & 54.28 & \textbf{85.43} & 33.26 \\
\makecell[l]{$\widehat{R}_\text{AP}$} & 47.22 & 71.84 & 56.98 & 83.89 & 35.27 \\
\bottomrule
\end{tabular}
\end{sc}
\end{small}
\end{center}
\end{table}
We now evaluate our proposed method 
on a real-world citation network. 
We construct two datasets
by using disjoint subgraphs of the MAG. 
The first generated dataset has
$2{,}442{,}008$ papers and $7{,}577{,}886$ edges. 
The second dataset has $1{,}328{,}664$ papers and $1{,}469{,}899$ edges. 
Thus the second graph is sparser than the first one.
The FOS distribution is also different in both datasets 
(see details in Appendix \ref{appendix:experiments}). 
We use 70-10-20\% train-validation-test splits 
generated similarly to the semi-synthetic dataset.
We do not have access to true exposure values
and thus we evaluate our methods against the observed citation links.

We show 
the evaluation metrics for the proposed estimators (Table \ref{table:real-data-standard-metrics}).
Since we do not have access 
to the true citation links in the real dataset,
we compute these metrics over the observed links. 
In other words, this is a measure of the naive risk.
We see that our proposed estimators 
achieve comparable metrics 
to \textit{No Prop}.
For both datasets, the best numbers for each metric 
are achieved by estimators other than \textit{No Prop}.
Moreover, the models using the weighted estimators 
outperform the \textit{MLE} estimator in both datasets.
These results show that our proposed estimators 
achieve comparable performance 
even when evaluated on the observed citation data.
Similarly, Table \ref{table:real-data-link-pred-metrics} shows 
link prediction metrics
for various models computed against \textit{observed} citations.
\textit{Recall@100} refers to the recall in the top 100 recommendations 
averaged across all papers in the test set. 
\textit{Mean Rank} is the mean rank 
of the cited papers
averaged across all the papers.
\textit{Entropy@100 of True Positives} is the entropy in the FOS 
of the true positives in the top 100 recommendations for each paper; 
we use it to measure the diversity in the FOS of the recommendations.
Our proposed estimators achieve comparable \textit{Recall@100} and \textit{Mean Rank} to \textit{No Prop}
for both datasets.
As expected, propensity based estimators have higher FOS entropy scores than \textit{No Prop}, with $\widehat{R}_w$ achieving the highest FOS entropy in both datasets. Thus our proposed estimators \emph{recommend more relevant papers from different FOS and still maintain comparable performance} to \textit{No Prop}.

At first blush, the comparable performance to \textit{No Prop} may not seem compelling. However, this is a strong result.
Our goal is to correct exposure bias and minimize \textit{true risk}, not observed (or naive) risk.
Since Tables \ref{table:real-data-standard-metrics} and \ref{table:real-data-link-pred-metrics} are computed against observed citations, our proposed methods should not be expected to outperform \textit{No Prop} as they are not trying to optimize metrics against the observed links.
In Section \ref{sec:experiments-semi-synthetic}, we showed that our methods correct exposure bias and achieve lower true risk. Coupled with those results, our goal in this section was to show that our methods do not lower performance even if evaluated against the standard evaluation metrics.
\textcolor{black}{We suspect that the negative log-likelihood term $\mathcal{L}(o|\widehat{\pi}, \widehat{y})$ in Eq.~\ref{eq:combined-loss-function} is likely responsible for the comparable performance against the observed risk. This is because, as seen from the results, the MLE also performs well as compared to \textit{No Prop}.}.

\begin{table}
\caption{Link prediction metrics for various models when evaluated on the test set of the real datasets.}
\label{table:real-data-link-pred-metrics}
\begin{center}
\begin{small}
\begin{sc}
\begin{tabular}{lcccccr}
\toprule
Model & \makecell{Recall\\@100} & \makecell{Mean \\ Rank} & \makecell{Entropy@100 \\ True positives}\\
\midrule
\multicolumn{4}{l}{\textbf{Dataset 1}} \\
\midrule
\makecell[l]{No Prop.} & 24.39 & \textbf{2247.27} & 1.65 \\
\makecell[l]{MLE} & 24.70 & 2891.40 & 1.73 \\
\makecell[l]{$\widehat{R}_w$} & 25.03 & 2836.73 & \textbf{1.74} \\
\makecell[l]{$\widehat{R}_\text{PU}$} & 24.61 & 2875.13 & 1.73 \\
\makecell[l]{$\widehat{R}_\text{AP}$} & \textbf{26.66} & 2425.51 & 1.71 \\
\midrule
\multicolumn{4}{l}{\textbf{Dataset 2}} \\
\midrule
\makecell[l]{No Prop.} & \textbf{6.32} & \textbf{10170.26} & 1.06 \\
\makecell[l]{MLE} & 6.07 & 10731.88 & 1.08 \\
\makecell[l]{$\widehat{R}_w$} & 6.30 & 10873.19 & \textbf{1.12} \\
\makecell[l]{$\widehat{R}_\text{PU}$} & 5.92 & 10717.06 & 1.08 \\
\makecell[l]{$\widehat{R}_\text{AP}$} & 5.99 & 10801.11 & 1.10 \\
\bottomrule
\end{tabular}
\end{sc}
\end{small}
\end{center}
\end{table}

\subsection{Feedback Loops}

\begin{figure}[t]
\begin{center}
\begin{subfigure}{0.49\columnwidth}
\includegraphics[width=1\columnwidth]{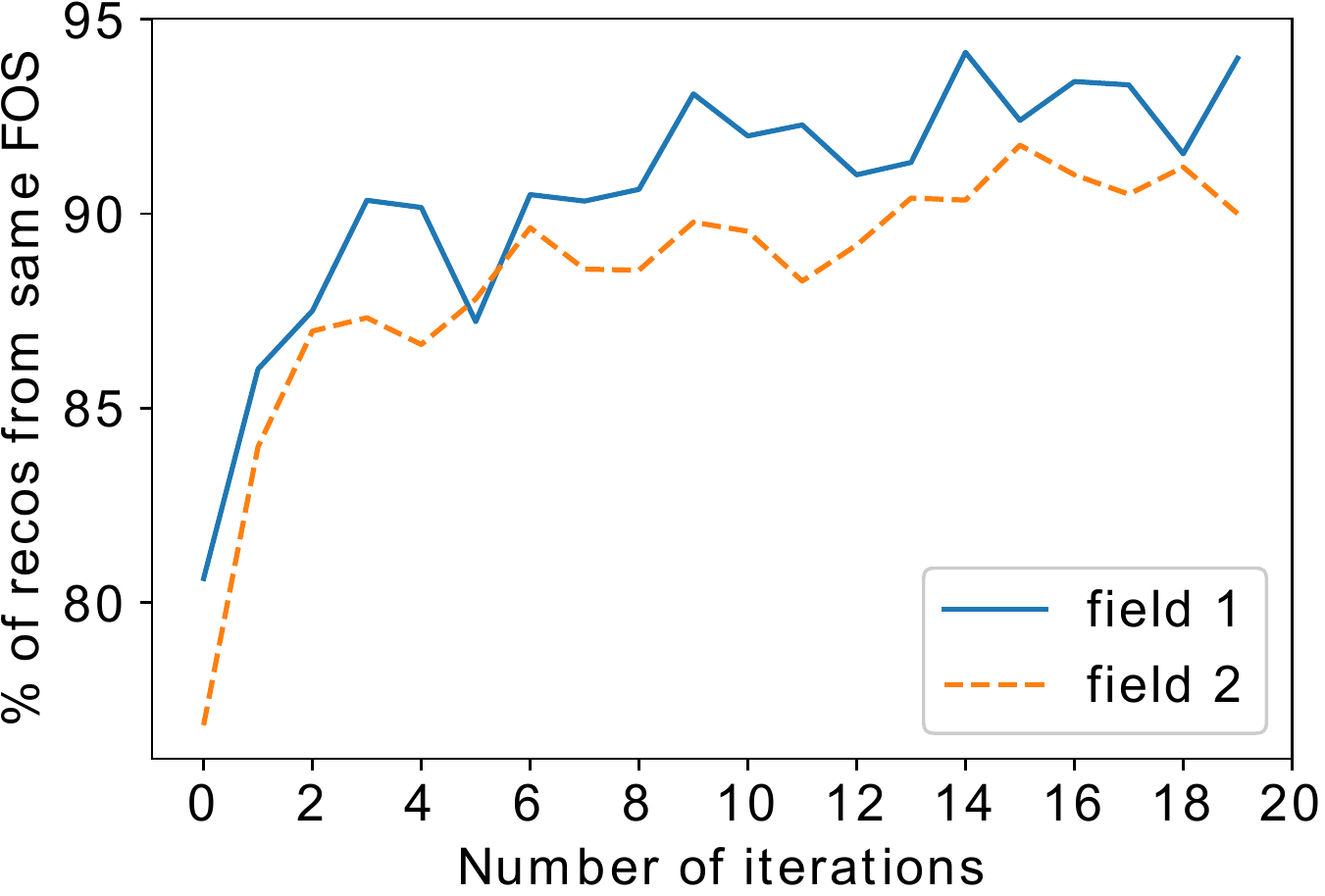}
\caption{No Propensity}
\label{fig:feedback-loops-fos-no-propensity}
\end{subfigure}
\begin{subfigure}{0.49\columnwidth}
\includegraphics[width=1\columnwidth]{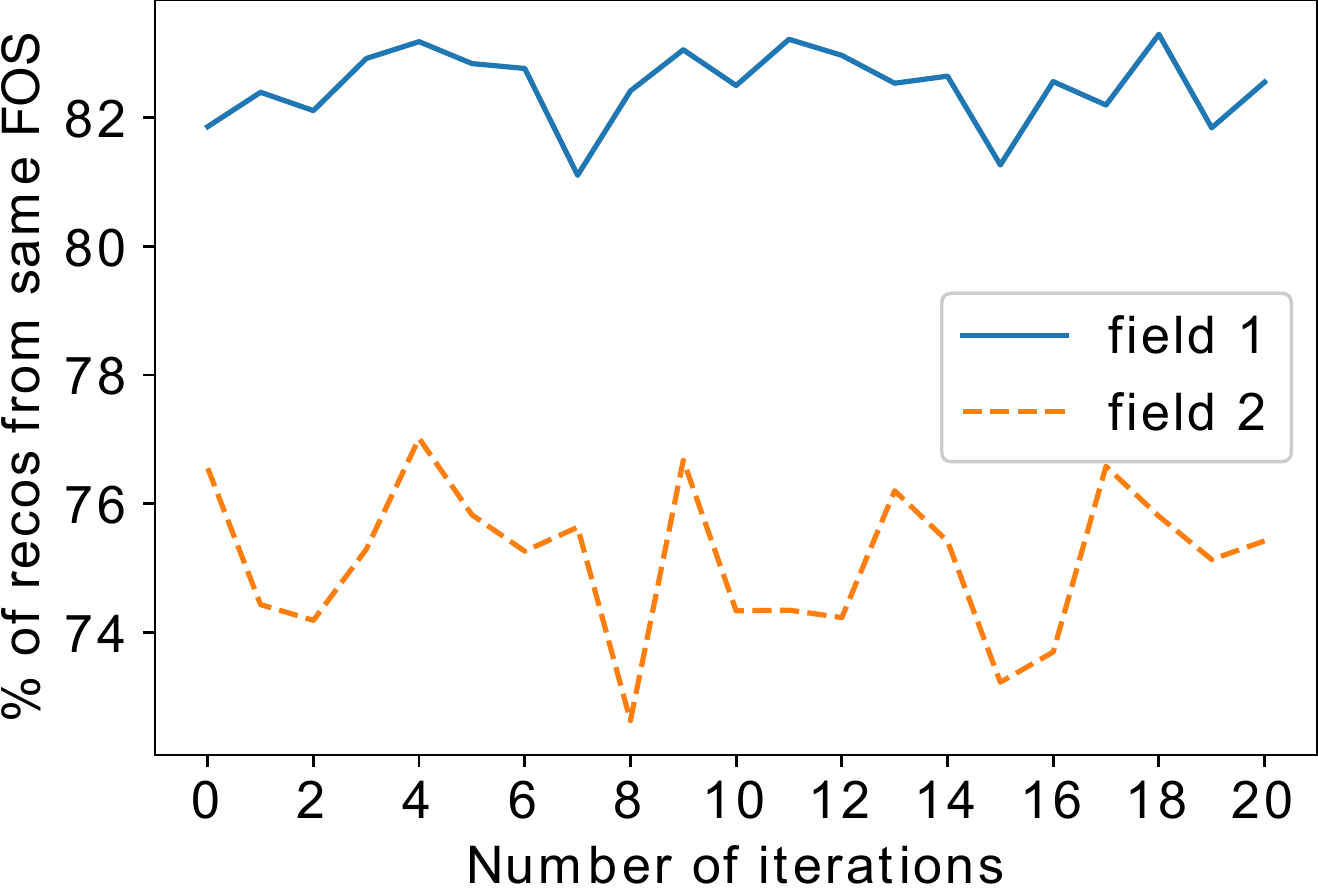}
\caption{With Propensity ($\widehat{R}_w$)}
\label{fig:feedback-loops-fos-with-propensity}
\end{subfigure}
\caption{The fraction of recommended papers from the same FOS over time.}
\label{fig:feedback-loops-fos}
\end{center}
\end{figure}

We run simulations to examine what happens 
when a citation recommender is trained repeatedly 
on data collected from users
interacting with its recommendations. 
We use the iterative training procedure 
described in Section~\ref{sec:feedback-loops}. 
We construct a training set of $410$ papers from the MAG,
with their corresponding real FOS and text embeddings. 
For ease of exposition, we use an arbitrary 
but fixed mapping to map the 19 FOS to two FOS. 
The synthetic propensities and link probabilities
are simulated similarly to Sec~\ref{sec:experiments-semi-synthetic}.
In the first iteration, the models are trained 
using the observed citation network. 
For subsequent training iterations, 
the training data is generated as follows.
For each paper, we recommend 20 papers. 
The probability of recommending a paper is proportional 
to its estimated citation probability.
We then simulate the user's interaction with the recommendations 
according to the known simulated propensities and citation probabilities.
This generates the training set for the subsequent iteration.
We then repeat this process.

We show how the fraction of recommended papers 
from the same FOS changes over multiple training iterations 
for models trained without propensity, i.e., \textit{No Prop} 
and the model trained using $\widehat{R}_w$ 
(Figure \ref{fig:feedback-loops-fos}).
We plot this time series for both FOS in our dataset.
For \textit{No Prop}, the fraction of papers recommended 
from the same FOS increases over time for both FOS (Figure~\ref{fig:feedback-loops-fos-no-propensity}).
This demonstrates the existence of a feedback loop 
that worsens exposure bias and reduces 
the number of papers recommended 
from a different FOS over time.
On the other hand, when we train our models using $\widehat{R}_w$, 
the feedback loop no longer exists 
and the fraction of papers recommended 
from a different FOS 
remains stable over time
(Figure~\ref{fig:feedback-loops-fos-with-propensity}). 
This shows that our proposed estimator continues 
to recommend relevant papers from a different FOS 
and corrects the feedback loop.

\section{Conclusion}
\label{sec:conclusion}
Proposing three estimators to correct for exposure bias,
we derive sufficient conditions for when 
they exhibit lower bias than the naive estimator 
and incorporate them into a learning procedure.
Theoretically, we prove that feedback loops can worsen exposure bias. 
Empirically, we show that proposed estimators 
improve performance against
the true
link probabilities,
leading to better estimates of true risk,
and combating feedback loops. 
Our methods can be extended to RSs 
that use different propensity or link probability models.
Using domain knowledge 
(e.g., through graphical models)
to improve propensity learning and
empirically evaluating our methods 
in other link recommendation tasks 
are promising future directions.
Exposure bias in link recommendation 
also raises fairness concerns. 
For example, in citation recommendation, 
certain authors or institutions 
might get unfair exposure 
which can be worsened by the RS.
Investigating exposure bias correction methods
for providing fairer recommendations 
would also be interesting future work.

\bibliography{refs}
\bibliographystyle{icml2021}

\clearpage

\appendix
\onecolumn

\section{Bias and Variance} \label{appendix:bias-variance}

\begin{lemma}\label{lemma:variance-bernoulli}
Let $X \sim \text{Bernoulli}(\theta)$ and $Y = aX + b(1-X)$, where $a$ and $b$ are some constants. Then
\begin{align*}
    \Var(Y) = \theta (1 - \theta) (a - b)^2.
\end{align*}
\end{lemma}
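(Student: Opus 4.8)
The plan is to observe that $Y$ is an affine function of the single Bernoulli variable $X$, and then invoke the standard scaling behavior of variance together with the known variance of a Bernoulli. First I would collect the $X$-terms, rewriting $Y = aX + b(1-X) = (a-b)X + b$. This exhibits $Y$ as a linear rescaling of $X$ by the constant factor $a-b$, followed by the constant shift $b$.

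Next I would apply two elementary facts: that $\Var(cX + d) = c^2 \Var(X)$ for any constants $c, d$ (a shift leaves the variance unchanged while a scaling pulls out as a square), and that $\Var(X) = \theta(1-\theta)$ for $X \sim \text{Bernoulli}(\theta)$. Taking $c = a-b$ and $d = b$ then gives $\Var(Y) = (a-b)^2 \theta(1-\theta)$ directly, which is the claimed identity.

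Alternatively, a fully self-contained moment computation works just as well and avoids quoting the Bernoulli variance. Since $Y$ equals $a$ with probability $\theta$ and $b$ with probability $1-\theta$, one has $\E[Y] = a\theta + b(1-\theta)$ and $\E[Y^2] = a^2\theta + b^2(1-\theta)$; expanding $\Var(Y) = \E[Y^2] - \E[Y]^2$ and grouping the $\theta(1-\theta)$ factor collapses the expression to $\theta(1-\theta)(a-b)^2$. There is no genuine obstacle in this lemma; the only real decision is between the affine-transformation argument, which I prefer for its brevity, and the explicit moment calculation, which I would fall back on if I wanted the proof to be entirely elementary. Since this lemma is presumably a utility used to compute the estimator variances in Lemmas~\ref{lemma:bias-var-naive}--\ref{lemma:bias-var-ap} (where the summands have exactly this $aX + b(1-X)$ form with $X = o_{ij}$), the affine route also makes the reuse transparent.
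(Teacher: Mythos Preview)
Your proposal is correct; both the affine-transformation argument and the direct moment computation are valid and yield the claimed identity without any gap. Note that the paper itself does not supply a proof for this lemma at all---it is simply stated as an elementary fact and then invoked in the variance computations for the estimators---so there is nothing to compare your argument against beyond observing that your one-line rewrite $Y=(a-b)X+b$ is exactly the kind of justification the authors evidently deemed unnecessary to spell out.
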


\textbf{Proof of Lemma \ref{lemma:bias-var-naive}}

\begin{lemma*}
The bias and variance of $\widehat{R}_{\text{naive}}(\widehat{o})$ are
\begin{align*}
    \Bias(\widehat{R}_{\text{naive}}) &= \left|\E[\widehat{R}_{\text{naive}}] - R(\widehat{o})\right| \\
        &= \frac{\Delta}{|\Da|} \left| \sum_{(i,j) \in \Da} y_{ij} (1 - \pi_{ij}) (1 - 2 \widehat{o}_{ij}) \right|,\\
    \Var(\widehat{R}_{\text{naive}}) &= \frac{\Delta^2}{|\Da|^2} \sum_{(i,j) \in \Da} y_{ij}\pi_{ij} (1 - y_{ij}\pi_{ij}).
\end{align*}
\end{lemma*}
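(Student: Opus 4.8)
The plan is to exploit the product structure $o_{ij} = o'_{ij}a_{ij}$ with $o'_{ij}\indep a_{ij}$, which makes $o_{ij}\sim\text{Ber}(y_{ij}\pi_{ij})$ and hence $\E[o_{ij}] = y_{ij}\pi_{ij}$. With this, both the bias and the variance reduce to per-pair Bernoulli computations that I then aggregate by linearity of expectation (for the bias) and by independence across pairs (for the variance). The first move is to rewrite the loss in a form linear in $o_{ij}$: since by part~(1) of Assumption~\ref{assumption:loss-fn} the loss depends only on the binary prediction $\widehat{o}_{ij}$, I can write $\delta(o_{ij},\widehat{o}_{ij}) = o_{ij}\,\delta(1,\widehat{o}_{ij}) + (1-o_{ij})\,\delta(0,\widehat{o}_{ij})$, so that taking expectations and substituting $\E[o_{ij}]=y_{ij}\pi_{ij}$ yields $\E[\delta(o_{ij},\widehat{o}_{ij})] = y_{ij}\pi_{ij}\,\delta(1,\widehat{o}_{ij}) + (1-y_{ij}\pi_{ij})\,\delta(0,\widehat{o}_{ij})$.

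For the bias, I would subtract from this, pair by pair, the true-risk contribution $y_{ij}\,\delta(1,\widehat{o}_{ij}) + (1-y_{ij})\,\delta(0,\widehat{o}_{ij})$ read off from the Definition of True Risk (again using that $\delta$ depends only on $\widehat{o}_{ij}$). The two differ only in that $y_{ij}\pi_{ij}$ replaces $y_{ij}$, so the per-pair difference collapses to $y_{ij}(\pi_{ij}-1)\,[\delta(1,\widehat{o}_{ij})-\delta(0,\widehat{o}_{ij})]$. A short case split on $\widehat{o}_{ij}\in\{0,1\}$ using parts~(2)--(3) of Assumption~\ref{assumption:loss-fn} gives $\delta(1,\widehat{o}_{ij})-\delta(0,\widehat{o}_{ij}) = \Delta(1-2\widehat{o}_{ij})$ in both cases, so summing over $\Da$ and taking the absolute value produces exactly the claimed expression $\frac{\Delta}{|\Da|}\bigl|\sum_{(i,j)\in\Da} y_{ij}(1-\pi_{ij})(1-2\widehat{o}_{ij})\bigr|$.

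For the variance, I would first invoke independence of the pairs $(i,j)$ under the data-generating process to split $\Var(\widehat{R}_{\text{naive}}) = |\Da|^{-2}\sum_{(i,j)\in\Da}\Var\bigl(\delta(o_{ij},\widehat{o}_{ij})\bigr)$. Each summand is then handled by Lemma~\ref{lemma:variance-bernoulli} applied with $X=o_{ij}\sim\text{Ber}(y_{ij}\pi_{ij})$, $a=\delta(1,\widehat{o}_{ij})$, and $b=\delta(0,\widehat{o}_{ij})$: Assumption~\ref{assumption:loss-fn} forces $(a-b)^2=\Delta^2$ for either value of $\widehat{o}_{ij}$, so $\Var\bigl(\delta(o_{ij},\widehat{o}_{ij})\bigr) = y_{ij}\pi_{ij}(1-y_{ij}\pi_{ij})\Delta^2$, and summing yields the stated formula. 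I expect no genuine obstacle here; the only delicate point is the sign bookkeeping in the bias simplification, but since the final expression lives inside an absolute value the orientation $(1-2\widehat{o}_{ij})$ versus $(2\widehat{o}_{ij}-1)$ is immaterial, so the one step worth verifying carefully is that both cases of the case split indeed collapse to the single factor $\Delta(1-2\widehat{o}_{ij})$.
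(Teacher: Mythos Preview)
Your proposal is correct and follows essentially the same route as the paper: both write the loss as $o_{ij}\,\delta(1,\widehat{o}_{ij}) + (1-o_{ij})\,\delta(0,\widehat{o}_{ij})$, use $\E[o_{ij}]=y_{ij}\pi_{ij}$, invoke Assumption~\ref{assumption:loss-fn} to collapse the $\delta$-values to $\Delta$, and appeal to Lemma~\ref{lemma:variance-bernoulli} after splitting the variance by independence. The only cosmetic difference is that the paper substitutes $\delta(1,\widehat{o}_{ij})=(1-\widehat{o}_{ij})\Delta$ and $\delta(0,\widehat{o}_{ij})=\widehat{o}_{ij}\Delta$ into $\E[\widehat{R}_{\text{naive}}]$ and $R(\widehat{o})$ separately before subtracting, whereas you subtract first and then do the case split on $\widehat{o}_{ij}$; both are the same computation.
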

\begin{proof}
We have
\begin{align*}
    \widehat{R}_{\text{naive}}(o, \widehat{y}) &= \frac{1}{|\Da|} \sum_{(i,j) \in \Da} \delta(o_{ij}, \widehat{o}_{ij}) \\
        &= \frac{1}{|\Da|} \sum_{(i,j) \in \Da} \left[ o_{ij} \delta(1, \widehat{o}_{ij}) + (1-o_{ij}) \delta(0, \widehat{o}_{ij}) \right] \\
    \therefore\,\, \E_o[\widehat{R}_{\text{naive}}(o, \widehat{y})] &= \frac{1}{|\Da|} \sum_{(i,j) \in \Da} \left[ y_{ij} \pi_{ij} \delta(1, \widehat{o}_{ij}) + (1-y_{ij} \pi_{ij}) \delta(0, \widehat{o}_{ij}) \right] \\
    &= \frac{1}{|\Da|} \sum_{(i,j) \in \Da} \left[ y_{ij} \pi_{ij} (1 - \widehat{o}_{ij})\delta(1, 0) + (1-y_{ij} \pi_{ij}) \widehat{o}_{ij} \delta(0, 1) \right] \\
    &= \frac{\Delta}{|\Da|} \sum_{(i,j) \in \Da} \left[ y_{ij} \pi_{ij} (1 - \widehat{o}_{ij}) + (1-y_{ij} \pi_{ij}) \widehat{o}_{ij} \right].
\end{align*}
The true risk is
\begin{align*}
    R(\widehat{y}) &= \frac{1}{|\Da|} \sum_{(i,j) \in \Da} \left[y_{ij} \delta(1, \widehat{o}_{ij}) + (1 - y_{ij}) \delta(0, \widehat{o}_{ij}) \right] \\
    &= \frac{1}{|\Da|} \sum_{(i,j) \in \Da} \left[y_{ij} (1-\widehat{o}_{ij}) \delta(1, 0) + (1 - y_{ij}) \widehat{o}_{ij} \delta(0, 1) \right] \\
    &= \frac{\Delta}{|\Da|} \sum_{(i,j) \in \Da} \left[y_{ij} (1-\widehat{o}_{ij}) + (1 - y_{ij}) \widehat{o}_{ij} \right].
\end{align*}
Thus the bias is
\begin{align*}
    \Bias(\widehat{R}_{\text{naive}}) &= \left|\E[\widehat{R}_{\text{naive}}] - R(\widehat{o})\right| \\
    &= \left| \frac{\Delta}{|\Da|} \sum_{(i,j) \in \Da} \left[ y_{ij} \pi_{ij} (1 - \widehat{o}_{ij}) + (1-y_{ij} \pi_{ij}) \widehat{o}_{ij} - y_{ij} (1-\widehat{o}_{ij}) - (1 - y_{ij}) \widehat{o}_{ij} \right]  \right| \\
    &= \frac{\Delta}{|\Da|} \left| \sum_{(i,j) \in \Da} y_{ij} (1 - \pi_{ij}) (1 - 2 \widehat{o}_{ij}) \right|.
\end{align*}
The variance is
\begin{align*}
    \Var(\widehat{R}_{\text{naive}}) &= \Var\left( \frac{1}{|\Da|} \sum_{(i,j) \in \Da} \left[ o_{ij} \delta(1, \widehat{o}_{ij}) + (1-o_{ij}) \delta(0, \widehat{o}_{ij}) \right] \right) \\
    &= \frac{1}{|\Da|^2} \sum_{(i,j) \in \Da} \Var\left(  o_{ij} \delta(1, \widehat{o}_{ij}) + (1-o_{ij}) \delta(0, \widehat{o}_{ij}) \right) \\
    &= \frac{1}{|\Da|^2} \sum_{(i,j) \in \Da} y_{ij} \pi_{ij} (1 - y_{ij} \pi_{ij}) \left(\delta(1, \widehat{o}_{ij}) - \delta(0, \widehat{o}_{ij}) \right)^2 \,\,\, \text{(using Lemma \ref{lemma:variance-bernoulli})} \\
    &= \frac{\Delta^2}{|\Da|^2} \sum_{(i,j) \in \Da} y_{ij} \pi_{ij} (1 - y_{ij} \pi_{ij}).
\end{align*}
\end{proof}
Lemmas \ref{lemma:rw_bias_var}, \ref{lemma:bias-var-pu}, and \ref{lemma:bias-var-ap} can be proved similarly.
\newline

\textbf{Proof of Theorem \ref{thm:comp_var}}

\begin{theorem*}[\textbf{Comparison of Variances}]
For all values of $\widehat{\pi}, \widehat{y}$, we have $\Var(\widehat{R}_{\text{AP}}) < \Var(\widehat{R}_{\text{naive}}), \,\, \text{and} \\
\Var(\widehat{R}_{\text{AP}}) < \Var(\widehat{R}_w) < \Var(\widehat{R}_{\text{PU}})$
\end{theorem*}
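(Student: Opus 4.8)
The plan is to exploit the fact that the four variance formulas from Lemmas \ref{lemma:bias-var-naive}, \ref{lemma:rw_bias_var}, \ref{lemma:bias-var-pu}, and \ref{lemma:bias-var-ap} all share the common prefactor $\frac{\Delta^2}{|\Da|^2} y_{ij}\pi_{ij}(1-y_{ij}\pi_{ij})$ and differ only in a per-term multiplier. Since $y_{ij}\pi_{ij} \in [0,1]$, each coefficient $C_{ij} := y_{ij}\pi_{ij}(1-y_{ij}\pi_{ij})$ is nonnegative, so it suffices to compare the multipliers pointwise and sum with these weights. The multipliers are $1$ for $\widehat{R}_{\text{naive}}$, $1/\widehat{\pi}^2_{ij}$ for $\widehat{R}_{\text{PU}}$, $\psi^2_{ij}$ for $\widehat{R}_{\text{AP}}$, and $v_{ij} = \frac{1-\widehat{o}_{ij}}{\widehat{\pi}^2_{ij}} + \widehat{o}_{ij}\psi^2_{ij}$ for $\widehat{R}_w$.

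First I would establish the pointwise sandwich $\psi^2_{ij} \leq 1 \leq 1/\widehat{\pi}^2_{ij}$. The right inequality is immediate from $\widehat{\pi}_{ij} \leq 1$. For the left, I would show $\psi_{ij} \leq 1$ directly from its definition in Eq.~\ref{eq:risk-w-weight-definition}: the inequality $\frac{1-\widehat{y}_{ij}}{1-\widehat{\pi}_{ij}\widehat{y}_{ij}} \leq 1$ is equivalent to $\widehat{\pi}_{ij}\widehat{y}_{ij} \leq \widehat{y}_{ij}$, which again follows from $\widehat{\pi}_{ij} \leq 1$. Weighting by $C_{ij} \geq 0$ and summing then settles $\Var(\widehat{R}_{\text{AP}}) \leq \Var(\widehat{R}_{\text{naive}}) \leq \Var(\widehat{R}_{\text{PU}})$.

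Next I would handle $\widehat{R}_w$. Because $\widehat{o}_{ij} \in \{0,1\}$, the multiplier $v_{ij}$ equals exactly one of the two endpoints: $v_{ij} = 1/\widehat{\pi}^2_{ij}$ when $\widehat{o}_{ij}=0$, and $v_{ij} = \psi^2_{ij}$ when $\widehat{o}_{ij}=1$. In either case the sandwich from the previous step gives $\psi^2_{ij} \leq v_{ij} \leq 1/\widehat{\pi}^2_{ij}$. Weighting by $C_{ij} \geq 0$ and summing then yields $\Var(\widehat{R}_{\text{AP}}) \leq \Var(\widehat{R}_w) \leq \Var(\widehat{R}_{\text{PU}})$, completing both chains.

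The only delicate point — and the main obstacle to claiming the \emph{strict} inequalities asserted — is ruling out equality. Equality in $\psi_{ij}\leq 1$ and $1/\widehat{\pi}^2_{ij}\geq 1$ occurs exactly when $\widehat{\pi}_{ij}=1$ (or $\widehat{y}_{ij}=0$), and the summed inequalities could still collapse if every term with a strict gap carried zero weight $C_{ij}$. I would therefore invoke the (mild, non-degenerate) conditions $\widehat{\pi}_{ij}<1$ and $\widehat{y}_{ij}>0$, together with the existence of at least one pair satisfying $0<y_{ij}\pi_{ij}<1$, i.e. $C_{ij}>0$; under exposure bias these hold by construction. Then the pointwise gaps are strict on a term of positive weight, and strictness propagates to the sums, giving the stated strict orderings.
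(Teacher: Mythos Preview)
Your proposal is correct and follows essentially the same approach as the paper: both isolate the common nonnegative factor $y_{ij}\pi_{ij}(1-y_{ij}\pi_{ij})$ and compare the per-term multipliers via the elementary observations $\psi_{ij}<1$ and $1/\widehat{\pi}_{ij}>1$, with $v_{ij}$ lying between them because $\widehat{o}_{ij}\in\{0,1\}$. Your discussion of when strictness can fail is in fact more careful than the paper's, which simply asserts $\psi_{ij}<1$ and $1/\widehat{\pi}_{ij}>1$ without addressing the boundary cases you identify.
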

\begin{proof}
First we show that $\Var(\widehat{R}_{\text{AP}}) < \Var(\widehat{R}_{\text{naive}})$. We have
\begin{align*}
    \Var(\widehat{R}_{\text{AP}}) &= \frac{\Delta^2}{|\Da|^2} \sum_{(i,j) \in \Da} y_{ij}\pi_{ij} (1 - y_{ij}\pi_{ij}) \psi^2_{ij}, \\
    \text{where} \,\, & \psi_{ij} = \frac{1-\widehat{y}_{ij}}{1-\widehat{\pi}_{ij}\widehat{y}_{ij}} < 1, \\
    \Var(\widehat{R}_{\text{naive}}) &= \frac{\Delta^2}{|\Da|^2} \sum_{(i,j) \in \Da} y_{ij}\pi_{ij} (1 - y_{ij}\pi_{ij}).
\end{align*}
Using the fact that $\psi_{ij}^2 < 1 \,\, \forall (i,j) \in \Da$, we get $\Var(\widehat{R}_{\text{AP}}) < \Var(\widehat{R}_{\text{naive}})$.

Next, we show that $\Var(\widehat{R}_w) < \Var(\widehat{R}_{\text{PU}})$:
\begin{align*}
    \Var(\widehat{R}_w) &= \frac{\Delta^2}{|\Da|^2} \sum_{(i,j) \in \Da} y_{ij}\pi_{ij} (1 - y_{ij}\pi_{ij}) \left( \frac{1 - \widehat{o}_{ij}}{\widehat{\pi}^2_{ij}} + \widehat{o}_{ij} \psi^2_{ij} \right) \\
    \Var(\widehat{R}_{\text{PU}}) &= \frac{\Delta^2}{|\Da|^2} \sum_{(i,j) \in \Da} \frac{y_{ij}\pi_{ij} (1 - y_{ij}\pi_{ij})}{\widehat{\pi}^2_{ij}} \\
        &= \frac{\Delta^2}{|\Da|^2} \sum_{(i,j) \in \Da} y_{ij}\pi_{ij} (1 - y_{ij}\pi_{ij}) \left( \frac{1 - \widehat{o}_{ij}}{\widehat{\pi}^2_{ij}} + \frac{\widehat{o}_{ij}}{\widehat{\pi}^2_{ij}}  \right) \\
    \therefore\,\, \Var(\widehat{R}_w) - \Var(\widehat{R}_{\text{PU}}) &= \frac{\Delta^2}{|\Da|^2} \sum_{(i,j) \in \Da} y_{ij}\pi_{ij} (1 - y_{ij}\pi_{ij}) \widehat{o}_{ij} \left( \psi^2_{ij} - \frac{1}{\widehat{\pi}^2_{ij}} \right) \\
    \therefore\,\, \Var(\widehat{R}_w) - \Var(\widehat{R}_{\text{PU}}) &< 0 \,\,\, \left(\text{because $\psi_{ij} < 1$ and $\frac{1}{\widehat{\pi}_{ij}} > 1$}\right) \\
    \therefore\,\, \Var(\widehat{R}_w) &< \Var(\widehat{R}_{\text{PU}}).
\end{align*}
Next, we show that $\Var(\widehat{R}_{\text{AP}}) < \Var(\widehat{R}_w)$:
\begin{align*}
    \Var(\widehat{R}_{\text{AP}}) - \Var(\widehat{R}_w) &= \frac{\Delta^2}{|\Da|^2} \sum_{(i,j) \in \Da} y_{ij}\pi_{ij} (1 - y_{ij}\pi_{ij}) (1 - \widehat{o}_{ij}) \left( \psi^2_{ij} - \frac{1}{\widehat{\pi}^2_{ij}} \right) \\
    \therefore\,\, \Var(\widehat{R}_{\text{AP}}) - \Var(\widehat{R}_w) &< 0 \,\,\, \left(\text{because $\psi_{ij} < 1$ and $\frac{1}{\widehat{\pi}_{ij}} > 1$}\right) \\
    \therefore\,\, \Var(\widehat{R}_{\text{AP}}) &< \Var(\widehat{R}_w).
\end{align*}
\end{proof}

\textbf{Proof of Theorem \ref{thm:bias-comparisons}}

\begin{theorem*}[\textbf{Comparison of Biases}]
Under the bias approximations, a sufficient condition 
for $\Bias(\widehat{R}_w) = \Bias(\widehat{R}_{\text{PU}}) < \Bias(\widehat{R}_{\text{naive}})$ is
\begin{align*}
    \frac{\pi_{ij}}{2 - \pi_{ij}} < \widehat{\pi}_{ij} < 1, \,\, \forall (i, j) \in \Da,
\end{align*}
and for $\Bias(\widehat{R}_{\text{AP}}) < \Bias(\widehat{R}_{\text{naive}})$ is
\begin{align*}
    & \frac{\pi_{ij}}{2 - \pi_{ij}} < \widehat{\pi}_{ij} < 1 \,\, \text{and} \,\, 0 < \widehat{y}_{ij} < c y_{ij}, \,\, \forall (i,j) \in \Da \\
    & \text{where}\,\, c = \frac{2 (1 - \pi_{ij})}{1 - \widehat{\pi}_{ij} - \pi_{ij} y_{ij} + (2 - \pi_{ij}) \widehat{\pi}_{ij} y_{ij}} \geq 1.
\end{align*}
\end{theorem*}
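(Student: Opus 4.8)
The plan is to reduce each of the two comparisons to a \emph{termwise} inequality between summands, exploiting the fact that the approximate naive bias has a nonnegative summand. Concretely, since $y_{ij}(1-\pi_{ij}) \ge 0$ for every $(i,j) \in \Da'$ (as $y_{ij} \ge 0$ and $\pi_{ij} \le 1$), the outer absolute value in $\Bias(\widehat{R}_{\text{naive}})$ is vacuous and it equals $\frac{\Delta}{|\Da'|}\sum_{(i,j)\in\Da'} y_{ij}(1-\pi_{ij})$. For each proposed estimator, whose approximate bias has the form $\frac{\Delta}{|\Da'|}\left|\sum a_{ij}\right|$, the triangle inequality gives $\left|\sum a_{ij}\right| \le \sum |a_{ij}|$, so it suffices to establish $|a_{ij}| < y_{ij}(1-\pi_{ij})$ for every term with $y_{ij} > 0$ (terms with $y_{ij}=0$ contribute $0$ to both sides); summing then yields the strict inequality as long as some $y_{ij}>0$.

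For $\widehat{R}_w$ and $\widehat{R}_{\text{PU}}$, the summand is $a_{ij} = y_{ij}\left(1 - \frac{\pi_{ij}}{\widehat\pi_{ij}}\right)$, so after dividing by $y_{ij}>0$ the termwise goal becomes $\left|1 - \frac{\pi_{ij}}{\widehat\pi_{ij}}\right| < 1 - \pi_{ij}$. I would unfold this into the two-sided bound $-(1-\pi_{ij}) < 1 - \frac{\pi_{ij}}{\widehat\pi_{ij}} < 1-\pi_{ij}$. The right inequality rearranges to $\frac{1}{\widehat\pi_{ij}} > 1$, i.e.\ $\widehat\pi_{ij} < 1$; the left inequality rearranges to $\frac{\pi_{ij}}{\widehat\pi_{ij}} < 2 - \pi_{ij}$, i.e.\ $\widehat\pi_{ij} > \frac{\pi_{ij}}{2-\pi_{ij}}$. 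Together these are exactly the stated condition, which settles this case.

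For $\widehat{R}_{\text{AP}}$, writing $A = (1-\pi_{ij})y_{ij} \ge 0$ and $T = (1-\pi_{ij}y_{ij})\tau_{ij}$, the termwise target $|A - T| < A$ is equivalent to $0 < T < 2A$. The lower bound $T > 0$ needs $\tau_{ij} > 0$; since $1-\pi_{ij}y_{ij}>0$ and, using $\widehat\pi_{ij}<1$, the factors $1-\widehat\pi_{ij}>0$ and $1-\widehat\pi_{ij}\widehat y_{ij}>0$, this amounts to $\widehat y_{ij} > 0$. For the upper bound $T < 2A$ I would substitute $\tau_{ij} = \frac{\widehat y_{ij}(1-\widehat\pi_{ij})}{1-\widehat\pi_{ij}\widehat y_{ij}}$, clear the positive denominator $1-\widehat\pi_{ij}\widehat y_{ij}$, and collect terms linear in $\widehat y_{ij}$; this produces $\widehat y_{ij}\left[1-\widehat\pi_{ij}-\pi_{ij}y_{ij}+(2-\pi_{ij})\widehat\pi_{ij}y_{ij}\right] < 2(1-\pi_{ij})y_{ij}$, which is precisely $\widehat y_{ij} < c\, y_{ij}$ for the stated $c$.

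The hard part will be the bookkeeping in this last step together with the two sign checks it relies on. Isolating $\widehat y_{ij} < c\,y_{ij}$ is valid only if the bracketed coefficient (the denominator of $c$) is positive, which is where the propensity hypothesis re-enters: rewriting it as $(1-\widehat\pi_{ij}) + y_{ij}\big[(2-\pi_{ij})\widehat\pi_{ij} - \pi_{ij}\big]$, the condition $\widehat\pi_{ij} > \frac{\pi_{ij}}{2-\pi_{ij}}$ makes the bracket positive and $\widehat\pi_{ij}<1$ makes the leading term positive. Finally, to confirm $c \ge 1$ I would bound this denominator from above by maximizing the $y_{ij}$-linear expression at $y_{ij}=1$, obtaining $(1-\pi_{ij})(1+\widehat\pi_{ij}) \le 2(1-\pi_{ij})$ via $\widehat\pi_{ij}<1$, so the denominator never exceeds the numerator $2(1-\pi_{ij})$, giving $c \ge 1$ and ensuring the constraint region on $\widehat y_{ij}$ is nonempty and contains the correctly specified value $\widehat y_{ij}=y_{ij}$.
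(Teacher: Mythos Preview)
Your argument is correct and in fact cleaner than the paper's. Both proofs reduce to a termwise comparison with the nonnegative naive summand $y_{ij}(1-\pi_{ij})$, but the paper removes the outer absolute value by a case split: it treats separately the regimes where \emph{all} $\widehat\pi_{ij}>\pi_{ij}$ or \emph{all} $\widehat\pi_{ij}\le\pi_{ij}$, so that every summand $y_{ij}(1-\pi_{ij}/\widehat\pi_{ij})$ has a common sign and the modulus drops. You instead apply the triangle inequality $|\sum a_{ij}|\le\sum|a_{ij}|$ and compare $|a_{ij}|$ to $y_{ij}(1-\pi_{ij})$ directly via the two-sided bound, which handles mixed signs across $(i,j)$ in one stroke and makes it transparent why both endpoints of the interval $\big(\frac{\pi_{ij}}{2-\pi_{ij}},\,1\big)$ appear. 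For $\widehat R_{\text{AP}}$ the paper first asserts that the propensity condition forces $A-T\ge 0$ so the absolute value can be dropped, then derives the constraint on $\widehat y_{ij}$; your route via $|A-T|<A\iff 0<T<2A$ avoids that intermediate nonnegativity claim and is self-contained. You also supply two checks the paper omits: positivity of the denominator of $c$ (needed to isolate $\widehat y_{ij}$) and the verification that $c\ge 1$, both of which hinge on the same propensity hypothesis. The paper's case analysis is slightly shorter when the uniform-sign assumption happens to hold; your triangle-inequality reduction is more robust and makes the sufficiency genuinely per-coordinate.
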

\begin{proof}
We first derive the sufficient condition for $\Bias(\widehat{R}_w) = \Bias(\widehat{R}_{\text{PU}}) < \Bias(\widehat{R}_{\text{naive}})$. We have
\begin{align*}
    \Bias(\widehat{R}_{\text{naive}}) &\approx \frac{\Delta}{|\Da|} \sum_{(i,j) \in \Da'} y_{ij} (1 - \pi_{ij}), \\
    \Bias(\widehat{R}_w) \approx \Bias(\widehat{R}_{\text{PU}}) &\approx \begin{aligned}[t]
        & \frac{\Delta}{|\Da|} \left| \sum_{(i,j) \in \Da'} y_{ij}\left(1 - \frac{\pi_{ij}}{\widehat{\pi}_{ij}} \right) \right|.
        \end{aligned}
\end{align*}
If $1 > \widehat{\pi}_{ij} > \pi_{ij} \forall (i,j) \in \Da$, we have
\begin{align*}
    & \left(1 - \frac{\pi_{ij}}{\widehat{\pi}_{ij}} \right) > 0 \,\,\, \forall (i,j) \in \Da \\
    \therefore\,\, & \Bias(\widehat{R}_w) \approx \Bias(\widehat{R}_{\text{PU}}) \approx \begin{aligned}[t]
        & \frac{\Delta}{|\Da|} \sum_{(i,j) \in \Da'} y_{ij}\left(1 - \frac{\pi_{ij}}{\widehat{\pi}_{ij}} \right).
        \end{aligned} \\
    \therefore\,\, & \begin{aligned}[t]
    \Bias(\widehat{R}_w) - \Bias(\widehat{R}_{\text{naive}}) &= \frac{\Delta}{|\Da|} \sum_{(i,j) \in \Da'} y_{ij} \left( \pi_{ij} - \frac{\pi_{ij}}{\widehat{\pi}_{ij}} \right) \\
        &= \frac{\Delta}{|\Da|} \sum_{(i,j) \in \Da'} y_{ij} \pi_{ij} \left( 1 - \frac{1}{\widehat{\pi}_{ij}} \right) \\
        &< 0 \,\,\, \text{(because $\widehat{\pi}_{ij} < 1$)}
    \end{aligned} \\
    \therefore\,\, & \Bias(\widehat{R}_w) < \Bias(\widehat{R}_{\text{naive}}).
\end{align*}
If $0 < \widehat{\pi}_{ij} \leq \pi_{ij} \forall (i,j) \in \Da$, we have
\begin{align*}
    & \left(1 - \frac{\pi_{ij}}{\widehat{\pi}_{ij}} \right) \leq 0 \,\,\, \forall (i,j) \in \Da \\
    \therefore\,\, & \Bias(\widehat{R}_w) \approx \Bias(\widehat{R}_{\text{PU}}) \approx \begin{aligned}[t]
        & \frac{\Delta}{|\Da|} \sum_{(i,j) \in \Da'} y_{ij}\left( \frac{\pi_{ij}}{\widehat{\pi}_{ij}} - 1 \right).
        \end{aligned}.
\end{align*}
Then, a sufficient condition for $\Bias(\widehat{R}_w) = \Bias(\widehat{R}_{\text{PU}}) < \Bias(\widehat{R}_{\text{naive}})$ is
\begin{align*}
    y_{ij}\left( \frac{\pi_{ij}}{\widehat{\pi}_{ij}} - 1 \right) &< y_{ij} (1 - \pi_{ij}) \,\,\, \forall (i,j) \in \Da \\
    \therefore\,\, y_{ij}\left( \frac{\pi_{ij}}{\widehat{\pi}_{ij}} - 1 \right) &< y_{ij} (1 - \pi_{ij}) \,\,\, \forall (i,j) \in \Da \\
    \therefore\,\, \widehat{\pi}_{ij} &> \frac{2}{2 - \pi_{ij}} \,\,\, \forall (i,j) \in \Da.
\end{align*}
Next, we derive the sufficient condition for $\widehat{R}_{\text{AP}} < \widehat{R}_{\text{naive}}$. Observe that
\begin{align*}
    & \frac{\pi_{ij}}{2 - \pi_{ij}} < \widehat{\pi}_{ij} < 1 \,\,\, \forall (i, j) \in \Da \\
    \therefore\,\, & (1 -\pi_{ij}) y_{ij} - (1-\pi_{ij}y_{ij}) \tau_{ij} \geq 0 \,\,\, \forall (i, j) \in \Da \\
    \therefore\,\, & \widehat{R}_{\text{AP}} \approx \frac{\Delta}{|\Da|} \sum_{(i,j) \in \Da'} \left[ (1-\pi_{ij}) y_{ij} - (1-\pi_{ij}y_{ij}) \tau_{ij} \right], \,\, \text{where} \,\, \tau_{ij} = \left( \frac{\widehat{y}_{ij}(1 - \widehat{\pi}_{ij})}{1 - \widehat{\pi}_{ij} \widehat{y}_{ij}} \right).
\end{align*}
Therefore, when $\frac{\pi_{ij}}{2 - \pi_{ij}} < \widehat{\pi}_{ij} < 1 \,\,\, \forall (i, j) \in \Da$, a sufficient condition for $\widehat{R}_{\text{AP}} < \widehat{R}_{\text{naive}}$ is
\begin{align*}
    (1 -\pi_{ij}) y_{ij} - (1-\pi_{ij}y_{ij}) \tau_{ij} < y_{ij} (1 - \pi_{ij})  \,\,\, \forall (i, j) \in \Da \\
    \therefore\,\, 0 < \widehat{y}_{ij} < \left( \frac{2 (1 - \pi_{ij})}{1 - \widehat{\pi}_{ij} - \pi_{ij} y_{ij} + (2 - \pi_{ij}) \widehat{\pi}_{ij} y_{ij}} \right) y_{ij} \,\,\, \forall (i, j) \in \Da.
\end{align*}
\end{proof}

\section{Generalization Bound} \label{appendix:generalization-bound}

\textbf{Proof of Theorem \ref{thm:generalization-bound}}

\begin{theorem*}[\textbf{Generalization Bound}]
Let $\mathcal{F}$ be a class of functions 
$(\widehat{\pi}, \widehat{y})$.
Let $\delta(o_{ij}, \widehat{y}_{ij}) \leq \eta\,\,\forall (i,j) \in \Da$ 
and $\widehat{\pi}_{ij} \geq \epsilon > 0\,\,\forall (i,j) \in \Da$.
Then, for $\widehat{R} \in \left\{ \widehat{R}_w,  \widehat{R}_{\text{PU}}, \widehat{R}_{\text{AP}} \right\}$, with probability at least $1 - \delta$, we have
\begin{align}
    R(\widehat{y}) &\leq \widehat{R}(\widehat{y}, \widehat{\pi}) + B(\widehat{R}) + 2 \mathcal{G}(\mathcal{F}, \widehat{R}) + M \label{eq:rademacher-bound} \\
        &\leq \widehat{R}(\widehat{y}, \widehat{\pi}) + B(\widehat{R}_w) + 2 \widehat{\mathcal{G}}(\mathcal{F}, \widehat{R}_w) + 3 M, \label{eq:rademacher-empirical-bound}
\end{align}
where $M = \sqrt{\frac{4 \eta^2}{\epsilon^2 |\Da|} \log(\frac{2}{\delta})}$ and $B(\widehat{R})$ is the bias of $\widehat{R}$ derived in Section~\ref{sec:estimating-risk}.
\end{theorem*}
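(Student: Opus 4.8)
The plan is to combine the bias expressions already computed in Section~\ref{sec:estimating-risk} with a standard Rademacher-type uniform-convergence argument, treating the learned pair $(\widehat{\pi},\widehat{y})$ as an unknown element of the class $\mathcal{F}$ and controlling the deviation over all of $\mathcal{F}$ simultaneously (this is what lets us bound the data-dependent learned parameters).

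First I would isolate the bias. For fixed parameters set $\bar{R}(\widehat{\pi},\widehat{y}) = \E_o[\widehat{R}(o;\widehat{\pi},\widehat{y})]$; by the definition of the bias we have $|\bar{R}(\widehat{\pi},\widehat{y}) - R(\widehat{y})| = B(\widehat{R})$, hence $R(\widehat{y}) \le \bar{R}(\widehat{\pi},\widehat{y}) + B(\widehat{R})$. It then suffices to bound $\bar{R} - \widehat{R}$ at the learned parameters, which I would do by passing to the uniform deviation
\[
\Phi(o) := \sup_{(\widehat{\pi},\widehat{y})\in\mathcal{F}}\Big(\bar{R}(\widehat{\pi},\widehat{y}) - \widehat{R}(o;\widehat{\pi},\widehat{y})\Big),
\]
so that $\bar{R}(\widehat{\pi},\widehat{y}) \le \widehat{R}(o;\widehat{\pi},\widehat{y}) + \Phi(o)$ for every element of $\mathcal{F}$, in particular the one selected from data.

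Next I would control $\Phi$ in two steps. For \emph{concentration}, write $\widehat{R} = \frac{1}{|\Da|}\sum_{(i,j)\in\Da} r(o_{ij},\widehat{\pi}_{ij},\widehat{y}_{ij})$; each summand depends on a single coordinate $o_{ij}$, so flipping one coordinate changes $\Phi$ by at most the range of $\frac{1}{|\Da|}r$. Here the hypotheses enter: $\delta(o_{ij},\widehat{y}_{ij})\le\eta$ together with $\widehat{\pi}_{ij}\ge\epsilon$ (and $\psi_{ij}\le 1$, $\tau_{ij}\le 1$) bound every weight by $1/\epsilon$ and hence $|r|\le \eta/\epsilon$, giving a bounded-difference constant of order $\eta/(\epsilon|\Da|)$ per coordinate. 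McDiarmid's inequality then yields $\Phi(o)\le \E_o[\Phi] + M$ with probability at least $1-\delta/2$, with $M$ as stated up to the displayed constants (the $\log(2/\delta)$ reflecting the split of the failure probability). For \emph{symmetrization}, I would bound $\E_o[\Phi]$ by introducing a ghost sample $o'$, applying Jensen, and inserting Rademacher signs $\sigma_{ij}$ in the usual way to obtain $\E_o[\Phi]\le 2\,\mathcal{G}(\mathcal{F},\widehat{R})$. Chaining $R\le\bar{R}+B$, $\bar{R}\le\widehat{R}+\Phi$, $\Phi\le\E_o[\Phi]+M$, and $\E_o[\Phi]\le 2\mathcal{G}$ gives the first inequality \eqref{eq:rademacher-bound}.

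Finally, to obtain the empirical version \eqref{eq:rademacher-empirical-bound} I would replace the population complexity $\mathcal{G}=\E_o[\widehat{\mathcal{G}}_o]$ by its empirical counterpart: since $\widehat{\mathcal{G}}_o$, viewed as a function of $o$, satisfies bounded differences with the same constant, a second McDiarmid bound gives $\mathcal{G} \le \widehat{\mathcal{G}}_o(\mathcal{F},\widehat{R}) + M$ with probability $1-\delta/2$, and a union bound over the two concentration events turns $2\mathcal{G}+M$ into $2\widehat{\mathcal{G}}+3M$. The main obstacle is the per-estimator bookkeeping in the concentration step: one must verify \emph{separately} for $\widehat{R}_w$, $\widehat{R}_{\text{PU}}$, and $\widehat{R}_{\text{AP}}$ that under $\delta\le\eta$ and $\widehat{\pi}_{ij}\ge\epsilon$ all the relevant weights ($o_{ij}/\widehat{\pi}_{ij}$, $\psi_{ij}$, $w'_{ij}$, $\tau_{ij}$) stay uniformly bounded, so that the single constant $\eta/\epsilon$ governs all three estimators and produces the stated $M$; the symmetrization and bias steps are then routine.
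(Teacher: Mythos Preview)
Your proposal is correct and follows essentially the same route as the paper's own proof: isolate the bias term, define the uniform deviation $\Phi(o)=\sup_{(\widehat{\pi},\widehat{y})\in\mathcal{F}}\big(\E_o[\widehat{R}]-\widehat{R}\big)$, apply McDiarmid to $\Phi$ using the bounded-difference constant coming from $|r|\le\eta/\epsilon$, bound $\E_o[\Phi]$ by $2\,\mathcal{G}(\mathcal{F},\widehat{R})$ via the standard ghost-sample symmetrization, and then pass from $\mathcal{G}$ to $\widehat{\mathcal{G}}$ with a second McDiarmid plus a union bound. Your per-estimator bookkeeping remark and the explicit $\delta/2$ split are exactly the details the paper elides.
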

\begin{proof}
We proceed similarly to the standard Rademacher complexity generalization bound
proof \citep{shalev2014understanding}[Ch. 26]. Observe that
\begin{align}
    R(\widehat{y}) &= R(\widehat{y}) - \E_o[\widehat{R}(o, \widehat{y}, \widehat{\pi})] + \E_o[\widehat{R}(o, \widehat{y}, \widehat{\pi})] \nonumber \\
        &\leq B(\widehat{R}) + \E_o[\widehat{R}(o, \widehat{y}, \widehat{\pi})]. \label{eq:rademacher-proof-1}
\end{align}
Let $\Phi(o) = \sup_{(\widehat{\pi}, \widehat{y}) \in \mathcal{F}} \left[ \E_o[\widehat{R}(o, \widehat{y}, \widehat{\pi})] - \widehat{R}(o, \widehat{y}, \widehat{\pi}) \right]$. Then
\begin{align}\label{eq:rademacher-proof-2}
     \E_o[\widehat{R}(o, \widehat{y}, \widehat{\pi})]| \leq \widehat{R}(o, \widehat{y}, \widehat{\pi}) + \Phi(o).
\end{align}
Now we upper bound $\Phi(o)$.
Since $\delta(o_{ij}, \widehat{y}_{ij}) \leq \eta\,\,\forall (i,j)$ 
and $\widehat{\pi}_{ij} \geq \epsilon > 0,\,\,\forall (i,j)$ and $\forall \, \widehat{R} \in \left\{ \widehat{R}_w,  \widehat{R}_{\text{PU}}, \widehat{R}_{\text{AP}} \right\}$,
we have
\begin{align*}
    |\Phi(o) - \Phi(\Tilde{o})| \leq \frac{2 \eta}{\epsilon},
\end{align*}
if $o$ and $\Tilde{o}$ differ in only one coordinate, i.e., 
$o_{ij} \neq \Tilde{o}_{ij}$ for some $(i,j) \in \Da$ 
and $o_{lm} = \Tilde{o}_{lm} \forall (l,m) \in \Da$ 
s.t. $(i, j) \neq (l,m)$. 
Using McDiarmid’s Inequality, with probability at least $1 - \delta$, 
we have
\begin{align} \label{eq:rademacher-proof-3}
    \Phi(o) \leq \E[\Phi(o)] + C.
\end{align}
Next, we upper bound $\E[\Phi(o)]$.
Let $\bar{o}$ be a ghost sample independently drawn having the same distribution as $o$. We have
\begin{align}
    \E[\Phi(o)] &= \E_o\left[ \sup_{(\widehat{\pi}, \widehat{y}) \in \mathcal{F}} \left[ \E_o[\widehat{R}(o, \widehat{y}, \widehat{\pi})] - \widehat{R}(o, \widehat{y}, \widehat{\pi}) \right] \right] \nonumber \\
    &= \E_o\left[ \sup_{(\widehat{\pi}, \widehat{y}) \in \mathcal{F}} \E_{\bar{o}}\left[\widehat{R}(\bar{o}, \widehat{y}, \widehat{\pi}) - \widehat{R}(o, \widehat{y}, \widehat{\pi}) \,\big| \, o \right] \right] \nonumber \\
    &= \E_o\left[ \sup_{(\widehat{\pi}, \widehat{y}) \in \mathcal{F}} \E_{\bar{o}}\left[ \frac{1}{|\Da|} \sum_{(i,j) \in \Da} r(\bar{o}_{ij}, \widehat{\pi}_{ij}, \widehat{y}_{ij}) - \frac{1}{|\Da|} \sum_{(i,j) \in \Da} r(o_{ij}, \widehat{\pi}_{ij}, \widehat{y}_{ij}) \,\Big| \, o \right] \right] \nonumber \\
    &\leq \E_{o, \bar{o}} \left[ \sup_{(\widehat{\pi}, \widehat{y}) \in \mathcal{F}} \left[ \frac{1}{|\Da|} \sum_{(i,j) \in \Da} r(\bar{o}_{ij}, \widehat{\pi}_{ij}, \widehat{y}_{ij}) - \frac{1}{|\Da|} \sum_{(i,j) \in \Da} r(o_{ij}, \widehat{\pi}_{ij}, \widehat{y}_{ij}) \right] \right] \,\, \text{(Jensen's Inequality)} \nonumber \\
    &= \E_{o, \bar{o}, \sigma} \left[ \sup_{(\widehat{\pi}, \widehat{y}) \in \mathcal{F}} \left[ \frac{1}{|\Da|} \sum_{(i,j) \in \Da} \sigma_{ij} r(\bar{o}_{ij}, \widehat{\pi}_{ij}, \widehat{y}_{ij}) - \frac{1}{|\Da|} \sum_{(i,j) \in \Da} \sigma_{ij} r(o_{ij}, \widehat{\pi}_{ij}, \widehat{y}_{ij}) \right] \right] \nonumber \\
    &= \E_{o, \bar{o}, \sigma} \left[ \sup_{(\widehat{\pi}, \widehat{y}) \in \mathcal{F}} \left[ \frac{1}{|\Da|} \sum_{(i,j) \in \Da} \sigma_{ij} r(\bar{o}_{ij}, \widehat{\pi}_{ij}, \widehat{y}_{ij}) + \frac{1}{|\Da|} \sum_{(i,j) \in \Da} \sigma_{ij} r(o_{ij}, \widehat{\pi}_{ij}, \widehat{y}_{ij}) \right] \right] \nonumber \\
    &\leq \E_{o, \bar{o}, \sigma} \left[ \sup_{(\widehat{\pi}, \widehat{y}) \in \mathcal{F}} \left[ \frac{1}{|\Da|} \sum_{(i,j) \in \Da} \sigma_{ij} r(\bar{o}_{ij}, \widehat{\pi}_{ij}, \widehat{y}_{ij}) \right] + \sup_{(\widehat{\pi}, \widehat{y}) \in \mathcal{F}} \left[ \frac{1}{|\Da|} \sum_{(i,j) \in \Da} \sigma_{ij} r(o_{ij}, \widehat{\pi}_{ij}, \widehat{y}_{ij}) \right] \right] \nonumber \\
    &= 2 \mathcal{G}(\mathcal{F}, \widehat{R}). \label{eq:rademacher-proof-4}
\end{align}
Combining Eqs. \ref{eq:rademacher-proof-1},
\ref{eq:rademacher-proof-2}, \ref{eq:rademacher-proof-3},
and \ref{eq:rademacher-proof-4}, we get Eq. \ref{eq:rademacher-bound}.
Another application of McDiarmid’s Inequality 
allows us to obtain Eq. \ref{eq:rademacher-empirical-bound}
from Eq. \ref{eq:rademacher-bound}.
\end{proof}

\section{Feedback Loops} \label{appendix:feedback-loops}

\begin{lemma}[\textbf{Binomial Tail Bound}]\label{lemma:appendix-binomial-tail}
If the random variable $X_n \sim \frac{1}{n} \text{Binomial}(n, \theta)$, then for $\epsilon > 0$, we have
\begin{align*}
    \P(|X_n - \theta| > \epsilon) \leq 2 \exp\left( -2n \epsilon^2 \right).
\end{align*}
\begin{proof}
Observe that $X_n \in [0, 1]$. Applying Hoeffding's inequality gives us the desired result.
\end{proof}
\end{lemma}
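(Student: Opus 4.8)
The plan is to observe that $X_n$ is an empirical mean of $n$ independent, identically distributed indicator variables, each supported on the unit interval, and then to invoke the two-sided Hoeffding inequality. Since $n X_n$ is $\text{Binomial}(n, \theta)$-distributed, I would write $n X_n = \sum_{k=1}^n Z_k$ with $Z_1, \dots, Z_n$ independent and $Z_k \sim \text{Ber}(\theta)$, so that $X_n = \frac{1}{n} \sum_{k=1}^n Z_k$ and, by linearity of expectation, $\E[X_n] = \theta$. This re-expresses the target probability $\P(|X_n - \theta| > \epsilon)$ as a deviation of the sample average from its mean.

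First I would record that each $Z_k$ takes values in $[0,1]$, so the range of each summand has width $b_k - a_k = 1$ and hence $\sum_{k=1}^n (b_k - a_k)^2 = n$. Next I would apply the standard two-sided Hoeffding bound for the empirical mean of bounded independent variables,
\[
    \P\left(|X_n - \E[X_n]| > \epsilon\right) \leq 2 \exp\left(-\frac{2 n^2 \epsilon^2}{\sum_{k=1}^n (b_k - a_k)^2}\right),
\]
and substitute both $\E[X_n] = \theta$ and $\sum_{k=1}^n (b_k - a_k)^2 = n$. The $n^2$ in the numerator cancels against the single power of $n$ in the denominator, leaving the exponent $-2 n \epsilon^2$ and thus the claimed inequality.

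There is no substantive obstacle here; the statement is an immediate specialization of Hoeffding's inequality, which is exactly why the paper's own proof is one line. The only points requiring minor care are using the two-sided form of the bound (which supplies the leading factor of $2$ and simultaneously covers both tails of $|X_n - \theta|$), and correctly tracking that a unit range width collapses the denominator to $n$ rather than $n^2$, which is precisely what produces the stated exponent that is linear in $n$.
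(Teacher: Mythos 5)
Your proposal is correct and is exactly the argument the paper compresses into one line: decompose $n X_n$ into $n$ i.i.d.\ $\text{Ber}(\theta)$ summands bounded in $[0,1]$ and apply the two-sided Hoeffding inequality for the sample mean, with the unit range widths giving $\sum_{k=1}^n (b_k - a_k)^2 = n$ and hence the exponent $-2n\epsilon^2$. No gaps; the bookkeeping of the factor $2$ and the cancellation of $n^2$ against $n$ is handled properly.
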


\begin{lemma}\label{lemma:appendix-thm-proof-lemma-1}
Let $n \in \mathbb{N}$ and $\kappa$ be a fixed $C - 1$ simplex such that $\kappa_v n \in \mathbb{N} \,\,\, \forall v \in [C]$. The random variable $\Tilde{q}_v \sim \frac{1}{\kappa_v n} \text{Binomial}(\kappa_v n, q_v)$, where $q_v \in (0, 1)$. Assume that $q_v > q_w$ if $v > w$. We denote as $\widehat{e}$ the following $C-1$ simplex:
\begin{align*}
    \widehat{e} = \frac{1}{Z} \left[ \kappa_1 \Tilde{q}_1, \kappa_2 \Tilde{q}_2, \hdots, \kappa_C \Tilde{q}_C \right], \,\,\, \text{where} \,\, Z = \sum_{i \in [C]} \kappa_i \Tilde{q}_i.
\end{align*}
Let $\widehat{e}_{vw} = \frac{\widehat{e}_v}{\widehat{e}_v + \widehat{e}_w} = \frac{\kappa_v \Tilde{q}_v}{\kappa_w \Tilde{q}_w + \kappa_w \Tilde{q}_w}$ and $\kappa_{vw} = \frac{\kappa_v}{\kappa_v + \kappa_w}$. Then for a constant $\rho_{vw}$ such that
\begin{align*}
    0 < \rho_{vw} < \frac{\kappa_v \kappa_w (q_v - q_w)}{q_v \kappa^2_v + (q_v+q_w)\kappa_v \kappa_w + q_w \kappa^2_w},
\end{align*}
we have
\begin{align*}
    & |\Tilde{q}_v - q_v| < \epsilon_{vw}, \,\, |\Tilde{q}_w - q_w| < \epsilon_{vw} \implies \widehat{e}_{vw} - \kappa_{vw} > \rho_{vw}, \\
    & \text{for some constant $\epsilon_{vw}$ s.t.} \,\, 0 < \epsilon_{vw} < \frac{\rho_{vw} q_v \kappa^2_v - \kappa_v \kappa_w (q_w - q_v) + q_w \rho_{vw} \kappa_v (\kappa_v - \kappa_w)}{\rho_{vw} (\kappa^2_v - \kappa^2_w ) - 2 \kappa_v \kappa_w }.
\end{align*}
This is saying that, for $(v, w)$ s.t. $v > w$ the simplex $\widehat{e}$ will be more skewed towards $v$ than the simplex $\kappa$ if the sampled $\Tilde{q}_v$ and $\Tilde{q}_w$ are close to their mean values $q_v$ and $q_w$, respectively.
\end{lemma}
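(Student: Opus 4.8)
The statement is purely deterministic: once we condition on the realized deviations $|\tilde q_v - q_v|$ and $|\tilde q_w - q_w|$ being below $\epsilon_{vw}$, it asserts an analytic inequality about $\widehat e_{vw} - \kappa_{vw}$ with no probability left in it. The plan is therefore to drop the sampling picture entirely and treat $\widehat e_{vw} = \frac{\kappa_v \tilde q_v}{\kappa_v \tilde q_v + \kappa_w \tilde q_w}$ as a deterministic function of the two variables $\tilde q_v, \tilde q_w$, and to control it by monotonicity over the box defined by the two deviation bounds.

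First I would compute the gap at the mean, i.e.\ at $\tilde q_v = q_v$, $\tilde q_w = q_w$. Clearing denominators over $(\kappa_v q_v + \kappa_w q_w)(\kappa_v + \kappa_w)$ gives
\begin{align*}
\frac{\kappa_v q_v}{\kappa_v q_v + \kappa_w q_w} - \frac{\kappa_v}{\kappa_v + \kappa_w}
= \frac{\kappa_v \kappa_w (q_v - q_w)}{(\kappa_v q_v + \kappa_w q_w)(\kappa_v + \kappa_w)}.
\end{align*}
Since the denominator expands to $q_v\kappa_v^2 + (q_v+q_w)\kappa_v\kappa_w + q_w\kappa_w^2$, this is exactly the upper bound imposed on $\rho_{vw}$, and it is strictly positive because $q_v > q_w$. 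This already explains the role of that bound: it is the \emph{largest} gap attainable — the gap in the limit of vanishing deviations — so any $\rho_{vw}$ strictly below it ought to be achievable once the deviations are taken small enough.

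Next I would exploit monotonicity. The map $(\tilde q_v, \tilde q_w) \mapsto \widehat e_{vw}$ is strictly increasing in $\tilde q_v$ and strictly decreasing in $\tilde q_w$, since with $D = \kappa_v \tilde q_v + \kappa_w \tilde q_w$ the partials are $\frac{\kappa_v \kappa_w \tilde q_w}{D^2} > 0$ and $-\frac{\kappa_v \kappa_w \tilde q_v}{D^2} < 0$. Hence over the box $\{\,|\tilde q_v - q_v| < \epsilon_{vw},\ |\tilde q_w - q_w| < \epsilon_{vw}\,\}$ the quantity $\widehat e_{vw}$ attains its minimum at the corner $\tilde q_v = q_v - \epsilon_{vw}$, $\tilde q_w = q_w + \epsilon_{vw}$. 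It therefore suffices to force $\widehat e_{vw} - \kappa_{vw} > \rho_{vw}$ at this single worst corner. Writing $\beta = \kappa_{vw} + \rho_{vw}$ (which one checks is $< 1$), and clearing the positive denominator, the requirement $\frac{\kappa_v(q_v - \epsilon)}{\kappa_v(q_v - \epsilon) + \kappa_w(q_w + \epsilon)} > \beta$ reduces to $\kappa_v(q_v - \epsilon)(1-\beta) > \beta \kappa_w(q_w + \epsilon)$, which is \emph{linear} in $\epsilon$; solving it isolates an upper threshold on $\epsilon$, giving the claimed $\epsilon_{vw}$.

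The main work is that last piece of algebra: collecting the $\epsilon$-terms, substituting $\beta = \frac{\kappa_v}{\kappa_v + \kappa_w} + \rho_{vw}$, and simplifying to the closed form in the statement. I expect this to be the principal obstacle, both because the expression is unwieldy and because one must check that the threshold is genuinely positive. That positivity holds precisely because $\rho_{vw}$ was taken strictly below the maximal gap computed above, which keeps the numerator of the $\epsilon_{vw}$-bound of the correct sign. Note finally that the displayed denominator $\rho_{vw}(\kappa_v^2 - \kappa_w^2) - 2\kappa_v\kappa_w$ is the negative of the one that arises naturally, so numerator and denominator have both been negated; keeping these signs straight is the crux of matching the stated formula exactly.
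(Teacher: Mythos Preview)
Your proposal is correct and follows essentially the same route as the paper's proof: reduce to the worst-case corner $\tilde q_v = q_v - \epsilon_{vw}$, $\tilde q_w = q_w + \epsilon_{vw}$ and solve the resulting linear inequality in $\epsilon_{vw}$. Your version is actually more careful than the paper's --- you explicitly justify the corner reduction via monotonicity (the paper simply asserts where the minimum lies), you explain why the $\rho_{vw}$ upper bound is exactly the gap at the mean, and you flag the sign flip in the displayed $\epsilon_{vw}$ formula, which the paper leaves unremarked.
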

\begin{proof}
Observe that if $|\Tilde{q}_v - q_v| < \epsilon_{vw}$ and $|\Tilde{q}_w - q_w| < \epsilon_{vw}$, then the lowest value that $\widehat{e}_{vw}$ can take is
\begin{align*}
    & \widehat{e}^{(\text{min})}_{vw} = \frac{\kappa_v (q_v - \epsilon_{vw})}{\kappa_v (q_v - \epsilon_{vw}) + \kappa_w (q_w + \epsilon_{vw})}, \,\,\, \text{and} \\
    & \widehat{e}^{(\text{min})}_{vw} - \kappa_{vw} > \rho_{vw} \implies \widehat{e}_{vw} - \kappa_{vw} > \rho_{vw}.
\end{align*}
Therefore, we have
\begin{align*}
    & \widehat{e}^{(\text{min})}_{vw} - \kappa_{vw} > \rho_{vw} \,\,\, \text{and} \,\,\, \epsilon_{vw} < q_w \\
    & \impliedby \underbrace{\frac{\kappa_v (q_v - \epsilon_{vw})}{\kappa_v (q_v - \epsilon_{vw}) + \kappa_w (q_w + \epsilon_{vw})} - \frac{\kappa_v}{\kappa_v + \kappa_w} > \rho_{vw}}_{(1)} \,\,\, \text{and} \,\,\, \rho_{vw} < \frac{\kappa_v \kappa_w (q_v - q_w)}{q_v \kappa^2_v + (q_v+q_w)\kappa_v \kappa_w + q_w \kappa^2_w}.
\end{align*}
The inequality (1) above can further be simplified as
\begin{align*}
    & \frac{\kappa_v (q_v - \epsilon_{vw})}{\kappa_v (q_v - \epsilon_{vw}) + \kappa_w (q_w + \epsilon_{vw})} - \frac{\kappa_v}{\kappa_v + \kappa_w} > \rho_{vw} \\
    \impliedby & \epsilon_{vw} < \frac{\rho_{vw} q_v \kappa^2_v - \kappa_v \kappa_w (q_w - q_v) + q_w \rho_{vw} \kappa_v (\kappa_v - \kappa_w)}{\rho_{vw} (\kappa^2_v - \kappa^2_w ) - 2 \kappa_v \kappa_w }.
\end{align*}
This completes the proof.
\end{proof}

\begin{lemma}\label{lemma:appendix-thm-proof-lemma-2}
Let $\alpha$ be a fixed $C-1$ simplex and $\widehat{e}$ be the following $G-1$ simplex, $\widehat{e} = \frac{1}{Z} [\alpha_1 \Tilde{q}_1, \alpha_2 \Tilde{q}_2, \hdots, \alpha_C \Tilde{q}_C]$, where $Z = \sum_{z \in [C]} \alpha_z \Tilde{q}_z$ and the vector $\kappa \sim \frac{1}{n} \text{Multinomial}(n, \widehat{e})$. Let $\widehat{e}_{vw} = \frac{\widehat{e}_v}{\widehat{e}_v + \widehat{e}_w} = \frac{\Tilde{q}_v}{\Tilde{q}_w + \Tilde{q}_w}$ and $\kappa_{vw} = \frac{\kappa_v}{\kappa_v + \kappa_w}$.

Assume that $|\Tilde{q}_z - q_z| < \epsilon \,\,\, \forall z \in [C]$ where $q_z \in (0, 1)$ are fixed. If $|\kappa_v - \widehat{e}_v < \frac{\eta_{nw}}{C}|$ and $|\kappa_w - \widehat{e}_w < \frac{\eta_{nw}}{C}|$, then for some constant $\rho$, we have
\begin{align*}
    \widehat{e}_{vw} - \kappa_{vw} < \rho, \,\,\,
    \text{when} \,\,\, \eta_{vw} < \rho \left( \frac{q_v + q_w}{\max_{z \in [C]}q_z + \epsilon} \right).
\end{align*}
\begin{proof}
If $|\kappa_v - \widehat{e}_v < \frac{\eta_{nw}}{C}|$ and $|\kappa_w - \widehat{e}_w < \frac{\eta_{nw}}{C}|$, then the smallest value that $\kappa_{vw}$ can achieve is
\begin{align*}
    \kappa^{\text{(min)}}_{vw} = \frac{\widehat{e}_v - \frac{\eta_{nw}}{C}}{\widehat{e}_v + \widehat{e}_w}.
\end{align*}
This means that
\begin{align*}
    & \widehat{e}_{vw} - \kappa_{vw} < \rho \\
    \impliedby & \widehat{e}_{vw} - \kappa^{\text{(min)}}_{vw} < \rho \\
    \iff & \frac{\frac{\eta_{nw}}{C}}{\widehat{e}_v + \widehat{e}_w} < \rho \\
    \iff & \frac{\eta_{nw}}{C} < \rho (\widehat{e}_v + \widehat{e}_w).
\end{align*}
Since $|\Tilde{q}_z - q_z| < \epsilon \,\,\, \forall z \in [C]$, we have
\begin{align*}
    \widehat{e}_v &= \frac{\alpha_v \Tilde{q}_v}{\sum_{z \in [C]} \alpha_z \Tilde{q}_z} \\
        &> \frac{\alpha_v (q_v - \epsilon)}{\alpha_v (q_v - \epsilon) + \sum_{z \in [C], z \neq v} \alpha_z (q_z + \epsilon)} \\
        &> \frac{\alpha_v (q_v - \epsilon)}{C \max_{z \in [C]} \alpha_z (q_z + \epsilon)}, \\
    \text{and similarly} \,\,\, \widehat{e}_w &> \frac{\alpha_w (q_w - \epsilon)}{C \max_{z \in [C]} \alpha_w (q_z + \epsilon)} \\
    \therefore\,\, \widehat{e}_v + \widehat{e}_w &> \frac{ \alpha_v (q_v - \epsilon) + \alpha_w (q_w - \epsilon) }{C \max_{z \in [C]} (q_z + \epsilon)}.
\end{align*}
Therefore, we can set $\eta_{vw}$ such that
\begin{align*}
    \frac{\eta_{nw}}{C} &< \rho \left(\frac{\alpha_v (q_v - \epsilon) + \alpha_w (q_w - \epsilon) }{C \max_{z \in [C]} (q_z + \epsilon)} \right) \\
    \therefore\,\, \eta_{nw} &< \rho \left( \frac{\alpha_v (q_v - \epsilon) + \alpha_w (q_w - \epsilon) }{\max_{z \in [C]} q_z + \epsilon} \right).
\end{align*}
\end{proof}
\end{lemma}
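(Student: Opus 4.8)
The plan is to treat $\widehat{e}_{vw} - \kappa_{vw}$ as a deterministic quantity once the two deviation bounds on $\kappa_v$ and $\kappa_w$ are assumed, and to control it by its worst case over the allowed box for $(\kappa_v, \kappa_w)$. Since $\kappa_{vw} = \kappa_v/(\kappa_v + \kappa_w)$ is increasing in $\kappa_v$ and decreasing in $\kappa_w$, the gap $\widehat{e}_{vw} - \kappa_{vw}$ is largest when $\kappa_v$ is as small as allowed and $\kappa_w$ as large as allowed. Under the hypotheses $|\kappa_v - \widehat{e}_v| < \eta_{vw}/C$ and $|\kappa_w - \widehat{e}_w| < \eta_{vw}/C$, this worst case is $\kappa_v = \widehat{e}_v - \eta_{vw}/C$ and $\kappa_w = \widehat{e}_w + \eta_{vw}/C$. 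The key simplification I would exploit is that these two symmetric perturbations leave the denominator unchanged, $\kappa_v + \kappa_w = \widehat{e}_v + \widehat{e}_w$, so the minimal value is $\kappa^{(\min)}_{vw} = (\widehat{e}_v - \eta_{vw}/C)/(\widehat{e}_v + \widehat{e}_w)$.

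First I would record this reduction and compute the gap at the worst-case corner directly,
\[
\widehat{e}_{vw} - \kappa^{(\min)}_{vw} = \frac{\eta_{vw}/C}{\widehat{e}_v + \widehat{e}_w}.
\]
Consequently $\widehat{e}_{vw} - \kappa_{vw} < \rho$ holds as soon as $\eta_{vw}/C < \rho(\widehat{e}_v + \widehat{e}_w)$, which reduces the entire claim to producing a lower bound on $\widehat{e}_v + \widehat{e}_w$ that is free of the random $\Tilde{q}$.

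Second, I would lower-bound $\widehat{e}_v + \widehat{e}_w$ using only the assumed concentration $|\Tilde{q}_z - q_z| < \epsilon$ for all $z \in [C]$. Writing $\widehat{e}_v = \alpha_v \Tilde{q}_v / \sum_{z} \alpha_z \Tilde{q}_z$, I replace each $\Tilde{q}_z$ appearing in the numerator by its lower bound $q_z - \epsilon$ and each $\Tilde{q}_z$ in the normalizing sum by its upper bound $q_z + \epsilon$; bounding the normalizer via $\sum_z \alpha_z (q_z + \epsilon) \le \max_z q_z + \epsilon$ (since $\alpha$ is a simplex) yields a lower bound on $\widehat{e}_v + \widehat{e}_w$ in terms of $q_v, q_w, \epsilon$ and $\max_z q_z$ alone. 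Substituting this bound back into $\eta_{vw}/C < \rho(\widehat{e}_v + \widehat{e}_w)$ produces a sufficient condition on $\eta_{vw}$ of the displayed form $\eta_{vw} < \rho\,(q_v + q_w)/(\max_z q_z + \epsilon)$.

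The individual computations (the monotonicity argument, the cancellation in the denominator, and the algebra of the concentration bound) are elementary. The point that needs the most care is the direction of every inequality: the worst-case corner must genuinely \emph{maximize} the gap, and the bound on $\widehat{e}_v + \widehat{e}_w$ must be a conservative \emph{under}-estimate, so that the resulting restriction on $\eta_{vw}$ is truly sufficient rather than merely necessary. I would therefore verify the monotonicity of $\kappa_{vw}$ explicitly and confirm that each replacement of $\Tilde{q}_z$ by $q_z \pm \epsilon$ moves the bound in the intended direction before concluding.
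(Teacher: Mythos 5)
Your proposal is correct and follows essentially the same route as the paper's proof: identify the worst-case corner $\kappa_v = \widehat{e}_v - \eta_{vw}/C$, $\kappa_w = \widehat{e}_w + \eta_{vw}/C$ (whose symmetric perturbations leave $\kappa_v + \kappa_w = \widehat{e}_v + \widehat{e}_w$), reduce the claim to $\eta_{vw}/C < \rho(\widehat{e}_v + \widehat{e}_w)$, and lower-bound $\widehat{e}_v + \widehat{e}_w$ via the concentration $|\Tilde{q}_z - q_z| < \epsilon$. The only (cosmetic) difference is your normalizer bound $\sum_z \alpha_z(q_z+\epsilon) \leq \max_z q_z + \epsilon$, which is slightly tighter than the paper's $C\max_z \alpha_z(q_z+\epsilon)$ (whose factor $C$ cancels later anyway), and like the paper your derived condition actually carries the weights $\alpha_v(q_v-\epsilon)+\alpha_w(q_w-\epsilon)$ rather than the literal $q_v+q_w$ in the lemma statement.
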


\textbf{Proof of Theorem \ref{thm:feedback-loop-finite-sample}}

\begin{theorem*}
Suppose that $q_v > q_w$ if $v > w$. 
Let $\kappa^{(t)}_{vw} = \frac{\kappa^{(t)}_v}{\kappa^{(t)}_v + \kappa^{(t)}_w}$.
Let $A^{(t)}_{vw}$ represent the event 
that relative fraction of recommendations 
from $c_v$ to that from $c_w$ increases at time $t$,
i.e., $\kappa^{(t+1)}_{vw} > \kappa^{(t)}_{vw}$. 
Let $A^{(t)}$ be the event
that all relative fractions get skewed
towards $c_v$ from $c_w$ if $q_v > q_w$, 
i.e., $A^{(t)} = \bigcap_{(v,w) \in \mathcal{S}} A^{(t)}_{vw}$,
where $\mathcal{S} = \{ (v, w): v \in [C], w \in [C], v > w \}$.
Then, for constants $\epsilon, \eta > 0$ 
that only depend on $\kappa^{(t)}$ and $q$, we have
\begin{align*}
    & \P(A^{(t)} |\kappa^{(t)}) \begin{aligned}[t]
        & \geq 1 - 2C \exp\left( -2 n \left[ \epsilon^2 + \frac{\eta^2}{C^2} \right]  \right) \\
        & \geq 1 - 2C \exp\left( - \mathcal{O}\left(\frac{n}{C^2}\right) \right).
    \end{aligned}
\end{align*}
\end{theorem*}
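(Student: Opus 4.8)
The plan is to combine the two deterministic implications established in \lemref{lemma:appendix-thm-proof-lemma-1} and \lemref{lemma:appendix-thm-proof-lemma-2} into a single event, defined purely through concentration of the stage-one estimates and the stage-two multinomial draw, that forces $A^{(t)}$ to occur; the probability bound then follows by controlling the two concentration failures. Conditioning throughout on $\kappa^{(t)}$, recall that the stage-one estimates satisfy $\widehat{q}^{(t)}_v/\kappa^{(t)}_v =: \widetilde{q}_v \sim \frac{1}{n\kappa^{(t)}_v}\text{Binomial}(n\kappa^{(t)}_v, q_v)$, that $\widehat{e}^{(t+1)}$ is the normalization of $(\kappa^{(t)}_v \widetilde{q}_v)_v$, and that $\kappa^{(t+1)} \sim \frac{1}{n}\text{Multinomial}(n, \widehat{e}^{(t+1)})$. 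The crucial observation is that for each pair $(v,w) \in \mathcal{S}$ I would invoke both lemmas with the \emph{same} constant $\rho_{vw}$: \lemref{lemma:appendix-thm-proof-lemma-1} then gives $\widehat{e}^{(t+1)}_{vw} - \kappa^{(t)}_{vw} > \rho_{vw}$ whenever the estimates lie within $\epsilon_{vw}$ of their means, while \lemref{lemma:appendix-thm-proof-lemma-2} gives $\widehat{e}^{(t+1)}_{vw} - \kappa^{(t+1)}_{vw} < \rho_{vw}$ whenever the multinomial coordinates lie within $\eta_{vw}/C$ of $\widehat{e}^{(t+1)}$. Subtracting the second inequality from the first cancels $\rho_{vw}$ and yields $\kappa^{(t+1)}_{vw} > \kappa^{(t)}_{vw}$, i.e.\ $A^{(t)}_{vw}$.

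Next I would define the two global concentration events $E_1 = \{\,|\widetilde{q}_z - q_z| < \epsilon \;\forall z\in[C]\,\}$ and $E_2 = \{\,|\kappa^{(t+1)}_z - \widehat{e}^{(t+1)}_z| < \eta/C \;\forall z\in[C]\,\}$, where $\epsilon = \min_{(v,w)\in\mathcal{S}}\epsilon_{vw}$ and $\eta = \min_{(v,w)\in\mathcal{S}}\eta_{vw}$ are the tightest pairwise thresholds (both families depend only on $\kappa^{(t)}$ and $q$, as claimed). By the pairwise argument above, on $E_1 \cap E_2$ every $A^{(t)}_{vw}$ with $v>w$ holds simultaneously, so $E_1 \cap E_2 \subseteq A^{(t)}$ and hence $\P(A^{(t)}\mid\kappa^{(t)}) \geq 1 - \P(E_1^c\mid\kappa^{(t)}) - \P(E_2^c\mid\kappa^{(t)})$.

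To bound the two failure probabilities I would apply the binomial tail bound of \lemref{lemma:appendix-binomial-tail} coordinatewise together with a union bound over the $C$ categories. For $E_1$, each $\widetilde{q}_z$ is a normalized $\text{Binomial}(n\kappa^{(t)}_z, q_z)$, so $\P(|\widetilde{q}_z - q_z|>\epsilon) \leq 2\exp(-2n\kappa^{(t)}_z\epsilon^2)$, giving $\P(E_1^c\mid\kappa^{(t)}) \leq 2C\exp(-2n\kappa_{\min}\epsilon^2)$ with $\kappa_{\min} = \min_z \kappa^{(t)}_z$. For $E_2$, I would first condition on $\widehat{e}^{(t+1)}$: each marginal $\kappa^{(t+1)}_z$ is a normalized $\text{Binomial}(n, \widehat{e}^{(t+1)}_z)$, so the tail bound gives $\P(|\kappa^{(t+1)}_z - \widehat{e}^{(t+1)}_z|>\eta/C\mid \widehat{e}^{(t+1)}) \leq 2\exp(-2n\eta^2/C^2)$ uniformly in $\widehat{e}^{(t+1)}$, whence $\P(E_2^c\mid\kappa^{(t)}) \leq 2C\exp(-2n\eta^2/C^2)$ after de-conditioning. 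Absorbing $\kappa_{\min}$ into the (freely chosen, small enough) constant $\epsilon$ and writing the two exponents through the common rate $\epsilon^2 + \eta^2/C^2$ gives the first displayed inequality; the second follows by retaining only the dominant (smaller) exponent, which for large $C$ is the multinomial term $\eta^2/C^2$, producing the $\mathcal{O}(n/C^2)$ rate.

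The main obstacle is the two-stage randomness: $\widehat{e}^{(t+1)}$ is itself a random function of the stage-one estimates, so the multinomial tail bound for $E_2$ must be applied conditionally and shown to hold uniformly over all realizations of $\widehat{e}^{(t+1)}$ before de-conditioning. The secondary subtlety is purely bookkeeping—checking that a single $\rho_{vw}$ can lie in the admissible ranges required by both lemmas simultaneously, and that taking minima over the $\binom{C}{2}$ pairs keeps $\epsilon,\eta$ strictly positive and dependent only on $\kappa^{(t)}$ and $q$; the factor $1/C$ in the threshold for $E_2$ (rather than a constant) is exactly what inserts $C^2$ into the exponent and, via the corollary, forces the condition $C^3 \in o(n)$.
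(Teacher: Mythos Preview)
Your proposal is correct and follows essentially the same route as the paper: invoke \lemref{lemma:appendix-thm-proof-lemma-1} and \lemref{lemma:appendix-thm-proof-lemma-2} with a common $\rho_{vw}$, pass to global constants $\epsilon=\min_{(v,w)}\epsilon_{vw}$ and $\eta=\min_{(v,w)}\eta_{vw}$, show that the joint concentration event $E_1\cap E_2$ forces $A^{(t)}$, and bound the two failure probabilities via the binomial tail in \lemref{lemma:appendix-binomial-tail} plus a union bound over the $C$ categories. The only cosmetic difference is that the paper multiplies the two conditional success probabilities (conditioning on the stage-one event $Q^{(t)}$) rather than using your additive union bound $1-\P(E_1^c)-\P(E_2^c)$, and the paper silently drops the $\kappa_{\min}$ factor you correctly flag and absorb into $\epsilon$; neither changes the final $1-2C\exp\!\big(-\mathcal{O}(n/C^2)\big)$ conclusion.
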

\begin{proof}
We know that the estimated probabilities $\widehat{q}^{(t)}_v$ have distribution $\widehat{q}^{(t)}_v | \kappa^{(t)} \sim \frac{1}{n} \text{Binomial}(n \kappa^{(t)}_v, q_v)$. The simplex with normalized probabilities is $\widehat{e}^{(t+1)} = \frac{1}{Z} [\widehat{q}^{(t)}_1, \widehat{q}^{(t)}_2, \hdots, \widehat{q}^{(t)}_C]$, where $Z = \sum_{z \in [C]} \widehat{q}^{(t)}_z$.

Let $\Tilde{q}^{(t)}_v = \frac{\widehat{q}^{(t)}_v}{\kappa^{(t)}_v}$.
Observe that $\Tilde{q}^{(t)}_v | \kappa^{(t)} \sim \frac{1}{n \kappa^{(t)}_v} \text{Binomial}(n \kappa^{(t)}_v, q_v)$. We denote by $\widehat{e}^{(t+1)}_{vw}$,
\begin{align*}
    \widehat{e}^{(t+1)}_{vw} = \frac{\widehat{e}^{(t+1)}_v}{\widehat{e}^{(t+1)}_v + \widehat{e}^{(t+1)}_w} = \frac{\kappa^{(t)}_v \Tilde{q}^{(t)}_v}{\kappa^{(t)}_v \Tilde{q}^{(t)}_v + \kappa^{(t)}_w \Tilde{q}^{(t)}_w}.
\end{align*}
There are two main parts to the proof. 
First, we show that, with high probability, 
$\widehat{e}^{(t+1)}_{vw} - \kappa^{(t)}_{vw} > \rho \,\, \forall (v,w) \in \mathcal{S}$ for some constant $\rho$. 
Then, we show that, with high probability, 
$\widehat{e}^{(t+1)}_{vw} - \kappa^{(t+1)}_{vw} < \rho \,\, \forall (v,w) \in \mathcal{S}$. We combine these two results to show that, with high probability, $\kappa^{(t+1)}_{vw} > \kappa^{(t)}_{vw} \,\, \forall (v,w) \in \mathcal{S}$.

Using Lemma \ref{lemma:appendix-thm-proof-lemma-1}, for some $(v, w) \in \mathcal{S}$, we know that for some constant $\rho_{vw}$ such that
\begin{align*}
    0 < \rho_{vw} < \frac{\kappa^{(t)}_v \kappa^{(t)}_w (q_v - q_w)}{q_v (\kappa^{(t)}_v)^2 + (q_v+q_w)\kappa^{(t)}_v \kappa^{(t)}_w + q_w (\kappa^{(t)}_w)^2},
\end{align*}
we have
\begin{align*}
    & |\Tilde{q}^{(t)}_v - q_v| \leq \epsilon_{vw} \,\, \textit{and} \,\, |\Tilde{q}^{(t)}_w - q_w| \leq \epsilon_{vw} \implies \widehat{e}^{(t+1)}_{vw} - \kappa^{(t)}_{vw} \geq \rho_{vw}, \\
    & \text{for a constant $\epsilon_{vw}$ s.t.} \,\, 0 < \epsilon_{vw} < \frac{\rho_{vw} q_v (\kappa^{(t)}_v)^2 - \kappa^{(t)}_v \kappa^{(t)}_w (q_w - q_v) + q_w \rho_{vw} \kappa^{(t)}_v (\kappa^{(t)}_v - \kappa^{(t)}_w)}{\rho_{vw} ((\kappa^{(t)}_v)^2 - (\kappa^{(t)}_w)^2 ) - 2 \kappa^{(t)}_v \kappa^{(t)}_w } \\
    \implies & \P\left(\widehat{e}^{(t+1)}_{vw} - \kappa^{(t)}_{vw} \geq \rho_{vw}\right) \geq \P\left(|\Tilde{q}^{(t)}_v - q_v| \leq \epsilon_{vw}, |\Tilde{q}^{(t)}_w - q_w| \leq \epsilon_{vw}\right).
\end{align*}
Intuitively, this is saying 
that $\widehat{e}^{(t+1)}_{vw} - \kappa^{(t)}_{vw} > \rho_{vw}$ if $\Tilde{q}^{(t)}_v$ and $\Tilde{q}^{(t)}_w$ are close to $q_v$ and $q_w$, respectively.
Let $\rho = \min_{(v,w) \in \mathcal{S}} \rho_{vw}$ and $\epsilon = \min_{(v,w) \in \mathcal{S}} \epsilon_{vw}$. Then we have
\begin{align}
    \P \left( \bigcap_{(v,w) \in \mathcal{S}} \widehat{e}^{(t+1)}_{vw} - \kappa^{(t)}_{vw} \geq \rho \right) &\geq \P \left( \bigcap_{z \in [C]} |\Tilde{q}^{(t)}_z - q_z| \leq \epsilon \right) \\
    &= 1 - \P \left( \bigcup_{z \in [C]} |\Tilde{q}^{(t)}_z - q_z| \geq \epsilon \right) \\
    &\geq 1 - \sum_{z=1}^{C} \P \left( |\Tilde{q}^{(t)}_z - q_z| \geq \epsilon \right) \,\,\, \text{(Union Bound)} \\
    &\geq 1 - \sum_{z=1}^{C} 2 \exp\left( -2n \epsilon^2 \right)
    \,\,\, \text{(using Lemma \ref{lemma:appendix-binomial-tail})} \nonumber \\
    &= 1 - 2C \exp\left( -2n \epsilon^2 \right). \label{eq:feedback-finite-sample-proof-1}
\end{align}
Now, we show that $\widehat{e}^{(t+1)}_{vw}$ 
is close to $\kappa^{(t+1)}_{vw}$.
We know that $\kappa^{(t+1)} \sim \frac{1}{n} \text{Multinomial}(n, \widehat{e}^{(t+1)})$.
Let the event $Q^{(t)} = \bigcap_{z \in [C]} |\Tilde{q}^{(t)}_z - q_z| \leq \epsilon$.
Using Lemma \ref{lemma:appendix-thm-proof-lemma-2}, we know that, under $Q^{(t)}$, for some constant $\eta_{vw}$, we have 
\begin{align*}
    & \left|\widehat{e}^{(t+1)}_v - \kappa^{(t+1)}_v \right| < \frac{\eta_{vw}}{C} \,\,\, \text{and} \,\,\, \left|\widehat{e}^{(t+1)}_w - \kappa^{(t+1)}_w \right| < \frac{\eta_{vw}}{C} \implies \widehat{e}^{(t+1)}_{vw} - \kappa^{(t+1)}_{vw} < \rho, \\
    & \text{where} \,\, 0 < \eta_{vw} < \frac{ \kappa^{(t+1)}_v (q_v - \epsilon) + \kappa^{(t+1)}_w (q_w - \epsilon)}{\max_{z \in [C]} \kappa^{(t+1)}_z (q_z + \epsilon)} \\
    \implies & \P \left( \widehat{e}^{(t+1)}_{vw} - \kappa^{(t+1)}_{vw} < \rho \, \big| \, Q^{(t)} \right) \geq \P\left(\left|\widehat{e}^{(t+1)}_v - \kappa^{(t+1)}_v \right| < \frac{\eta_{vw}}{C}, \,\,\, \left|\widehat{e}^{(t+1)}_w - \kappa^{(t+1)}_w \right| < \frac{\eta_{vw}}{C} \right).
\end{align*}
Intuitively, this is saying that $\widehat{e}^{(t+1)}_{vw} - \kappa^{(t+1)}_{vw} < \rho$ if $\kappa^{(t+1)}_v$ and $\kappa^{(t+1)}_w$ are close to $\widehat{e}^{(t+1)}_v$ and $\widehat{e}^{(t+1)}_w$, respectively. Thus, for $\eta = \min_{(v,w) \in \mathcal{S}} \eta_{vw}$, we have
\begin{align}
    \P \left( \bigcap_{(v,w) \in \mathcal{S}} \widehat{e}^{(t+1)}_{vw} - \kappa^{(t+1)}_{vw} \leq \rho \, \Big| \, Q^{(t)} \right) &\geq \P\left( \bigcap_{z \in [C]} |\widehat{e}^{(t+1)}_z - \kappa^{(t+1)}_z| \leq \frac{\eta}{C} \right) \nonumber \\
    &= 1 - \P\left( \bigcup_{z \in [C]} |\widehat{e}^{(t+1)}_z - \kappa^{(t+1)}_z| \geq \frac{\eta}{C} \right) \nonumber \\  
    &\geq 1 - \sum_{z=1}^{C} \P\left(|\widehat{e}^{(t+1)}_z - \kappa^{(t+1)}_z| > \frac{\eta}{C} \right) \,\,\, \text{(Union Bound)} \nonumber \\
    &\geq 1 - 2C \exp\left( -\frac{2n \eta^2}{C^2} \right) \,\,\, \text{(using Lemma \ref{lemma:appendix-binomial-tail})}. \label{eq:feedback-finite-sample-proof-2}
\end{align}
Combining Eq. \ref{eq:feedback-finite-sample-proof-1} and \ref{eq:feedback-finite-sample-proof-2}, 
we get the desired result as follows:
\begin{align*}
    \P \left( \bigcap_{(v,w) \in \mathcal{S}} A^{(t)}_{vw} \right) &= \P \left( \bigcap_{(v,w) \in \mathcal{S}} \kappa^{(t+1)}_{vw} > \kappa^{(t)}_{vw} \right) \\
    &\geq \P \left( \bigcap_{(v,w) \in \mathcal{S}} \left( \widehat{e}^{(t+1)}_{vw} - \kappa^{(t)}_{vw} \geq \rho, \,\,\, \widehat{e}^{(t+1)}_{vw} - \kappa^{(t+1)}_{vw} \leq \rho  \right) \right) \\
    &\geq \P\left( \bigcap_{z \in [C]} \left( |\widehat{e}^{(t+1)}_z - \kappa^{(t+1)}_z| \leq \frac{\eta}{C}, \,\,\, |\Tilde{q}^{(t)}_z - q_z| \leq \epsilon \right) \right) \\
    &= \P\left( \bigcap_{z \in [C]} |\widehat{e}^{(t+1)}_z - \kappa^{(t+1)}_z| \leq \frac{\eta}{C} \, \Big| \, Q^{(t)} \right) \P\left( \bigcap_{z \in [C]}  |\Tilde{q}^{(t)}_z - q_z| \leq \epsilon \right) \\
    &\geq \left( 1 - 2C \exp\left( -\frac{2n \eta^2}{C^2} \right) \right) \left( 1 - 2C \exp\left( -2n \epsilon^2 \right) \right) \\
    &\geq 1 - 2C \left[ \exp\left( -2n \epsilon^2 \right) + \exp\left( -\frac{2n \eta^2}{C^2} \right) \right] \\
    &\geq 1 - 2C \exp\left( - \mathcal{O}\left(\frac{n}{C^2}\right) \right).
\end{align*}
\end{proof}

\textbf{Proof of Theorem \ref{thm:worsen_rate}}

\begin{lemma}[Convergence in Probability]\label{lemma:appendix-convergence-in-prob}
Let $X_n, Y_n$, and $Z$ be random variables such that $X_n \overset{p}{\to} Y_n$ and $Y_n \overset{p}{\to} Z$, then $X_n \overset{p}{\to} Z$.
\begin{proof}
For any $\epsilon > 0$, we have
\begin{align*}
    \P(|X_n - Z| \geq \epsilon) &= \P(|X_n - Y_n + Y_n - Z| \geq \epsilon) \\
    &\leq \P(|X_n - Y_n| + |Y_n-Z| \geq \epsilon) \\
    &\leq \P\left(|X_n - Y_n| \geq \frac{\epsilon}{2}\right) + \P\left(|X_n - Y_n| \geq \frac{\epsilon}{2}\right) \\
    &= 0.
\end{align*}
Therefore, $X_n \overset{p}{\to} Z$.
\end{proof}
\end{lemma}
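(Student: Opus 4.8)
The plan is to reduce the claim to the standard ``split-$\epsilon$'' triangle-inequality argument for convergence in probability, taking care that here the target of the first convergence is itself a sequence $Y_n$ rather than a fixed limit. Recall that the hypothesis $X_n \overset{p}{\to} Y_n$ means $\P(|X_n - Y_n| \geq \epsilon) \to 0$ for every $\epsilon > 0$, and likewise $Y_n \overset{p}{\to} Z$ means $\P(|Y_n - Z| \geq \epsilon) \to 0$. The goal is to establish $\P(|X_n - Z| \geq \epsilon) \to 0$ for every fixed $\epsilon > 0$, which is exactly $X_n \overset{p}{\to} Z$.

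First I would fix an arbitrary $\epsilon > 0$ and apply the triangle inequality $|X_n - Z| \leq |X_n - Y_n| + |Y_n - Z|$ pointwise. The key observation is that if both $|X_n - Y_n| < \epsilon/2$ and $|Y_n - Z| < \epsilon/2$ hold, then $|X_n - Z| < \epsilon$; contrapositively, the event $\{|X_n - Z| \geq \epsilon\}$ forces at least one of the two summands to be at least $\epsilon/2$. This yields the pointwise event inclusion
\[
\{|X_n - Z| \geq \epsilon\} \subseteq \{|X_n - Y_n| \geq \epsilon/2\} \cup \{|Y_n - Z| \geq \epsilon/2\}.
\]

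Next I would apply monotonicity and subadditivity of the probability measure (the union bound) to the inclusion above, obtaining $\P(|X_n - Z| \geq \epsilon) \leq \P(|X_n - Y_n| \geq \epsilon/2) + \P(|Y_n - Z| \geq \epsilon/2)$. By the two hypotheses, each term on the right tends to $0$ as $n \to \infty$, so the left-hand side does as well; since $\epsilon$ was arbitrary, this is precisely the desired conclusion.

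There is no genuine obstacle here: the argument is a routine computation, and the only point needing a moment's care is the moving target $Y_n$. This causes no difficulty because we never compare $X_n$ to a fixed random variable directly — we only invoke $\P(|X_n - Y_n| \geq \epsilon/2) \to 0$, which is exactly the meaning of $X_n \overset{p}{\to} Y_n$, so the bound holds term-by-term in $n$ and the threshold split $\epsilon/2 + \epsilon/2 = \epsilon$ (any split would serve) closes the proof.
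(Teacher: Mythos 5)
Your proof is correct and follows essentially the same route as the paper's: the pointwise triangle-inequality decomposition $|X_n - Z| \leq |X_n - Y_n| + |Y_n - Z|$, the event inclusion at threshold $\epsilon/2$, a union bound, and passing to the limit using both hypotheses. If anything, your write-up is slightly more careful than the paper's, which contains a typo (its second probability term repeats $|X_n - Y_n|$ where $|Y_n - Z|$ is meant) and writes ``$=0$'' in the display where a statement about the limit as $n \to \infty$ is intended.
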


\begin{theorem*}
Suppose that $q_v > q_w$. As $n \rightarrow \infty$, $\kappa^{(t)}_{vw} \overset{p}{\rightarrow} 1 - \frac{1}{1 + c^t}$, where $c = \frac{q_v}{q_w}$.
\end{theorem*}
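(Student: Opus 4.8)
The plan is to reduce the random dynamics to a deterministic recursion in the $n\to\infty$ limit and then solve that recursion in closed form via an odds-ratio argument. Concretely, I would introduce the deterministic simplex sequence $\bar{\kappa}^{(t)}$ defined by the uniform initialization $\bar{\kappa}^{(0)}_z = 1/C$ for all $z \in [C]$ (which gives $\bar{\kappa}^{(0)}_{vw} = 1/2$, matching the $t=0$ case of the claim) and the update $\bar{\kappa}^{(t+1)}_v = \frac{\bar{\kappa}^{(t)}_v q_v}{\sum_{j \in [C]} \bar{\kappa}^{(t)}_j q_j}$, and then prove by induction on $t$ that $\kappa^{(t)}_v \overset{p}{\rightarrow} \bar{\kappa}^{(t)}_v$ for every $v$.

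For the inductive step I would chain the two sampling stages using the transitivity of convergence in probability (Lemma~\ref{lemma:appendix-convergence-in-prob}) together with the binomial tail bound (Lemma~\ref{lemma:appendix-binomial-tail}). First, conditioning on $\kappa^{(t)}$, the rescaled estimate $\widetilde{q}^{(t)}_v = \widehat{q}^{(t)}_v / \kappa^{(t)}_v \sim \frac{1}{n\kappa^{(t)}_v}\text{Binomial}(n\kappa^{(t)}_v, q_v)$ concentrates at $q_v$ by Lemma~\ref{lemma:appendix-binomial-tail} (its number of trials $n\kappa^{(t)}_v \to \infty$), so $\widehat{q}^{(t)}_v = \kappa^{(t)}_v \widetilde{q}^{(t)}_v \overset{p}{\rightarrow} \bar{\kappa}^{(t)}_v q_v$ by the inductive hypothesis and the transitivity lemma. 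Because the limiting normalizer $\sum_j \bar{\kappa}^{(t)}_j q_j$ is bounded away from $0$ (all $\bar{\kappa}^{(t)}_j, q_j > 0$), the continuous mapping theorem gives $\widehat{e}^{(t+1)}_v = \widehat{q}^{(t)}_v / S \overset{p}{\rightarrow} \bar{\kappa}^{(t+1)}_v$. Finally, the marginal count $n\kappa^{(t+1)}_v$ is $\text{Binomial}(n, \widehat{e}^{(t+1)}_v)$, so a second application of Lemma~\ref{lemma:appendix-binomial-tail} gives $\kappa^{(t+1)}_v - \widehat{e}^{(t+1)}_v \overset{p}{\rightarrow} 0$, and one more use of Lemma~\ref{lemma:appendix-convergence-in-prob} closes the induction with $\kappa^{(t+1)}_v \overset{p}{\rightarrow} \bar{\kappa}^{(t+1)}_v$.

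With the deterministic limit in hand, I would solve the recursion through the odds $r^{(t)} = \bar{\kappa}^{(t)}_v / \bar{\kappa}^{(t)}_w$. The normalizer cancels in the ratio, leaving $r^{(t+1)} = (q_v/q_w)\, r^{(t)} = c\, r^{(t)}$, so $r^{(t)} = c^t r^{(0)} = c^t$ using $\bar{\kappa}^{(0)}_v = \bar{\kappa}^{(0)}_w$. Hence $\bar{\kappa}^{(t)}_{vw} = r^{(t)}/(1 + r^{(t)}) = c^t/(1 + c^t) = 1 - \frac{1}{1 + c^t}$, and since $(\kappa^{(t)}_v, \kappa^{(t)}_w) \overset{p}{\rightarrow} (\bar{\kappa}^{(t)}_v, \bar{\kappa}^{(t)}_w)$ with positive limits, the continuous mapping theorem yields $\kappa^{(t)}_{vw} \overset{p}{\rightarrow} 1 - \frac{1}{1 + c^t}$, as claimed.

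The main obstacle I anticipate is the rigorous propagation of convergence in probability through the conditional sampling steps, since the Binomial parameters $\kappa^{(t)}_v$ and $\widehat{e}^{(t+1)}_v$ are themselves random and $n$-dependent. The clean way around this is to keep every limit deterministic and strictly positive, so that (i) the number of Binomial trials $n\kappa^{(t)}_v \to \infty$, making Lemma~\ref{lemma:appendix-binomial-tail} effective, and (ii) every normalization and ratio map is continuous at the limit. The conditional Hoeffding bounds can then be turned into unconditional statements by taking expectations over $\kappa^{(t)}$, since the bound is uniform in the parameter — exactly the conditioning argument already used in the proof of Theorem~\ref{thm:feedback-loop-finite-sample}.
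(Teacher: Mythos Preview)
Your proposal is correct and follows essentially the same approach as the paper: binomial/multinomial concentration (Lemma~\ref{lemma:appendix-binomial-tail}) plus the transitivity lemma (Lemma~\ref{lemma:appendix-convergence-in-prob}) to propagate convergence through the sampling stages, then the odds-ratio recursion $r^{(t+1)} = c\,r^{(t)}$ and the continuous mapping theorem to conclude. The only cosmetic difference is that you introduce a full deterministic simplex $\bar{\kappa}^{(t)}$ and prove coordinatewise convergence before passing to the ratio, whereas the paper works directly with the random ratio $\kappa^{(t)}_v/\kappa^{(t)}_w$ and its limit $c^t$; your presentation is slightly more careful about positivity of the limits and the conditional-to-unconditional step, but the substance is identical.
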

\begin{proof}
At time step $t$, the fraction of recommendations 
from each group is $\kappa_t$. 
From group $g_v$, the user cites papers 
according to probability $q_v$. 
Therefore, $\widehat{q}^{(t)}_v \overset{p}{\rightarrow} \kappa^{(t)}_v q_v$. 
And the normalized estimate is
$\widehat{e}^{(t+1)} = \frac{1}{S} [\kappa^{(t)}_1 q_1, \hdots, \kappa^{(t)}_C q_C]$, where $S = \sum_{z \in [C]} \kappa^{(t)}_z q_z$. 
Since $\kappa^{(t+1)} \sim \frac{1}{n} \text{Multinomial}(n, \widehat{e}^{(t+1)})$,
we have
\begin{align}
    \kappa^{(t+1)} &\overset{p}{\rightarrow} \widehat{e}^{(t+1)} \nonumber \\
    \frac{\kappa^{(t+1)}_v}{\kappa^{(t+1)}_w} &\overset{p}{\rightarrow} \frac{q_v \kappa^{(t)}_v}{q_w \kappa^{(t)}_w} \nonumber \\
    &= c \frac{ \kappa^{(t)}_v}{ \kappa^{(t)}_w}. \label{eq:appendix-feedback-asymptotic-proof}
\end{align}
We know that $\frac{ \kappa^{(1)}_v}{ \kappa^{(1)}_w} \overset{p}{\rightarrow} c$. 
Combining this with Eq. \ref{eq:appendix-feedback-asymptotic-proof} and using Lemma \ref{lemma:appendix-convergence-in-prob} recursively, we get
\begin{align*}
    \frac{\kappa^{(t)}_v}{\kappa^{(t)}_w} &\overset{p}{\rightarrow} c^t \\
    \therefore\,\, 1 - \frac{1}{1 + \frac{\kappa^{(t)}_v}{\kappa^{(t)}_w}} &\overset{p}{\rightarrow} 1 - \frac{1}{1 + c^t} \,\,\, \text{(Continuous mapping theorem)} \\
    \therefore\,\, \frac{\kappa^{(t)}_v}{\kappa^{(t)}_v + \kappa^{(t)}_w} &\overset{p}{\rightarrow} 1 - \frac{1}{1 + c^t} \\
    \therefore\,\, \kappa^{(t)}_{vw} &\overset{p}{\rightarrow} 1 - \frac{1}{1 + c^t}.
\end{align*}
\end{proof}

\section{Experiments} \label{appendix:experiments}

\begin{table}
\caption{The distribution of the FOS in the two real-world datasets.
}
\label{table:fos-distributions-in-dataset}
\begin{center}
\begin{sc}
\begin{tabular}{lcc}
\toprule
FOS & Dataset 1 & Dataset 2 \\
\midrule
Art & $0.03\%$ & $0.08\%$ \\
Biology & $26.48\%$ & $23.43\%$ \\
Business & $0.38\%$ & $0.10\%$ \\
Chemistry & $10.11\%$ & $15.67\%$ \\
Computer Science & $9.40\%$ & $3.42\%$ \\
Economics & $2.51\%$ & $0.03\%$ \\
Engineering & $6.24\%$ & $17.98\%$ \\
Environmental Science & $0.13\%$ & $0.03\%$ \\
Geography & $0.48\%$ & $0.40\%$ \\
Geology & $1.45\%$ & $0.46\%$ \\
History & $0.04\%$ & $0.03\%$ \\
Materials Science & $3.06\%$ & $19.09\%$ \\
Mathematics & $7.17\%$ & $1.03\%$ \\
Medicine & $21.28\%$ & $13.90\%$ \\
Philosophy & $0.03\%$ & $0.01\%$ \\
Physics & $2.99\%$ & $3.14\%$ \\
Political Science & $0.18\%$ & $0.01\%$ \\
Psychology & $7.49\%$ & $1.14\%$ \\
Sociology & $0.55\%$ & $0.05\%$ \\
\bottomrule
\end{tabular}
\end{sc}
\end{center}
\end{table}

Table \ref{table:fos-distributions-in-dataset} provides the distribution of the various FOS in both the datasets used for the real-world dataset experiments (Section~\ref{sec:real-world-datasets}). We can see that the FOS distributions are different. For example, Dataset 2 has substantially more \textit{Materials Science} and \textit{Engineering} papers.

\end{document}